%% file: main.tex
\documentclass[11pt]{article} %
\usepackage[T1]{fontenc}
\usepackage{lmodern}
\usepackage{amsmath,amssymb,amsthm,bm}
\usepackage{natbib}
\usepackage[margin=1in]{geometry}
\usepackage{subcaption}

\usepackage{hyperref,array}
\usepackage{url}
\usepackage{xcolor}
\usepackage{booktabs}
\usepackage{graphicx}
\usepackage{enumitem}
\usepackage{xargs}
\usepackage{caption}
\usepackage{booktabs}

\hypersetup{colorlinks=true,linkcolor=black,citecolor=black,urlcolor=blue}

\newcommand{\luca}[1]{}
\newcommand{\tyler}[1]{}
\newcommand{\umut}[1]{}
\newcommand{\alain}[1]{}

\newtheorem{theorem}{Theorem}
\newtheorem{lemma}{Lemma}

\newtheorem{proposition}{Proposition}

\newtheorem{corollary}{Corollary}
\newtheorem{definition}{Definition}

\newcommand{\supp}{\operatorname{supp}}
\newcommand{\Unif}{\mathrm{Unif}}
\newcommand{\data}{\mu_{\mathrm{data}}}
\newcommand{\kll}{\mathrm{KL}}
\newcommand{\keff}{k_{\mathrm{eff}}}
\newcommand{\KL}{\mathrm{KL}}

\newcommand{\cX}{\mathcal X}

\newcommand{\cP}{\mathcal P}
\newcommand{\cE}{\mathcal E}
\newcommand{\mudata}{\mu_{\mathrm{data}}}
\newcommand{\muhat}{\hat\mu^{(\gamma)}_0}
\newcommand{\E}{\mathbb E}
\newcommand{\Prob}{\mathbb P}
\newcommand{\koeff}{k^{o}_{\mathrm{eff}}}

\let\cedilla\c

\input{def}

\title{Discrete Diffusion Models are\\ Minimax Optimal Under Sparsity}
\author{
\normalsize Luca Blunda$^{1,}$\footnotemark[1] \quad Tyler Farghly$^1$ \quad Umut \cedilla{S}im\cedilla{s}ekli$^{1,}$\footnotemark[2] \quad Alain Durmus$^{2,}$\footnotemark[2]\\
\normalsize $^{1}$INRIA - École Normale Supérieure - PSL Research University -  CNRS, France\\
\normalsize $^{2}$École Polytechnique - CMAP - IP Paris, France
}
\date{}

\begin{document}

\maketitle

\footnotetext[1]{Correspondence to \texttt{luca.blunda@inria.fr}.}
\footnotetext[2]{Authors contributed equally to this work.}

\begin{abstract}
    Discrete diffusion models have emerged as a leading alternative to autoregressive models for generating text and other categorical data, yet their theoretical foundations remain less developed than those of their continuous counterparts. For continuous diffusion models, a prominent line of theory suggests that these models can adapt to low-dimensional structure in the data, thereby improving statistical efficiency and generalization. In this work, we investigate the discrete analogue of this phenomenon by studying the statistical rates of discrete diffusion models under sparsity assumptions. By analyzing a nonparametric estimator derived from a regularized score-entropy objective, we establish minimax-optimal rates in the sparse regime. The proposed regularization can be viewed as an adaptation of label smoothing techniques introduced for classification, and does not require the knowledge of the sparsity parameter. This shows that discrete diffusion models can adapt intrinsically to sparsity.
\end{abstract}

\input{sections/introduction}
\input{sections/preliminaries}
\input{sections/statistical_rate_first_part}

\input{sections/statistical_rates}

\input{sections/experiments}

\section{Conclusion}
In this work, we analyze the statistical efficiency of discrete diffusion models under sparsity. For the masked setting, we develop a regularized nonparametric estimator of the score function derived by treating it as a classifier and applying label smoothing. By regularizing the score in this fashion, we establish upper bounds on the KL divergence which achieve minimax optimal rates in the class of sparse distributions. We show that the same rates hold for uniform diffusion: the regularized score is again the exact score of \eqref{eq:intro_smoothed_estimator}, but the smoothing it induces has no interpretation as label smoothing of a classifier. Since the minimax lower bound is the same for both kernels, our results rule out a separation between masked and uniform diffusion at the level of the minimax risk over sparse distributions. Whether the gap observed in practice comes from approximation or optimization of the score, is a question we leave open. While our primary focus is providing theoretical justification for the success discrete diffusion, our work also suggests two practical methods of regularization for masked diffusions---label smoothing of the score and regularization via the uniform score-entropy loss---whose rigorous empirical evaluation we leave for future work.

\subsubsection*{Acknowledgments}
U.S. is partially supported by the French government under the management of Agence Nationale de la Recherche as
part of the “Investissements d’avenir” program, reference ANR-19-P3IA-0001 (PRAIRIE 3IA Institute). L.B., T.F.,
and U.S. are supported by the European Research Council Starting Grant DYNASTY – 101039676. A.D. is supported
by the France 2030 program with the reference ANR-25-PEIA-0001 (THEOREM project). A.D. is funded by the
European Union (ERC-2022-SYG-OCEAN-101071601). Views and opinions expressed are however those of the
author(s) only and do not necessarily reflect those of the European Union or the European Research Council Executive
Agency. Neither the European Union nor the granting authority can be held responsible for them. A.D. is supported by Paris and Agence Nationale de la Recherche (Grant 11-LABX-0047). This work received government funding
administered by the National Research Agency (ANR) under the France 2030 program ”Hi! PARIS”, grant number
ANR-23-IACL-0005.

\bibliography{biblio}
\bibliographystyle{iclr2027_conference}

\newpage
\appendix
\input{sections/appendix}
\input{sections/appendix_experiments}

\end{document}

%% file: def.tex
\newcommand{\dd}{\mathrm{d}}

\newcommandx{\functionspace}[2][1=+]{\mathbb{F}_{#1}(#2)}

\newcommandx{\VarDeux}[3][3=]{\operatorname{Var}^{#3}_{#1}\left\{#2 \right\}}

\newcommand{\LeftEqNo}{\let\veqno\@@leqno}

\newcommand{\N}{\ensuremath{\mathbb{N}}}

\newcommandx{\Vnorm}[2][1=V]{\| #2 \|_{#1}}
\newcommandx{\VnormEq}[2][1=V]{\left\| #2 \right\|_{#1}}
\newcommandx{\norm}[2][1=]{\ifthenelse{\equal{#1}{}}{\left\Vert #2 \right\Vert}{\left\Vert #2 \right\Vert^{#1}}}
\newcommandx{\normLigne}[2][1=]{\ifthenelse{\equal{#1}{}}{\Vert #2 \Vert}{\Vert #2\Vert^{#1}}}

\newcommandx\probaMarkovTilde[2][2=]
{\ifthenelse{\equal{#2}{}}{{\widetilde{\mathbb{P}}_{#1}}}{\widetilde{\mathbb{P}}_{#1}\left[ #2\right]}}

\def\eqsp{\;}

\newcommandx{\weight}[2][2=n]{\omega_{#1,#2}^N}

\newcommandx\sequence[3][2=,3=]
{\ifthenelse{\equal{#3}{}}{\ensuremath{\{ #1_{#2}\}}}{\ensuremath{\{ #1_{#2}, \eqsp #2 \in #3 \}}}}
\newcommandx\process[3][2=,3=]
{\ifthenelse{\equal{#3}{}}{\ensuremath{( #1_{#2})}}{\ensuremath{( #1_{#2})_{ #2 \in #3}}}}
\newcommandx\processt[2][2=]
{\ifthenelse{\equal{#2}{}}{\ensuremath{( #1_{t})}}{\ensuremath{( #1_{t})_{ t \in #2}}}}
\newcommandx\sequenceD[3][2=,3=]
{\ifthenelse{\equal{#3}{}}{\ensuremath{\{ #1_{#2}\}}}{\ensuremath{( #1)_{ #2 \in #3} }}}

\newcommandx{\sequencen}[2][2=n\in\N]{\ensuremath{\{ #1_n, \eqsp #2 \}}}
\newcommandx\sequenceDouble[4][3=,4=]
{\ifthenelse{\equal{#3}{}}{\ensuremath{\{ (#1_{#3},#2_{#3}) \}}}{\ensuremath{\{  (#1_{#3},#2_{#3}), \eqsp #3 \in #4 \}}}}
\newcommandx{\sequencenDouble}[3][3=n\in\N]{\ensuremath{\{ (#1_{n},#2_{n}), \eqsp #3 \}}}

\newcommand{\opnorm}[1]{{\left\vert\kern-0.25ex\left\vert\kern-0.25ex\left\vert #1
    \right\vert\kern-0.25ex\right\vert\kern-0.25ex\right\vert}}

\newcommandx{\CPE}[3][1=]{{\mathbb E}_{#1}\left[\left. #2 \, \middle \vert \, #3 \right. \right]} %

\newcommandx{\CPELigne}[3][1=]{{\mathbb E}_{#1}[\left. #2 \,  \vert \, #3 \right. ]} %

\newcommandx{\CPVar}[3][1=]{\mathrm{Var}^{#3}_{#1}\left\{ #2 \right\}}
\newcommand{\CPP}[3][]
{\ifthenelse{\equal{#1}{}}{{\mathbb P}\left(\left. #2 \, \right| #3 \right)}{{\mathbb P}_{#1}\left(\left. #2 \, \right | #3 \right)}}

\newcommandx{\osc}[2][1=]{\mathrm{osc}_{#1}(#2)}

\def\c{c}

\newcommand\coupling[2]{\Gamma(\mu,\nu)}
\def\supp{\mathrm{supp}}

\renewcommand{\geq}{\geqslant}
\renewcommand{\leq}{\leqslant}

\newcommandx{\wasserstein}[3][1=\distance,3=]{\mathscr{W}_{#1}^{#3}\left(#2\right)}
\newcommandx{\wassersteinLigne}[3][1=\distance,3=]{\mathscr{W}_{#1}^{#3}(#2)}
\newcommandx{\wassersteinD}[1][1=\distance]{\mathscr{W}_{#1}}
\newcommandx{\wassersteinDLigne}[1][1=\distance]{\mathscr{W}_{#1}}

\def\pre_Stif{\mathcal{V}}

\def\bfo{\mathbf{o}}
\def\Vlyap{\mathfrak{V}}

\newcommandx{\Voi}[1][1=i]{\Vlyap_{\bfo,#1}}
\newcommandx{\Vlyapc}[2][1=\bfo,2=i]{\Vlyap_{#1,#2}}
\newcommandx{\tVlyapc}[2][1=\bfo,2=i]{\tilde{\Vlyap}_{#1,#2}}

\def\Wlyap{\mathfrak{W}}

\newcommandx{\Woi}[1][1=i]{\Wlyap_{\bfo,#1}}
\newcommandx{\Wlyapc}[2][1=\bfo,2=i]{\Wlyap_{#1,#2}}

\def\KL{\mathrm{KL}}

%% file: sections/introduction.tex
\section{Introduction}

Discrete diffusion models learn distributions on categorical data by reversing a Markov noising
process \citep{austin2021structured,lou2024discrete}. In language modeling, they provide an alternative
to autoregressive sequence generation \citep{sahoo2024simple}. We study their finite-sample estimation
error in Kullback--Leibler (KL) divergence under two standard noising mechanisms: \emph{uniform
corruption}, which replaces tokens with uniformly sampled tokens, and \emph{masked corruption}, which
replaces them with an absorbing mask token. Our objective is to determine whether these mechanisms
attain the optimal estimation rates when the data distribution has an unknown sparse support.

For continuous diffusion models, \citet{oko2023diffusion} establish nearly minimax-optimal
distribution estimation rates under smoothness assumptions. Related results show how score
estimation and distribution recovery can exploit low-dimensional subspaces \citep{pmlr-v202-chen23o}
and manifold structure \citep{Azangulov2024-uf}. As a discrete counterpart, we consider here support sparsity as
structural assumption. For sequences of length $d$ over a vocabulary of size $K$, the clean state
space $\mathcal X_0$ has $M:=K^d$ elements. An $l$-sparse distribution assigns positive probability
to at most $l$ sequences. We impose no geometric condition on these sequences, no independence
assumption on their coordinates, and no lower bound on their positive probabilities. The relevant
question is therefore whether diffusion-based estimation can adapt to $l$, rather than incur a
statistical cost proportional to $M$.

Our starting point is to regularize the empirical score-entropy objective by its counterpart for
uniformly distributed clean data. The resulting score is induced by the smoothed empirical
distribution
\begin{equation}\label{eq:intro_smoothed_estimator}
    \hat\mu_0^{(\gamma)}
    =\frac{n}{n+\gamma}\,\mu_0^{(n)}
    +\frac{\gamma}{n+\gamma}\,\mathrm{Unif}(\mathcal X_0),
    \qquad \gamma>0,
\end{equation}
where $\mu_0^{(n)}$ is the empirical law of the $n$ training sequences. This distribution has full
support, so it avoids the infinite forward KL loss that occurs when an estimator assigns zero
probability to an unobserved sequence with positive data probability. The estimator in
\eqref{eq:intro_smoothed_estimator} does not depend on the corruption kernel; its forward marginals
and associated scores do.

Smoothing discrete frequency estimates is classical. In particular, \citet{Mourtada_2026} analyzes an
adaptive estimator of the form \eqref{eq:intro_smoothed_estimator} and establishes sparse KL bounds
and minimax lower bounds. Our contribution is to connect this estimation problem to regularized
diffusion training and reverse sampling, and to determine how the smoothing level controls the
resulting statistical error. For masked diffusion, this connection uses the conditional-distribution
representation of the score established by \citet{ou2024radd}.

Our contributions can be summarized as follows:
\begin{enumerate}[label=$\bullet$]
    \item
    In the setting of masked diffusion, we derive a method for regularizing scores via a context-dependent label smoothing. We show that this smoothing is equivalent to regularizing with the score loss of the uniform distribution, which extends it beyond masked diffusion. The optimal conditional predictions that minimize the regularized loss produce the proper score induced by \eqref{eq:intro_smoothed_estimator}.

    \item 
    We then establish expected and high-probability bounds for $\kll(\data\mid\hat\mu_0^{(\gamma)})$ in
    terms of the effective support and effective missing-support quantities of the data distribution, following the approach of \citet{Mourtada_2026}. The bounds allow for data-dependent choices of $\gamma$, and the high-probability event holds simultaneously for every $\gamma\in[1,\min\{n,M\}]$. They therefore describe the effect of regularization beyond exact support sparsity. \tyler{I don't understand this last sentence. Also next sentence not so interesting given next point} We also identify ranges of smoothing levels that attain the sparse minimax rates.
    \luca{do we replace from "They therefore describe..." with " Since sequences with probability much smaller than $1/n$ contribute little to the effective quantities, the bounds remain meaningful beyond exact sparsity, for distributions in which only a small number of sequences carry non-negligible probability."?}

    \item 
    We transfer these bounds to the terminal law $\hat\nu$ of the reverse sampler. For the masked
    diffusion initialized at the fully masked state, we prove the exact identity
    $\hat\nu=\hat\mu_0^{(\gamma)}$. For uniform diffusion, we control the finite-horizon
    initialization error. With the adaptive choice $\gamma=D_n$, where $D_n$ is the number of distinct training
    sequences, both noising procedures attain a sparse minimax rate of order $l\log(eM/l)/n$ in expectation and high probability, without requiring knowledge of $l$ or the support. Instead of depending on the size of the state space, which is exponentially large with respect to sequence length, the rate is instead governed by the sparsity $l$ which can be far smaller in practice.
\end{enumerate}

\paragraph{Relation to previous work.}
Recent analyses of discrete diffusion address several distinct statistical problems.
\citet{srikanth2026discretestatediffusionmodels} derive sample-complexity guarantees for neural
score estimators, accounting for statistical, approximation, optimization, and clipping errors.
\citet{Wakasugi2025-us} obtain state-size-independent bounds by exploiting smooth embeddings and
graph structure; their assumptions differ from support sparsity. \citet{cho2026} derive nearly matching minimax
bounds for score estimation under the score-entropy loss and apply their analysis to both uniform
and masked diffusion. Their bounds depend on the cardinality of the diffusion state space and on
density-ratio bounds; they do not quantify adaptation to the support size of the clean distribution.

Closer to our setting, \citet{zhang2026} derive structure-dependent generalization bounds and
compare the effects of sparsity under masked and uniform corruption. \tyler{Do we not want to mention their rates here? They are worse than our's}\luca{they are different, so maybe it is better not} Such comparisons raise a
further question: does the choice of corruption mechanism change the \emph{minimax} risk over sparse
data distributions? A difference between upper bounds for particular score estimators does not by
itself establish a separation between the optimal risks. We address this question by analyzing a
regularized score estimator and identifying the distribution generated by its reverse dynamics.

\paragraph{Notation.}
For a finite nonempty set $\mathcal I$, let $|\mathcal I|$ denote its cardinality, $\mathcal P(\mathcal I)$ denote the set of probability measures on $\mathcal{I}$, $\mathrm{Unif}(\mathcal I) \in \mathcal P(\mathcal I)$ be its uniform law, $\delta_x \in P(\mathcal I)$ be the point mass at $x \in \mathcal I$.
For $x\in\mathcal X$ and $A\subseteq[d]$, write $x_A:=(x_j)_{j\in A}$ and
$x_{-i}:=x_{[d]\setminus\{i\}}$. For $\mu,\nu\in\mathcal P(\mathcal I)$,
define $ \kll(\mu\mid\nu)
    :=\sum_{x:\mu(x)>0}\mu(x)\log (\mu(x)/\nu(x))$,
when $\mu\ll\nu$, and set $\kll(\mu\mid\nu):=+\infty$ otherwise. %
Given any positive integer $K$, we use the shorthand $[K] =  \{1, ..., K\}$ and we use the notation $a\lesssim b$ to mean $a\le Cb$ for a universal constant $C$.

%% file: sections/preliminaries.tex
\section{Preliminaries}\label{sec:preliminaries}

We consider the problem of density estimation for a target distribution $\data\in\mathcal{P}(\mathcal{X}_0)$ on the finite sequence space, $\mathcal X_0:=\mathcal V^d$, $d \geq 1$. Without loss of generality, we take $\mathcal V:=[K]$ with $K \geq 2$ leading to a cardinality $M := |\mathcal{X}_0| = K^d$ that grows exponentially with $d$.

\paragraph{Discrete diffusion models.}
The goal of discrete diffusion models is to learn a generative model that approximates $\data$ with access to only a finite set of samples $S=(X_0^{(1)},\ldots,X_0^{(n)})\sim\data^{\otimes n}$. This is done by simulating the reversal of a Markov process that starts from the data distribution and converges to a tractable limiting distribution. For convenience of exposition, we define this Markov process on a larger finite set $\mathcal{X} \supseteq \mathcal{X}_0$ and we identify $\data$ with its extension by zero to $\mathcal X$.

The forward process is defined as a continuous-time Markov chain $(X_t)_{t\in[0,T]}$ on $\mathcal{X}$ with time horizon $T > 0$ and rate matrix $(Q_t)_{t\in[0,T]}$ (throughout this work we use the column-generator convention---details in Appendix \ref{appendix_matix_design}). We assume that $Q_t=\sigma_t Q_0$ where $Q_0$ is a fixed conservative generator and $\sigma:[0,\infty)\to[0,\infty)$ is integrable with $
\bar\sigma_t:=\int_0^t \sigma_u\,\mathrm{d}u$ satisfying $0 < \bar\sigma_t < \infty$ for all $t>0$, and $\bar\sigma_t\to\infty$ as $t\to\infty$.
For $0\leq s\leq t \leq T$, $x,y \in\mathcal{X}$, we define the transition matrix $P_{s,t}(x,y):=\mathbb{P}(X_t=x\mid X_s=y)$ which  satisfies,
\begin{equation}\label{eq:transition_kernel_forward}
P_{s,t}=\exp\big((\bar\sigma_t-\bar\sigma_s)Q_0\big)\eqsp.
\end{equation}
In particular, if $\mu_t(x):=\mathbb{P}(X_t=x)$, then $\mu_t$ solves the forward Kolmogorov equation,
\begin{equation}
    \label{definition_forward_process}
\partial_t\mu_t(x)= Q_t \mu_t(x) = \sum_{y\in\mathcal{X}}Q_t(x,y)\mu_t(y) \eqsp.
\end{equation}
We initialize the process at the data distribution $\mu_0 = \data$ so that, by definition, $\mu_t=P_{0,t}\data.$

\paragraph{The reverse process.} The time-reversed process, which we define by $\widetilde X_t:=X_{T-t}$, $t\in[0,T]$, is also a continuous-time Markov chain whose rate matrix $\widetilde Q_t(x, y)$ satisfies the identity, $\mu_{T-t}(y)\widetilde Q_t(x,y) = \mu_{T-t}(x)Q_{T-t}(y,x)$ whenever $x\neq y$, $\mu_{T-t}(y)>0$ (see, e.g., \cite[Theorem~2.8]{conforti2022timereversalmarkovprocesses}).
This motivates the definition of the discrete score,
\begin{equation}\label{definition_true_score}
s^\star(x,y,t):=
\begin{cases}
\displaystyle \frac{\mu_t(x)}{\mu_t(y)},&\mu_t(y)>0,\\[1ex]
0,&\mu_t(y)=0,
\end{cases}
\end{equation}
and hence, for $x\neq y$, the reverse generator can be written as $\widetilde Q_t(x,y)=s^\star(x,y,T-t)Q_{T-t}(y,x)$
with diagonal entries chosen so that the columns sum to zero. Equivalently, the reverse marginal law $\widetilde\mu_t$ solves
\begin{equation}\label{eq:ideal_chain}
\partial_t\widetilde\mu_t=\widetilde Q_t\widetilde\mu_t,
\qquad
\widetilde\mu_0=\mu_T.
\end{equation}
With this initialization, we have that $\widetilde\mu_t=\mu_{T-t}$, and thus, exact sampling from the reverse process would recover the data distribution, $\widetilde\mu_T=\data$.

In practice, the ratios $s^\star$ are unknown and must be estimated by some estimator score $s(x,y,t)$. %
We assume that this score is integrable in $t$ and positive in the triple $(x,y,t)$ where $Q_t(y,x)>0.$ We then define the associated generator $\hat{Q}_t(x,y)$ as $\tilde{Q}_t(x,y)$, but with $s(x,y,T-t)$ in place of $s^\star(x,y,T-t).$
We also do not have access to the initial distribution $\widetilde{\mu}_0 = \mu_T$ so we initialize at some tractable distribution $\mu_\infty\in \mathcal{P}(\mathcal{X})$ which is usually set to be a stationary distribution of the forward process. We define the sampling process on $[0,T)$ as the one which evolves according to 
\begin{equation}\label{sampling_kolmogorov_equation}
\partial_t \nu_t = \hat Q_t\nu_t, \qquad \nu_0=\mu_\infty. 
\end{equation}

We will only use two choices of $Q_0$, corresponding to the uniform and masked noising processes, though we expect that our results may be adapted to other settings; see Appendix~\ref{appendix_matix_design} for details.
 For the uniform process, we set $\mathcal{X} = \mathcal{X}_0$ and consider a rate matrix with identical positive values on the off-diagonal, leading to a process that converges to the uniform measure. Therefore, in this case we set $\mu_\infty=\mathrm{Unif}(\mathcal{X}).$ For the absorbing or masked process, we extend the space $\mathcal X=(\mathcal V\cup\{\mathfrak m\})^d$, where $\mathfrak m\notin\mathcal V$ is a mask token that the forward process converges to. In this case, we set $\mu_\infty=\delta_{(\mathfrak{m},\ldots,\mathfrak{m})}.$

\paragraph{Score-entropy loss.} Following \citep{lou2024discrete}, in order to find the estimator $s$, we utilize the score-entropy loss, which is defined for the weights $w_t(x,y)\geq 0$ by,
\begin{equation}\label{definition_score_entropy_loss}
\mathcal{L}_t(s):=\mathbb{E}_{y\sim \mu_t}\Big[\sum_{x \neq y}w_t(x,y)\Big(s(x,y,t)-s^\star(x,y,t)\log s(x,y,t) + \phi\big(s^\star(x,y,t)\big)\Big)\Big],
\end{equation}
where $\phi(x):=x(\log x -1),$ with $\phi(0)=0$.
Throughout, we take $w_t(x,y)=Q_t(y,x),$ following the literature, for example see \citep{zhang2026}.
In practice, we consider its empirical counterpart,
\begin{equation}\label{eq:loss_empirical}
\hat{\mathcal{L}}_t(s):=\frac{1}{n}\sum_{i=1}^n\mathbb{E}_{y\sim P_{0,t}(\cdot,X_0^{(i)})}\Big[\sum_{x \neq y} Q_t(y,x)\Big(s(x,y,t)- \frac{P_{0,t}(x,X_0^{(i)})}{P_{0,t}(y,X_0^{(i)})}\log s(x,y,t)\Big)\Big],
\end{equation}
which is the score-entropy loss for the empirical law $\mu_0^{(n)}:=n^{-1}\sum_{j=1}^n\delta_{X_0^{(j)}}$.
 
\paragraph{$l-$sparsity and lower bounds.} In this work, we consider the class of $l$-sparse distributions. In particular, for a fixed integer $l$ such that $1\leq l\leq |\mathcal{X}_0|,$ we define the class by,
\begin{equation}\label{definition_class_l_sparse_distributions}
\mathcal{P}_l(\mathcal{X}_0):=\{\mu\in\mathcal{P}(\mathcal{X}_0):|\supp(\mu)|\leq l\}.
\end{equation}
In this paper, we are interested in showing minimax rates for sparse density estimation and so we recall some lower bounds for the problem. This problem has been widely studied  \citep{han2015lowerbound,Kamath15lowebounds}, though we focus on the following recent result of \cite{Mourtada_2026}. Whenever $n, |\mathcal{X}| \geq 110$, $\delta \in (e^{-n}, e^{-2})$, $2 \leq l \leq \min\{n, |\mathcal{X}|/55\}$, for any estimator $\Phi : \mathcal{X}^n \to \mathcal{P}(\mathcal{X})$, there exists a distribution $P \in \mathcal{P}_l(\mathcal{X})$ such that,
\begin{equation}
    \label{eq:lower-bound-minimax}
    \mathbb{P}_P \bigg( \kll(P|\hat P_{n}) \geq \frac{l \log(e|\mathcal{X}|/l) + \log (|\mathcal{X}|) \log (1/\delta)}{320 \,n} \bigg) \geq \delta \, ,
\end{equation}
where $\hat P_n = \Phi (X_1, \dots, X_n)$. From this, we immediately obtain bounds in expectation,
\begin{equation*}
    \E[\kll(P|\hat P_{n})] \geq \frac{l \log(e|\mathcal{X}|/l)}{320 e^2 n}.
\end{equation*}

Further results from  \citep{Mourtada_2026} can be found in Appendix~\ref{appendix_useful_results}.
\tyler{Maybe prove expectation bound somewhere}

%% file: sections/statistical_rate_first_part.tex
\section{Statistical rates for masked diffusion}\label{sec:statitical_part}
The goal of our analysis is to derive bounds for $\kll(\data\mid \hat{\nu})$ where we define $\hat\nu:=\lim_{u\to T}\nu_u,$ and $\nu_u$ is the solution of \eqref{sampling_kolmogorov_equation}. In order to do this, we derive a particular choice of estimator $s(x,y,t)$, then we obtain upper bounds on $\kll(\data\mid \hat{\nu})$ as a function of a tuning parameter $\gamma$ which we then compare with minimax lower bounds for the sparse distribution class. With this, we find two intervals for $\gamma$, one for the bound in expectation and one for the bound in high-probability, for which we reach the minimax optimal rates for the sparse distribution class.

Throughout this section, we denote by $S$ our i.i.d. dataset, and denote by $\mathbb{E}_S$ and $\mathbb{P}_S$ the expectation and the probability with respect to the dataset. In the following, we begin with a problem of minimizing the empirical risk, which is a standard object of study in the training of discrete diffusion models. However, in the present setting, we cannot directly use the minimizer as it produces a score with infinite KL divergence. Thus, we look for non-parametric methods for regularizing the empirical problem.

\subsection{Label smoothing for discrete scores}\label{sec:entropy}

We first motivate our approach to regularizing the score by relating the score matching problem to classification. We define a \emph{context} to be a triple $c=(A,v,i)$ where $A\subseteq[d]$, $v\in\mathcal V^A$ and $i\in[d]\setminus A$, and for a partially masked state $y$ with masked position $i$ we write $c(y,i):=(A(y),y_{A(y)},i)$. In \citet{ou2024radd}, it was first shown that the score function $s^\star$ for the masked diffusion admits the factorization,
\begin{equation}\label{eq:ou_factorization}
    s^\star(x,y,t)=\kappa_t\,\psi^\star_{c(y, i)}(x_i), \qquad \kappa_t:=\frac{\alpha_t}{1-\alpha_t},
\end{equation}
where $\psi^\star_c(a):=\data(X_i=a\mid X_A=v)$ is the conditional law of the clean token  (see Proposition~\ref{prop:ou}). Thus, the time dependence of the score can be obtained in closed-form, and the only quantity to learn is $\psi^\star_c$, which can be viewed as a \emph{classifier} that predicts the token at position $i$ from the unmasked tokens $v$. We thus parametrize the score estimator as $s(x, y, t)=\kappa_t\,\psi_{c(y, i)}(x_i)$ with $\psi_{c(y, i)} \in\mathcal P(\mathcal V)$ which is also a parameterization used in practice, where the score network outputs a softmax over $\mathcal V$ at each masked position.

To take this analogue further, we can also describe the score-entropy loss via the cross-entropy. Let $n_c :=|\{k: X^{(k)}_{0,A}=v\}|$, and let $\psi^{(n)}_c$ be the conditional distribution for the empirical data distribution, $\psi^{(n)}_c(a):=\mu_0^{(n)}(X_i=a\mid X_A=v)$. Plugging \eqref{eq:ou_factorization} into \eqref{eq:loss_empirical}, it can be easily shown that
\begin{equation}\label{eq:loss_classifier}
    \int_0^T\hat{\mathcal L}_t(s)\,dt=\sum_c \omega_{|A|} \,\frac{n_c}{n}\,H\big(\psi^{(n)}_c, \psi_c \big) + \mathrm{const},
\end{equation}
where $H(p,q):=-\sum_a p(a)\log q(a)$ is the cross-entropy and the weights $\omega_k>0$ depend only on $k$ and $T$. The minimizer of \eqref{eq:loss_classifier} is the empirical conditional $\psi^{(n)}_c$ on every observed context, while contexts with $n_c=0$ are left unconstrained.
\tyler{TODO: include explicit proof somewhere}

By viewing $\psi_c$ as a classifier trained with the cross-entropy loss, a natural choice of regularization is label smoothing \citep{szegedy2016rethinking}. In classification, label smoothing adds the penalty $H(\Unif, \cdot)$ to the loss to reduce confidence in predictions and prevent overfitting. We thus add a label smoothing penalty to each classifier with a context-dependent parameter $\varepsilon_c \in[0, 1]$:
\begin{equation}\label{eq:regularized_objective}
    (1 - \varepsilon_c) H\big(\psi^{(n)}_c, \psi_c \big) + \varepsilon_c  H\big(\Unif(\mathcal V), \psi_c \big).
\end{equation}
By linearity of the cross-entropy in the first argument, we have that this is minimized by, $\hat\psi_c = (1-\varepsilon_c)\,\psi^{(n)}_c+\varepsilon_c\,\Unif(\mathcal V)$.

Note that for a generic choice of $\varepsilon_c$, it is not clear that the score function resulting from label smoothing will necessarily produce a \emph{proper} score: that is, a score $s$ that relates to a diffusion for a probability measure $\mu_s$. In the following lemma, we show that there is a class of such choices of $\varepsilon_c$ and that this class produces a smoothing of the empirical measure.

\begin{lemma}\label{lem:proper_smoothing}
Suppose that for every $i\in[d]$, there exists $b_i\in\mathcal V$ such that $\data(X_i=b_i)=0$, and that $\varepsilon_c=1$ for every context $c=(A,v,i)$ with $n_c=0$. Then, the score function $s(x,y,t):=\kappa_t\hat\psi_{c(y,i)}(x_i)$ is proper if and only if there exists $\gamma\in(0,\infty]$, not depending on $c$, such that
\begin{equation}\label{eq:eps}
\varepsilon_c=\frac{\gamma K^{-|A|}}{\gamma K^{-|A|}+n_c},\qquad\text{for every context }c=(A,v,i),
\end{equation}
where $\varepsilon_c:=1$ if $\gamma=\infty$. In particular, $s$ is the proper score induced by the terminal law, $\hat\mu_0^{(\gamma)}$.
\end{lemma}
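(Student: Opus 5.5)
The plan is to reduce properness to a consistency statement about conditional laws. By the factorization \eqref{eq:ou_factorization} (Proposition~\ref{prop:ou}) applied to an arbitrary clean law $\mu\in\mathcal P(\mathcal X_0)$, a score of the form $\kappa_t\psi_{c(y,i)}(x_i)$ is the score of the masked diffusion started from $\mu$ exactly when $\psi_c(a)=\mu(X_i=a\mid X_A=v)$ for every context $c=(A,v,i)$. I would first check that the candidate $\mu$ must have full support. This gives a meaningful condition on every context, including unobserved ones, and it holds because all the $\hat\psi_c$ will turn out to be strictly positive. So $s$ is proper if and only if there is a $\mu$ with $\mu(X_i=a\mid X_A=v)=\hat\psi_c(a)$ for all $c$ and $a$.

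\emph{Sufficiency.} Take $\mu=\hat\mu_0^{(\gamma)}$. Since the uniform law gives mass $K^{-|A|}$ to $\{X_A=v\}$, we have
\[
\hat\mu_0^{(\gamma)}(X_A=v)=\frac{n_c+\gamma K^{-|A|}}{n+\gamma},
\qquad
\hat\mu_0^{(\gamma)}(X_A=v,X_i=a)=\frac{n_c\psi^{(n)}_c(a)+\gamma K^{-|A|-1}}{n+\gamma}.
\]
Dividing the second expression by the first gives $(1-\varepsilon_c)\psi^{(n)}_c(a)+\varepsilon_c/K$, with $\varepsilon_c$ as in \eqref{eq:eps}. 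This also covers $n_c=0$, where $\varepsilon_c=1$, and $\gamma=\infty$, where the law is uniform. Hence $s$ is the score of $\hat\mu_0^{(\gamma)}$, which also proves the final claim of the lemma.

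\emph{Necessity.} Suppose $s$ is the score of some $\mu$. I would introduce, for each observed context, the quantity $\gamma_c:=K^{|A|}n_c\varepsilon_c/(1-\varepsilon_c)\in[0,\infty]$, so that \eqref{eq:eps} holds for $c$ if and only if $\gamma_c=\gamma$. I would also record $m(A,v):=\mu(X_A=v)$. The goal is to show that $\gamma_c$ is a single constant $\gamma>0$.

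\begin{itemize}
\item \emph{Step 1 (positivity).} Use the hypothesis on $b_i$. For an observed context $c=(A,v,i)$, every training sample has $X_i\neq b_i$, so the extended context $(A\cup\{i\},(v,b_i),j)$ is unobserved, has $\varepsilon=1$, and therefore has uniform conditionals. Evaluating $m(A\cup\{i,j\},(v,b_i,x_j))$ via the two orderings ``$i$ then $j$'' and ``$j$ then $i$'' gives
\[
m(A,v)\,\frac{\varepsilon_c}{K}\cdot\frac1K
= m(A,v)\Big[(1-\varepsilon_{c'})\psi^{(n)}_{c'}(x_j)+\frac{\varepsilon_{c'}}{K}\Big]\frac{\varepsilon_{c''}}{K},
\]
where $c'=(A,v,j)$ and $c''=(A\cup\{j\},(v,x_j),i)$. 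Combining these relations with the chain rule starting from $A=\emptyset$ should force every $\varepsilon_c>0$, and hence full support of $\mu$.

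\item \emph{Step 2 (invariance of $\gamma_c$).} The same two-order identities, together with $m(A\cup\{i\},(v,a))=m(A,v)\hat\psi_c(a)$ and the empirical relations $n_{c'}\psi^{(n)}_{c'}(a)=n_{(A\cup\{j\},(v,a),\cdot)}$, should be solvable explicitly. They should give the recursion $m(A,v)\propto n_c+\gamma_cK^{-|A|}$ and then $\gamma_{(A\cup\{j\},(v,a),i)}=\gamma_{(A,v,i)}=\gamma_{(A,v,j)}$.

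\item \emph{Step 3 (conclusion).} Propagating from the root context $A=\emptyset$, where $n_c=n$, shows that all $\gamma_c$ equal $\gamma:=\gamma_{(\emptyset,\emptyset,i)}$. The root context is independent of $i$ by the $i\leftrightarrow j$ symmetry in Step 2, and $\gamma\in(0,\infty]$ by Step 1.
\end{itemize}

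I expect Step 2 to be the main obstacle. The difficulty is bookkeeping the commutation identities so that each one isolates exactly one unknown $\varepsilon$. My first attempt would be an induction on $|A|$, using the $b_i$-branches (whose deeper contexts are unobserved and uniform) as a normalizing reference in every identity.
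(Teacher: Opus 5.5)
Your route is the same as the paper's. You reduce properness to matching conditionals via Proposition~\ref{prop:ou}, and you prove sufficiency by computing the marginals of $\hat\mu_0^{(\gamma)}$. For necessity you use two-order identities anchored on the $b_i$-branches, the invariant $K^{|A|}n_c\varepsilon_c/(1-\varepsilon_c)$ (the paper's $g$), and propagation from the root. Your displayed identity is exactly the paper's key relation, and Step~2 closes in two substitutions:
\begin{itemize}
\item Setting $x_j=b_j$ makes $\psi^{(n)}_{c'}(b_j)=0$ and $c''$ unobserved, which gives $\varepsilon_{(A,v,i)}=\varepsilon_{(A,v,j)}=:\varepsilon$.
\item For $x_j$ with $m':=n_{A\cup\{j\}}(v,x_j)>0$, write $\psi^{(n)}_{c'}(x_j)=m'/n_A(v)$ and $\varepsilon':=\varepsilon_{c''}$. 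This gives $\varepsilon(1-\varepsilon')\,n_A(v)=K(1-\varepsilon)\varepsilon' m'$, i.e.\ $\gamma_{(A,v,\cdot)}=\gamma_{(A\cup\{j\},(v,x_j),\cdot)}$. The same equation yields $\varepsilon=1\iff\varepsilon'=1$.
\end{itemize}

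The step that fails as planned is Step~1, and with it the full-support claim in your first paragraph. The commutation identities and the chain rule cannot force $\varepsilon_c>0$. Take $\varepsilon_c=0$ on every observed context (and $1$ on unobserved ones), with $m(A,v)=\mu_0^{(n)}(X_A=v)$: this satisfies all of them. It is just the solution $\gamma=0$ of the relation above, which propagates as consistently as any $\gamma>0$. So positivity must be an input.

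Your first paragraph is circular here. Full support is derived from $\hat\psi_c>0$, which is to come from identities that already presuppose the conditionals are defined on every context.

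The paper builds positivity into properness: $s$ is proper iff it is the score of some $\bar\mu>0$ on $\mathcal X_0$. Then $\varepsilon_c=K\bar\mu(X_A=v,X_i=b_i)/\bar\mu(X_A=v)>0$ in one line, and this is exactly what excludes $\gamma=0$.

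If you prefer not to assume full support, for $d\ge2$ you can argue as follows. Since $\sum_a s(x,y,t)=\kappa_t>0$, properness forces $\mu_t(y)>0$, i.e.\ $\mu(X_A=v)>0$, at every partially masked $y$. This gives $\varepsilon_c>0$ for $|A|\le d-2$ via $\mu(X_{A\cup\{i\}}=(v,b_i))=\mu(X_A=v)\,\varepsilon_c/K$. Your identity applied at $A\setminus\{j\}$ with $x_j=v_j$ then handles $|A|=d-1$.

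For $d=1$ no such derivation exists. There $\varepsilon=0$ gives exactly the score of $\mu_0^{(n)}$, so the lemma genuinely relies on the full-support convention.
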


 We see that the smoothing is as in \citet{szegedy2016rethinking}, but where the level of smoothing $\varepsilon_c$ decreases with the number of observations $n_c$ of the context rather than being fixed. In particular, contexts that are observed frequently are barely smoothed, and contexts that do not appear in the sample are predicted uniformly. For this choice of $\varepsilon_c$, we find an elegant relationship between label smoothing and the score entropy loss relative to the uniform sequence distribution, ${\mathcal L}^U$. In particular, for the mixed loss $\hat{\mathcal L}^M_t(s) = (1-\tfrac{\gamma}{n+\gamma})\,\hat{\mathcal L}_t(s) +\tfrac{\gamma}{n+\gamma}\,{\mathcal L}^U_t(s)$, we have,
\begin{align*}\label{eq:loss_mixture}
    &\int_0^T \hat{\mathcal{L}}^M_t(s)\,dt =\sum_c \omega_{|A|}\,\tfrac{n_c+\gamma K^{-|A|}}{n+\gamma} \Big ( (1-\varepsilon_c) H\big(\psi^{(n)}_c, \psi_c \big) + \varepsilon_c H\big(\Unif(\mathcal V), \psi_c \big) \Big ) + \mathrm{const},
\end{align*}
as shown in Lemma~\ref{lem:loss_contexts}.

For the statistical analysis pursued in this work, we take this class of proper label-smoothed score functions as the estimator which we analyze. In particular,
\begin{equation}\label{eq:conditional_is_smoothed}
    \hat s^{(\gamma)}(x,y,t):=\kappa_t \big ( (1-\varepsilon_c)\,\psi^{(n)}_c(x_i)+\varepsilon_c\,\Unif(\mathcal V) \big ),
    \qquad
    \varepsilon_c=\frac{\gamma K^{-|A|}}{\gamma K^{-|A|}+n_c},
\end{equation}
In the sections to follow, we analyze this estimator and show that for an appropriately tuned $\gamma$, we obtain minimax optimal rates in sparse settings.

\subsection{Upper bounds}\label{sec:upper_bounds}

We now proceed with the statistical analysis of the estimator introduced above. To begin with, we introduce two fundamental data-dependent quantities that we borrow from \citep{Mourtada_2026} and use in our analysis.
\begin{definition}\label{def:effective_support_missing}
Given a probability distribution $\mu$, let us define the effective support size $k_{\mathrm{eff}}(\mu,n)$  of $\mu$ at a sample size $n$ as
\begin{equation}
k_{\mathrm{eff}}(\mu,n):=\sum_{x\in\mathcal{X}_0} \min\{1,n \mu(x)\}=|\{x\in \mathcal{X}_0:\mu(x)\geq 1/n\}|+\sum_{x:\mu(x)<1/n} n\mu(x).
\end{equation}
Similarly, we define the effective
missing support size $k_{\mathrm{eff}}^o(\mu,n)$ of $\mu$ with sample size $n$  as
\begin{equation}
k_{\mathrm{eff}}^o(\mu,n):=\sum_{x\in \mathcal{X}_0}\min\{\exp(1-n\mu(x)),n\mu(x)\}=\sum_{x:\mu(x)\geq 1/n}\exp(1-n\mu(x))+\sum_{x:\mu(x)<1/n} n\mu(x).
\end{equation}
\end{definition}

The quantity $k_{\mathrm{eff}}(\mu,n)/n$ relates to error in the empirical measure due to low probability sequences that will likely be missing from the dataset. We note that $k_{\mathrm{eff}}(\mu,n)$ is increasing with $n$ while $k_{\mathrm{eff}}(\mu,n)/n$ is decreasing. Also, $k_{\mathrm{eff}}(\mu,1) = 1$, $k_{\mathrm{eff}}(\mu,n)\leq l$ if $\mu\in\mathcal{P}_l$, and $k_{\mathrm{eff}}(\mu,n)$ converges to
$k_{\mathrm{eff}}(\mu,\infty) = \supp(\mu)$ as $n\to\infty$.

We now provide two important lemmas which upper bound the $\kll$ between $\data$ and $\hat\mu_0^{(\gamma)}$. Throughout this section, we require that $\gamma$ is a measurable map from $\mathcal{X}_0^n$ to $[1,\min\{n,K^d\}]$. For the sake of brevity, we denote by $\hat\mu_0, k_{\mathrm{eff}},L_\gamma$ the objects $\hat\mu_0^{(\gamma(S))}, k_{\mathrm{eff}}(\data,n)$ and $\log \frac{eK^d}{\gamma}$, respectively.
\begin{lemma}\label{lem:upper_bound_expectation_mask}
If $n\ge8$, then for every deterministic $\gamma_0\in[1,\min\{n,K^d\}]$,
\begin{equation}\label{eq:theorem_upper_bound_expectation}
 \mathbb E_S\,\kll(\data|\hat\mu_0)\ \le\ C\frac{k_{\mathrm{eff}}+\,\mathbb E_S[\gamma]+\,k_{\mathrm{eff}}^o(\data,n/8)\,L_{\gamma_0}}{n}\ +\ \log(\gamma_0)\,\mathbb P_S\big(\gamma<\gamma_0\big),
\end{equation}
for some universal constant $C > 0$.
\end{lemma}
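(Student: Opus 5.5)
We bound $\kll(\data\mid\hat\mu_0)$ pointwise in $S$ and then take expectations. The pointwise bound splits the KL over observed and unobserved sequences, and the event $\{\gamma<\gamma_0\}$ is handled separately.

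\textbf{Pointwise decomposition.} Write $\hat\mu_0(x)=\frac{n\mu_0^{(n)}(x)+\gamma K^{-d}}{n+\gamma}$ and $N_x:=n\mu_0^{(n)}(x)$. Then
\[
\kll(\data\mid\hat\mu_0)=\log\frac{n+\gamma}{n}+\sum_{x:N_x\ge1}\data(x)\log\frac{n\data(x)}{N_x+\gamma K^{-d}}+\sum_{x:N_x=0}\data(x)\log\frac{n\data(x)}{\gamma K^{-d}} .
\]
The first term is at most $\gamma/n$, which gives the $\mathbb E_S[\gamma]/n$ contribution.

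\textbf{Observed sequences.} For $N_x\ge1$ we drop $\gamma K^{-d}$ and use $\log u\le u-1$, together with the refinement of \citet{Mourtada_2026} that controls $\sum_x\data(x)\log\frac{n\data(x)}{\max(N_x,1)}$. I would invoke their lemma, stated in Appendix~\ref{appendix_useful_results}, that bounds the expectation of this term by $C\keff/n$. The mechanism is that $\E[\data(x)\log(n\data(x)/N_x)\mathbf 1\{N_x\ge1\}]$ is of order $\min\{\data(x),1/n\}$. This holds via a second-order Taylor bound when $n\data(x)\ge1$, and via the trivial bound $\data(x)\log(n\data(x))\le0$ (plus an $O(\data(x))$ term) when $n\data(x)<1$.

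\textbf{Missing sequences and the random $\gamma$.} The missing-mass term is where the random $\gamma$ enters, and this is the main obstacle. On $\{\gamma\ge\gamma_0\}$ we bound $\log\frac{n\data(x)K^d}{\gamma}\le \log\frac{eK^d}{\gamma_0}+\log(n\data(x))$. We then use two facts.
\begin{itemize}
\item[(i)] The expected missing mass satisfies $\E[\sum_x\data(x)\mathbf 1\{N_x=0\}]=\sum_x\data(x)(1-\data(x))^n\lesssim \koeff(\data,n/8)/n$. For $n\data(x)\ge1$ this uses $\data(x)e^{-n\data(x)}\le \frac{8}{n}e^{1-n\data(x)/8}$; for $n\data(x)<1$ it uses the trivial bound.
\item[(ii)] The extra factor $\log(n\data(x))$ is absorbed: it is nonpositive when $n\data(x)<1$, and when $n\data(x)\ge1$ we have $\data(x)\log(n\data(x))e^{-n\data(x)}\lesssim \frac1n e^{1-n\data(x)/8}$.
\end{itemize}
The choice $n/8$ exists precisely to absorb such polynomial and logarithmic factors, and $n\ge8$ keeps $n/8\ge1$. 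On this event the term is therefore at most $C\koeff(\data,n/8)L_{\gamma_0}/n$.

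On $\{\gamma<\gamma_0\}$ we replace $\gamma$ by $1$, since $\gamma\ge1$. The crude but deterministic extra cost is $\sum_{N_x=0}\data(x)\log(\gamma_0/\gamma)\le\log\gamma_0$, because the missing mass is at most $1$. More precisely, we write
\[
\log\frac{n\data(x)K^d}{\gamma}=\log\frac{n\data(x)K^d}{\gamma_0}+\log\frac{\gamma_0}{\gamma},
\]
and apply the same bounds as above to the first piece with $\gamma_0$ fixed. The second piece is nonpositive on $\{\gamma\ge\gamma_0\}$ and at most $\log\gamma_0$ on the complement. Taking expectations gives $\log(\gamma_0)\,\mathbb P_S(\gamma<\gamma_0)$.

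Writing it this way removes the need for a case split in the first piece, because that piece no longer involves the random $\gamma$. This is the key step that lets a data-dependent $\gamma$ be handled. Summing the three contributions yields \eqref{eq:theorem_upper_bound_expectation}.
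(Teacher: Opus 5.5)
Your proof is correct, but it is organized differently from the paper's. The paper does not split the KL by hand. It invokes Mourtada's deterministic inequality (Lemma~\ref{lem:mourtada_gamma} with $\lambda=\gamma/K^d$) to get
\[
\kll(\data\mid\hat\mu_0^{(\gamma)})\le 6H_n+\tfrac{7\gamma}{n}+V_n+\log\tfrac{K^d}{\gamma}\,U_n ,
\]
where $H_n$ is the squared Hellinger distance between $\mu_0^{(n)}$ and $\data$, and $U_n,V_n$ are weighted masses of the \emph{under-sampled} sequences $\{N_0(x)\le a_x/4\}$. It then bounds $\E[6H_n]\le 12\keff/n$ by a variance computation, and $\E U_n,\E V_n\lesssim\koeff(\data,n/8)/n$ by a Chernoff bound. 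The random $\gamma$ is handled exactly as you do it, via $\log(K^d/\gamma)\le L_{\gamma_0}+\log\gamma_0\,\mathbf 1(\gamma<\gamma_0)$ and $U_n\le 1$.

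Your exact split into $\log(1+\gamma/n)$, an observed part and a missing part is more elementary and self-contained. The $\gamma$-dependence sits only on $\{N_x=0\}$, so the plain missing mass, with expectation $\sum_x\data(x)(1-\data(x))^n$, replaces $U_n$, and no external inequality is needed. What the paper's route buys is reuse: the same $\gamma$-free quantities $H_n,U_n,V_n$ feed the high-probability Lemma~\ref{lem:upper_bound_probability_mask} through Mourtada's concentration results. Your observed-part term would need its own deviation bound there.

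One step needs fixing. No lemma in the paper, in Appendix~\ref{appendix_useful_results} or elsewhere, bounds $\E\sum_{x:N_x\ge1}\data(x)\log\frac{n\data(x)}{N_x}$, so you must prove it yourself. A bare second-order Taylor expansion also does not suffice, because it does not control small $N_x$, where $\log(a_x/N_x)\approx\log a_x$.

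Here is a clean argument for $a:=a_x\ge1$ and $N:=N_0(x)$. Start from
\[
\log\frac aN\le\frac{a-N}{N}=\frac{a-N}{a}+\frac{(a-N)^2}{aN}.
\]
The first term has expectation $-\Prob(N=0)\le0$ on $\{N\ge1\}$. The second term splits in two:
\begin{itemize}
\item On $\{N\ge a/2\}$ it is at most $2(a-N)^2/a^2$, with expectation at most $2/a$.
\item On $\{1\le N<a/2\}$ it is at most $a$, and this event has probability at most $e^{-a/8}$.
\end{itemize}
Since $a^2e^{-a/8}$ is bounded, this gives $\E[\data(x)\log(a/N)\mathbf 1\{N\ge1\}]\lesssim 1/n$. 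For $a<1$ the term is nonpositive. Summing over $\{x:a_x\ge1\}$ then gives $\lesssim\keff/n$, as you claimed.

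A minor point: writing $\max(N_x,1)$ would reintroduce the $N_x=0$ terms you treat separately. What you mean is the indicator $\mathbf 1\{N_x\ge1\}$.
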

\begin{lemma}\label{lem:upper_bound_probability_mask}
If $n\ge13$, $K^d\ge3$ and $\delta\in(e^{-n/6},e^{-2})$, there is an event
$A$ with $\mathbb P_S(A)\ge1-12\delta$, defined in terms of $S$ and independent of $\gamma$, on which, simultaneously for every
 $\gamma\in[1,\min\{n,K^d\}]$,
 \[
 \kll\big(\data\,\big|\,\hat\mu_0^{(\gamma)}\big)\ \le\
C\frac{k_{\mathrm{eff}}+\gamma+\,k_{\mathrm{eff}}^o(\data,n/112)\,L_\gamma+(1+\,L_\gamma)\log\tfrac1\delta+\log\tfrac1\delta\log\log\tfrac1\delta}{n},
 \]
for some universal constant $C > 0$.
\end{lemma}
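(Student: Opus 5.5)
We will prove the high-probability bound by working directly with the smoothed empirical law $\hat\mu_0^{(\gamma)}=\frac{n}{n+\gamma}\mu_0^{(n)}+\frac{\gamma}{n+\gamma}\Unif(\mathcal X_0)$. We write $N_x:=n\mu_0^{(n)}(x)$ for the counts and $M=K^d$. The plan is to split the KL divergence into the contribution of sequences that are observed and the contribution of sequences that are missing from the sample:
\[
\kll(\data|\hat\mu_0^{(\gamma)})=\sum_{x:N_x\ge1}\data(x)\log\frac{\data(x)(n+\gamma)}{N_x+\gamma/M}+\sum_{x:N_x=0,\ \data(x)>0}\data(x)\log\frac{\data(x)(n+\gamma)M}{\gamma}.
\]
On the observed part we use $N_x+\gamma/M\ge N_x$ and $\log\frac{n+\gamma}{n}\le \gamma/n$. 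This reduces the term to $\gamma/n$ plus $\sum_{x:N_x\ge1}\data(x)\log\frac{n\data(x)}{N_x}$, which is the observed-part KL of the empirical measure and does not involve $\gamma$. On the missing part we use $\data(x)(n+\gamma)M/\gamma\le 2e\,n\data(x)\cdot eM/\gamma$. This gives the bound
\[
L_\gamma\,\data(\{x:N_x=0\})+\sum_{x:N_x=0}\data(x)\log\big(2n\data(x)\big)_+ .
\]
After this step, $\gamma$ enters only through the explicit terms $\gamma/n$ and $L_\gamma\,\data(\text{missing mass})$. Both are monotone in $\gamma$, so a single event $A$ that controls the remaining sample-only quantities gives the bound simultaneously for all $\gamma\in[1,\min\{n,M\}]$.

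We define $A$ as the intersection of three events, each built from concentration results of \citet{Mourtada_2026} recalled in Appendix~\ref{appendix_useful_results}:
\begin{enumerate}[label=(\roman*)]
\item \emph{Observed part.} The observed-part KL is at most $C(k_{\mathrm{eff}}+\log\frac1\delta)/n$. The natural route is $\log(a/b)\le (a-b)/b$ on $\{N_x\ge1\}$, which turns the term into a chi-square-type sum. We then apply Bernstein/Poissonization arguments in which the atoms with $n\data(x)\ge1$ contribute $O(1/n)$ each and the light atoms contribute $O(n\data(x))$ in total.
\item \emph{Missing mass.} The missing mass satisfies $\data(\{N_x=0\})\le C(k^o_{\mathrm{eff}}(\data,n/112)+\log\frac1\delta)/n$. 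This follows from the known sub-Gamma/negative-association concentration of the missing mass around its mean $\sum_x\data(x)(1-\data(x))^n\lesssim k^o_{\mathrm{eff}}(\data,n)/n$. The reduction from $n$ to $n/112$ absorbs the constants from the deviation inequality.
\item \emph{Logarithmic remainder.} The term $\sum_{x:N_x=0}\data(x)\log(2n\data(x))_+$ is at most $C(k^o_{\mathrm{eff}}+\log\frac1\delta\log\log\frac1\delta)/n$. For atoms with $n\data(x)=m\ge1$, the probability of being missing is about $e^{-m}$. So we bucket the atoms dyadically in $m$ and bound the missing mass in each bucket, weighted by $\log m$. A union bound over buckets with $\delta$-splitting produces the $\log\frac1\delta\log\log\frac1\delta$ term.
\end{enumerate}
Each event fails with probability at most a constant multiple of $\delta$, which yields $\mathbb P_S(A)\ge 1-12\delta$. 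The hypotheses $\delta>e^{-n/6}$ and $n\ge13$ ensure that the deviation terms $\log(1/\delta)/n$ stay below $1$, so the Bernstein-type inequalities apply in their sub-exponential regime.

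Combining the three events, on $A$ we obtain
\[
\kll\le \frac{\gamma}{n}+\frac{C}{n}\Big(k_{\mathrm{eff}}+\log\tfrac1\delta\Big)+L_\gamma\,\frac{C}{n}\Big(k^o_{\mathrm{eff}}+\log\tfrac1\delta\Big)+\frac{C}{n}\Big(k^o_{\mathrm{eff}}+\log\tfrac1\delta\log\log\tfrac1\delta\Big),
\]
which is the claimed bound since $L_\gamma\ge1$. We expect step (iii), together with keeping the event independent of $\gamma$, to be the main obstacle. The heavy atoms that happen to be missing carry an unbounded $\log(n\data(x))$ factor, and controlling them requires a careful peeling argument. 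If this becomes unwieldy, we will first try to bound $\log(2n\data(x))\le 2n\data(x)/e$ and use the exponential tail $\data(x)e^{-n\data(x)/2}$ after halving the sample size, accepting a worse constant in the $k^o_{\mathrm{eff}}(\data,n/112)$ argument.
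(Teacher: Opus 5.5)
The gap is in your step (i). Your reduction is correct up to the point where $\gamma$ is isolated. Putting $\gamma$ only into explicit monotone factors in front of sample-only quantities is also how the paper obtains an event that does not depend on $\gamma$.

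Because you split at $N_x=0$, the observed part still contains heavy atoms that are observed but badly under-sampled. For these, the claim that $\sum_{x:N_x\ge1}\data(x)\log\frac{n\data(x)}{N_x}\le C(k_{\mathrm{eff}}+\log\frac1\delta)/n$ with probability $1-O(\delta)$ is false. Take $\data$ with two atoms and $\data(x_1)=m/n$, where $m=\tfrac12\log\tfrac1\delta$; then $k_{\mathrm{eff}}=2$. Since $m<n/12$, the event $\{N_{x_1}=1\}$ has probability $m(1-\frac mn)^{n-1}\ge m\,\delta^{13/24}\gg\delta$. On this event the observed part is at least $\frac{m(\log m-2)}{n}$, which is of order $\frac1n\log\frac1\delta\log\log\frac1\delta$. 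This exceeds $C(2+\log\frac1\delta)/n$ for every fixed $C$ once $n$ is large and $\delta$ is small within $(e^{-n/6},e^{-2})$.

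The chi-square route you suggest is worse. The relaxation $\log(a/b)\le(a-b)/b$ turns the contribution of $x_1$ into $m(m-1)/n\asymp\log^2\frac1\delta/n$ on that same event, so no concentration argument can bring the relaxed sum down to the level you claim. Your step (iii) cannot absorb this term, because it only sees atoms with $N_x=0$.

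The missing idea is to split at a constant fraction of the expected count, $N_x\le n\data(x)/4$, rather than at $N_x=0$. This is Mourtada's deterministic inequality (Lemma~\ref{lem:mourtada_gamma}), which the paper applies with $\lambda=\gamma/K^d$.

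\begin{itemize}
\item On $\{N_x>n\data(x)/4\}$ the ratio $\data(x)/\mu_0^{(n)}(x)$ is below $4$. So that contribution is controlled by the Hellinger sum $H_n=\sum_x(\sqrt{\mu_0^{(n)}(x)}-\sqrt{\data(x)})^2$, which is below $(4k_{\mathrm{eff}}+7\log\frac1\delta)/n$ with probability at least $1-2\delta$ (Lemma~\ref{lem:mourtada_14}).
\item Every atom with $N_x\le n\data(x)/4$, missing or not, goes into $R_\gamma\le\log\frac{K^d}{\gamma}\,U_n+V_n$.
\item $U_n=\sum_x\data(x)\mathbf 1(N_x\le n\data(x)/4)$ is the under-sampled mass. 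It dominates your missing mass, so your (ii) survives in this stronger form via Lemma~\ref{lem:mourtada_theorem_6}.
\item $V_n$ carries the weights $\log(2n\data(x))$. It is bounded by an $O(k_{\mathrm{eff}}/n)$ contribution from atoms with $n\data(x)<4$, plus $R_{K^d}$, which the peeling bound of Lemma~\ref{lem:mourtada_lemma_5} controls. This is where the $\log\frac1\delta\log\log\frac1\delta$ term actually enters.
\end{itemize}

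With (ii) and (iii) restated for $U_n$ and $V_n$, your argument for uniformity in $\gamma$ goes through unchanged. The three events fail with probabilities $2\delta$, $2\delta$ and $8\delta$, which gives $\mathbb P_S(A)\ge1-12\delta$.
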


We note that if $\gamma_0=1,$ the term $\log(\gamma_0)\mathbb{P}(\gamma<\gamma_0)=0,$ but the term $L_{\gamma_0}$ takes the maximum value of $1+d\log K$. Thus, we have a trade off, which will be optimized depending on the setting.%

The lemmas above provide bounds, both in expectation and in high probability, on the KL divergence between $\data$ and the smoothed empirical distribution $\hat\mu_0^{(\gamma)}$. However, to analyze the diffusion model, we must instead consider $\nu_t$. We do this via the following lemma.

\begin{lemma}\label{lem:sampler_mask}
Let $\gamma>0$, let $\hat\mu_0^{(\gamma)}$ be as in \eqref{eq:intro_smoothed_estimator} and $\hat s = \hat s^{(\gamma)}$ as in \eqref{eq:conditional_is_smoothed}. Then \eqref{sampling_kolmogorov_equation} with score $\hat s$ and initial law $\delta_{(m,\dots,m)}$ has a unique absolutely continuous solution $(\nu_u)_{u\in[0,T)}$, the limit $\hat\nu:=\lim_{u\uparrow T}\nu_u$ exists, and $\hat\nu=\hat\mu_0^{(\gamma)}$, where $\hat\mu_0^{(\gamma)}$ is extended by zero to $\mathcal X$: $\hat\nu(x)=\hat\mu_0^{(\gamma)}(x)$ for $x\in\mathcal X_0$ and $\hat\nu(x)=0$ for $x\in\mathcal X\setminus\mathcal X_0$. In particular, for every $\data\in\mathcal P(\mathcal X_0)$,
\[
\kll(\data\,|\,\hat\nu)=\kll(\data\,|\,\hat\mu_0^{(\gamma)}).
\]
\end{lemma}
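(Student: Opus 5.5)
The plan is to show that the sampler driven by $\hat s^{(\gamma)}$ is exactly the time reversal of the masked forward process started from $\hat\mu_0^{(\gamma)}$. Write $\hat\mu_t:=P_{0,t}\hat\mu_0^{(\gamma)}$ for its forward marginals. Lemma~\ref{lem:proper_smoothing} identifies $\hat s^{(\gamma)}$ as the proper score induced by $\hat\mu_0^{(\gamma)}$, i.e.\ $\hat s^{(\gamma)}(x,y,t)=\hat\mu_t(x)/\hat\mu_t(y)$ whenever $Q_t(y,x)>0$. That lemma needs some token $b_i$ with zero mass for each coordinate; this is only used for the ``only if'' direction, and the ``if'' direction follows directly from Proposition~\ref{prop:ou} applied to $\hat\mu_0^{(\gamma)}$. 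Indeed, $\hat\mu_0^{(\gamma)}(X_i=a\mid X_A=v)=\frac{n_c\psi^{(n)}_c(a)+\gamma K^{-|A|}K^{-1}}{n_c+\gamma K^{-|A|}}$, which is exactly $\hat\psi_c$ with $\varepsilon_c$ as in \eqref{eq:eps}. Hence $\hat Q_t$ is the reverse generator $\widetilde Q_t$ associated with $\hat\mu_{T-t}$.

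I would then show that $u\mapsto \hat\mu_{T-u}$ solves \eqref{sampling_kolmogorov_equation}. For $u<T$ we have $\alpha_{T-u}\in(0,1)$. Since $\hat\mu_0^{(\gamma)}$ has full support on $\mathcal X_0$, $\hat\mu_{T-u}$ charges every partially masked state. This justifies the reversal identity $\hat\mu(y)\widetilde Q(x,y)=\hat\mu(x)Q(y,x)$ on all relevant pairs. Differentiating $\hat\mu_{T-u}$ and using the forward equation \eqref{definition_forward_process} gives $\partial_u\hat\mu_{T-u}=\hat Q_u\hat\mu_{T-u}$, the standard computation behind \eqref{eq:ideal_chain}.

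The main obstacle is the initial condition. One has $\hat\mu_T\neq\delta_{(\mathfrak m,\dots,\mathfrak m)}$ for finite $T$, unless the schedule satisfies $\alpha_T=0$. So I would argue as follows.
\begin{enumerate}[label=(\roman*)]
\item Masking is absorbing. Under $\hat Q_u$, transitions only unmask a single coordinate (rates $\kappa_{T-u}\sigma_{T-u}\hat\psi$). Thus the fully masked chain evolves by successive unmaskings.
\item By Proposition~\ref{prop:ou}, the conditional law of the next revealed token depends only on the context $c$ and not on time. The time of unmasking a given coordinate has the forward marginal-in-time law. The ratio structure $\kappa_t$ makes the reverse jump times independent of the revealed values.
\item Consequently, the reverse chain from the fully masked state reveals coordinates in a uniformly random order. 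Each revealed token is drawn from $\hat\psi_c$, and every coordinate is unmasked by time $T$, because $\int\kappa_{T-u}\sigma_{T-u}\,du=\infty$ near $u=T$ since $\alpha_0=1$.
\item By the chain rule, the final law is $\prod\hat\mu_0^{(\gamma)}(x_{i_k}\mid x_{i_1},\dots,x_{i_{k-1}})=\hat\mu_0^{(\gamma)}(x)$ for every order.
\end{enumerate}
Equivalently, $\nu_u$ equals $\hat\mu_{T-u}$ conditioned on, or decomposed along, the all-mask component. I would make this precise by checking that $\nu_u(y)=\hat\mu_{T-u}(y)\,\beta_u(|A(y)|)$, for an explicit time factor $\beta_u$ depending only on the number of unmasked coordinates, solves the linear ODE. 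Existence and uniqueness follow from Carathéodory theory, since the rates are locally integrable on $[0,T)$ and the state space is finite. Then $\beta_u\to1$ on $\mathcal X_0$ as $u\uparrow T$.

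Finally, $\hat\nu$ is supported on $\mathcal X_0$ with $\hat\nu=\hat\mu_0^{(\gamma)}$ there. Since $\data$ vanishes off $\mathcal X_0$, the KL identity is immediate from the definition.
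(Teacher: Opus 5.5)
Your proposal is correct and, once made precise, matches the paper's proof: the paper also obtains existence and uniqueness on $[0,T)$ from Carath\'eodory's theorem, then verifies the explicit solution $\nu_u(y)=F_u^{|A(y)|}(1-F_u)^{d-|A(y)|}\,\hat\mu_0^{(\gamma)}(X_{A(y)}=y_{A(y)})$ with $F_u=(\alpha_{T-u}-\alpha_T)/(1-\alpha_T)\to1$, which is your ansatz with $\beta_u(k)=(1-\alpha_T/\alpha_{T-u})^k(1-\alpha_T)^{-d}$. Your random-order-unmasking picture is a correct probabilistic reading of the same computation: $\nu_u$ is the law of $X_{T-u}$ for the forward chain started at $\hat\mu_0^{(\gamma)}$ and conditioned on $X_T=(\mathfrak m,\dots,\mathfrak m)$, an event independent of $X_0$.
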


Putting together the lemmas above, we obtain the following result.
\begin{theorem}\label{thm:upper_bounds_mask}
Let $\gamma$ be a measurable map from $\mathcal{X}_0^n$ to $[1,\min\{n,K^d\}]$, and denote by $k_{\mathrm{eff}},L_\gamma$ the quantities $k_{\mathrm{eff}}(\data,n)$ and $\log \frac{eK^d}{\gamma}$, respectively. Then there exists a universal constant $C$ such that the following hold:
\begin{enumerate}[leftmargin=2em]
    \item If $n\ge8$, then for every deterministic $\gamma_0\in[1,\min\{n,K^d\}]$,
\begin{equation}
\mathbb{E}_S\kll(\data\mid\hat\nu)\leq  C\frac{k_{\mathrm{eff}}+\mathbb E_S[\gamma]+k_{\mathrm{eff}}^o(\data,n/8) L_{\gamma_0}}{n} \, + \, \log(\gamma_0) \, \mathbb P_S\big(\gamma<\gamma_0\big). 
\end{equation}
\item If $n\ge13$, $K^d\ge3$ and $\delta\in(e^{-n/6},e^{-2})$, there is an event
$A$ with $\mathbb P_S(A)\ge1-12\delta$ on which, simultaneously for every
 $\gamma\in[1,\min\{n,K^d\}]$,
 \begin{equation}
 \kll\big(\data\,\big|\,\hat\nu\big) \leq
C\frac{k_{\mathrm{eff}}+\gamma+k_{\mathrm{eff}}^o(\data,\tfrac{n}{112}) L_\gamma}{n} + \frac{(1+L_\gamma)\log\tfrac1\delta+\log\tfrac1\delta\log\log\tfrac1\delta}{n}.
 \end{equation}
\end{enumerate}
\end{theorem}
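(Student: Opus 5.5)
The plan is to combine Lemma~\ref{lem:sampler_mask} with Lemmas~\ref{lem:upper_bound_expectation_mask} and~\ref{lem:upper_bound_probability_mask}. The starting point is Lemma~\ref{lem:sampler_mask}, which identifies the output of the reverse sampler with the smoothed empirical law. Once $\hat\nu$ is replaced by $\hat\mu_0^{(\gamma)}$, both parts of the theorem reduce to statements already proved about $\hat\mu_0^{(\gamma)}$.

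First step: fix the dataset $S$ and set $\gamma=\gamma(S)\in[1,\min\{n,K^d\}]$. Since $\gamma>0$, Lemma~\ref{lem:sampler_mask} applies with this value of the smoothing parameter. The sampler driven by $\hat s^{(\gamma(S))}$ and started from the fully masked state then has a well-defined terminal law $\hat\nu$, and
\[
\kll(\data\mid\hat\nu)=\kll\big(\data\mid\hat\mu_0^{(\gamma(S))}\big).
\]
This identity holds pointwise in $S$, because Lemma~\ref{lem:sampler_mask} is deterministic given $S$ and $\gamma$. The left-hand side is therefore a measurable function of $S$, since $\gamma$ is measurable and the right-hand side is an explicit function of $\mu_0^{(n)}$ and $\gamma$.

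Part 1: take $\mathbb E_S$ of the identity and apply Lemma~\ref{lem:upper_bound_expectation_mask}. Its hypotheses are exactly $n\ge8$, $\gamma$ measurable with values in $[1,\min\{n,K^d\}]$, and a deterministic $\gamma_0$ in the same range. This gives the claimed bound with the same universal constant $C$.

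Part 2: let $A$ be the event of Lemma~\ref{lem:upper_bound_probability_mask}, which depends only on $S$ and has $\mathbb P_S(A)\ge1-12\delta$. On $A$, for every $\gamma\in[1,\min\{n,K^d\}]$, apply Lemma~\ref{lem:sampler_mask} to that $\gamma$ and rewrite $\kll(\data\mid\hat\mu_0^{(\gamma)})$ as $\kll(\data\mid\hat\nu)$, where $\hat\nu$ is the sampler output for that $\gamma$. The bound in the theorem is the lemma's bound with the $\log\frac1\delta$ terms split off into a second fraction. To recover the displayed form, I would enlarge $C$ to $\max\{C,1\}$ and bound the second fraction by $C$ times itself, so that it matches the original right-hand side.

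The only point needing care is that in Part 2 the uniformity over $\gamma$ must survive the substitution. This holds because the sampler identity is exact for each fixed $\gamma>0$, with no exceptional event. No step is a real obstacle; the substantive work lies in the three lemmas being invoked.
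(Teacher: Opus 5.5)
Your argument has the same structure as the paper's proof. You fix $S$ and use Lemma~\ref{lem:sampler_mask} pointwise to replace $\hat\nu$ by $\hat\mu_0^{(\gamma(S))}$. For item 1 you take $\mathbb E_S$ and invoke Lemma~\ref{lem:upper_bound_expectation_mask}; for item 2 you intersect with the $\gamma$-independent event $A$ of Lemma~\ref{lem:upper_bound_probability_mask}. Item 1 is correct as you wrote it, as is your observation that uniformity in $\gamma$ survives because the sampler identity has no exceptional event.

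The one step that fails is your reconciliation of constants in item 2. Write $F:=k_{\mathrm{eff}}+\gamma+k_{\mathrm{eff}}^o(\data,\tfrac{n}{112})L_\gamma$ and $G:=(1+L_\gamma)\log\tfrac1\delta+\log\tfrac1\delta\log\log\tfrac1\delta$.
\begin{itemize}
\item The lemma gives $\kll(\data\mid\hat\nu)\le C(F+G)/n$, while the display asks for $CF/n+G/n$.
\item Your observation $G\le CG$ for $C\ge1$ shows that the display is \emph{smaller} than the lemma's bound. That is the wrong direction: it cannot be used to deduce the display from the lemma.
\item Enlarging $C$ on the first fraction does not help either. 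The lemma's proof carries coefficients $42+6796L_\gamma$ and $106000$ on the $\log\frac1\delta$ terms. Moreover, $G/F$ is unbounded over admissible parameters, e.g.\ bounded $k_{\mathrm{eff}}$ and $\gamma$ with $\log\frac1\delta$ close to $n/6$. So no $C'$ gives $C(F+G)\le C'F+G$.
\end{itemize}
What your argument (and the paper's one-line proof) actually establishes is item 2 with $C$ multiplying the whole numerator $F+G$. This is the form used downstream in Corollary~\ref{cor:sparse_mask}. You should state your conclusion in that form, reading the split display as a misprint, rather than claim to recover the version with coefficient $1$ on $G$.
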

The theorem above provides upper bounds of the error between $\data$ and the limit of the sampling process \eqref{sampling_kolmogorov_equation}. We highlight that this limit is not a mere theoretical object, but is the terminal law of an implementable algorithm based on the Gillespie method \citep{gillespie}.

\subsection{Sparse setting and minimax bounds}\label{sec:sparse_case}

We now focus on the motivating setting where $\data$ is sparse, i.e. $\data\in \mathcal{P}_l(\mathcal{X}_0)$. We start by noticing that $k_{\mathrm{eff}}^o(\data,n),k_{\mathrm{eff}}(\data,n)\leq l$ for every $n$, and so, under no additional assumptions on $\gamma$, Theorem~\ref{thm:upper_bounds_mask} produces the following corollary. 
\begin{corollary}[Upper bounds in the sparse setting ]\label{cor:sparse_mask}
Let us assume the hypothesis of Lemma~\ref{lem:sampler_mask}. Moreover, let us consider the setting of Lemma~\ref{lem:upper_bound_expectation_mask} with $\data\in \mathcal{P}_l(\mathcal{X}_0).$
Then, the bound becomes
    \begin{equation}
\mathbb{E}_S\kll(\data\mid \hat\nu)\leq C\frac{l+\mathbb{E}[\gamma]+lL_{\gamma_0}}{n}+\log(\gamma_0)\mathbb{P}(\gamma < \gamma_0).
\end{equation}
Similarly, in the setting of Lemma~\ref{lem:upper_bound_probability_mask}, with $\data\in \mathcal{P}_l(\mathcal{X}_0)$ the bound becomes
\begin{equation}
    \kll(\data\mid \hat\nu)\leq C\frac{l+\gamma + lL_\gamma+(1+L_\gamma)\log(1/\delta)+\log (1/\delta)\log\log (1/\delta)}{n}.
\end{equation}
\end{corollary}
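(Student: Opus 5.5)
The corollary follows directly by substitution into Theorem~\ref{thm:upper_bounds_mask}. That theorem applies because its hypotheses are those of Lemmas~\ref{lem:upper_bound_expectation_mask}, \ref{lem:upper_bound_probability_mask} and \ref{lem:sampler_mask}. So the only work is to bound the two data-dependent quantities $k_{\mathrm{eff}}(\data,n)$ and $k_{\mathrm{eff}}^o(\data,m)$, for $m\in\{n/8,\,n/112\}$, by $l$ whenever $\data\in\mathcal P_l(\mathcal X_0)$.

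The first step is to bound both effective quantities by $l$.
\begin{itemize}
\item For $k_{\mathrm{eff}}$: each summand $\min\{1,n\data(x)\}$ is at most $1$ and vanishes off $\supp(\data)$. Hence $k_{\mathrm{eff}}(\data,n)\le|\supp(\data)|\le l$.
\item For $k_{\mathrm{eff}}^o(\data,m)$, with any $m>0$ (in particular non-integer $m=n/8$ or $n/112$): the summand $\min\{\exp(1-m\data(x)),\,m\data(x)\}$ is zero off the support. On the support I use the pointwise bound $\min\{e^{1-u},u\}\le 1$ for $u\ge0$. This holds because $u\le1$ in the case $u<1$, and $e^{1-u}\le1$ in the case $u\ge1$. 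Summing over the support gives $k_{\mathrm{eff}}^o(\data,m)\le l$.
\end{itemize}

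The second step is substitution. I insert these bounds into the two items of Theorem~\ref{thm:upper_bounds_mask}; all coefficients there are nonnegative, since $L_\gamma\ge1$ for $\gamma\le K^d$.
\begin{itemize}
\item The expectation bound becomes $C(l+\mathbb E[\gamma]+lL_{\gamma_0})/n+\log(\gamma_0)\mathbb P(\gamma<\gamma_0)$.
\item In the high-probability bound, I merge the second fraction into the first by enlarging $C$ to $\max\{C,1\}$. It then holds on the same event $A$, simultaneously for all $\gamma$.
\end{itemize}

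There is no real obstacle here. The only care needed is that the pointwise bound on the $k_{\mathrm{eff}}^o$ summand holds for every sample-size argument $m$, not just $m=n$, and that the change of constant is harmless.
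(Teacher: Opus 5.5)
Your proposal is correct and follows the paper's route: the corollary is obtained by inserting $k_{\mathrm{eff}}(\data,n)\le l$ and $k_{\mathrm{eff}}^o(\data,m)\le l$ into Theorem~\ref{thm:upper_bounds_mask}. The paper gets these from $\min\{e^{1-a},a\}\le\min\{1,a\}$, i.e.\ the chain $k^o_{\mathrm{eff}}(\data,m)\le k_{\mathrm{eff}}(\data,m)\le k_{\mathrm{eff}}(\data,n)\le|\supp\data|$; your pointwise bound $\min\{e^{1-u},u\}\le 1$ on the support is an equivalent shortcut, and your merging of the second fraction into the constant (all terms being nonnegative) is likewise what the statement implicitly does.
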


For deterministic $\gamma$, Corollary \ref{cor:sparse_mask} produces a bound with rate $\frac{1}{n} (l\log{\frac{eK^d}{\gamma}} + \gamma)$. Thus, even with the generic choice of $\gamma = 1$, we obtain $O(\frac{l d \log(K)}{n})$ which no longer necessarily scales exponentially with dimension. However, the rates are still sub-optimal when $l$ is large, and to achieve minimax optimal rates of $O(\frac{l \log(eK^d/l)}{n})$, we effectively require that $\gamma \sim l$. In the following corollary, we show that it is in fact sufficient for $\gamma$ to be \emph{concentrated} around $l$ to obtain such rates.
\tyler{Why were the assumptions taken out of the corollary?}\luca{three reasons: i don't want the corollary too big, coherence with the lenght of the other non main results, and limit page problems}
Throughout, we require that $n,K^d\ge110$ and $2\le l\le\min\{n,\lfloor K^d/55\rfloor\}$; we write $\Lambda_l:=\log(eK^d/l)$ and $\Gamma_\delta:=l\Lambda_l+\log(K^d)\log(1/\delta)$, so that $\Gamma_1=l\Lambda_l$. For a measurable $\gamma:\mathcal X_0^n\to[1,\min\{n,K^d\}]$ we denote by $\hat\mu_0=\hat\mu_0^{(\gamma)}$ the estimator (1), by $\hat s$ the score, and by $\hat\nu$ the terminal law of \eqref{sampling_kolmogorov_equation} driven by $\hat s$.

\begin{corollary}\label{prop:minimax_mask_prob}
Fix constants $0<c_0\le1\le C_0$. There exist a universal constant $c>0$ and a constant $C<\infty$ depending only on $(c_0,C_0)$ such that the following hold.
\begin{enumerate}[label=(\roman*)]
\item If $\gamma$ takes values in $[\max\{1,c_0l\},\,\min\{n,K^d,C_0\,l\Lambda_l\}]$, then for every $\data\in\mathcal P_l(\mathcal X_0)$,
\[
\mathbb{E}_S\,\KL(\data\,|\,\hat\nu)\le C\,\frac{l\Lambda_l}{n}.
\]
\item Let $\delta\in(e^{-\min\{n/6,K^d\}},e^{-2})$. If $\gamma$ takes values in $[\max\{1,c_0l\},\,\min\{n,K^d,C_0\Gamma_\delta\}]$, then for every $\data\in\mathcal P_l(\mathcal X_0)$, with probability at least $1-12\delta$,
\[
\kll(\data\,|\,\hat\nu)\le C\,\frac{\Gamma_\delta}{n}.
\]
\end{enumerate}
\end{corollary}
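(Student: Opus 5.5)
The proof is a direct specialisation of Corollary~\ref{cor:sparse_mask} (equivalently Theorem~\ref{thm:upper_bounds_mask} with $k_{\mathrm{eff}},k_{\mathrm{eff}}^o\le l$). The only work is to control $L_\gamma=\log(eK^d/\gamma)$ by $\Lambda_l$ when $\gamma\ge c_0 l$, and to absorb the $\delta$-dependent terms into $\Gamma_\delta$. The basic estimate, valid since $\gamma\ge\max\{1,c_0l\}$ and $c_0\le 1$, is
\[
L_\gamma\le \log\frac{eK^d}{c_0 l}=\Lambda_l+\log(1/c_0)\le (1+\log(1/c_0))\,\Lambda_l ,
\]
where the last step uses $\Lambda_l\ge1$ because $l\le K^d$. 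The lower bound $\gamma\ge1$ is also kept, giving $L_\gamma\le 1+\log K^d\le 2\log K^d$ since $K^d\ge110$.

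\textbf{Part (i).} Apply the expectation bound of Corollary~\ref{cor:sparse_mask} with the deterministic choice $\gamma_0:=\max\{1,c_0l\}$. Every value of $\gamma$ is at least $\gamma_0$, so $\mathbb P_S(\gamma<\gamma_0)=0$ and the second term vanishes. This is the key point: the lower end of the allowed range removes the penalty $\log(\gamma_0)\mathbb P_S(\gamma<\gamma_0)$. It remains to bound $l+\mathbb E_S[\gamma]+lL_{\gamma_0}$.
\begin{itemize}
\item $l\le l\Lambda_l$.
\item $\mathbb E_S[\gamma]\le C_0 l\Lambda_l$, by the upper end of the range.
\item $lL_{\gamma_0}\le(1+\log(1/c_0))\,l\Lambda_l$, by the display above.
\end{itemize}
Together these give $\mathbb E_S\KL(\data\,|\,\hat\nu)\le C(1+C_0+\log(1/c_0))\,l\Lambda_l/n$.

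\textbf{Part (ii).} The assumption $\delta>e^{-n/6}$ puts us inside the range of Lemma~\ref{lem:upper_bound_probability_mask}, and $n\ge110\ge13$ holds. On the event $A$ of probability at least $1-12\delta$, the bound holds simultaneously for all $\gamma$, so in particular it holds at the random value $\gamma(S)$. We bound each term by a multiple of $\Gamma_\delta$.
\begin{itemize}
\item $l+lL_\gamma\lesssim_{c_0} l\Lambda_l$, as in part (i).
\item $\gamma\le C_0\Gamma_\delta$, by the upper end of the range.
\item $(1+L_\gamma)\log(1/\delta)\le 3\log(K^d)\log(1/\delta)\le3\Gamma_\delta$, using the crude bound $L_\gamma\le 1+\log K^d$ from $\gamma\ge1$ and $\log K^d\ge1$.
\end{itemize}
The only term needing care is $\log(1/\delta)\log\log(1/\delta)$. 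Here we use $\delta>e^{-K^d}$, so $\log(1/\delta)<K^d$ and hence $\log\log(1/\delta)<\log K^d$. This gives $\log\tfrac1\delta\log\log\tfrac1\delta\le\log(K^d)\log(1/\delta)\le\Gamma_\delta$. This is exactly why the range of $\delta$ includes the restriction $\min\{n/6,K^d\}$.

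Summing, $\KL(\data\,|\,\hat\nu)\le C'\Gamma_\delta/n$ on $A$, with $C'$ depending only on $(c_0,C_0)$.

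The expected obstacle is only bookkeeping: ensuring each term is dominated without hidden dependence on $l$ or $K$. The $\log\log$ term is the delicate one, and it is settled by the $\delta>e^{-K^d}$ constraint. The remaining hypotheses ($l\le K^d/55$, $l\ge2$) are needed only for the matching lower bound \eqref{eq:lower-bound-minimax} and are not used in the upper bound.
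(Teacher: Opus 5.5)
Your proof is correct and follows the paper's argument essentially step for step. The deterministic choice $\gamma_0=\max\{1,c_0l\}$ kills the $\log(\gamma_0)\,\mathbb P_S(\gamma<\gamma_0)$ term, $L_\gamma\le\Lambda_l+\log(1/c_0)$ bounds every logarithmic factor by a $c_0$-dependent multiple of $\Lambda_l$, and $\delta>e^{-K^d}$ absorbs $\log\frac1\delta\log\log\frac1\delta$ into $\Gamma_\delta$. The paper only differs in bookkeeping: it tracks explicit constants (using $\Lambda_l\ge\log(55e)$ rather than $\Lambda_l\ge1$), also covers the uniform kernel via the extra $1/n$ of Lemma~\ref{lem:sampler_uniform}, and adds the matching lower bound through Corollary~\ref{cor:mourtada_lower_bound}, which, as you note, is not needed for the upper bounds.
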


We observe that the interval above depends on the sparsity parameter $l$, which is unknown in general. We find that a suitable estimator that reaches similar rates is given by,
\begin{equation}\label{eq:definition_D_n}
    D_n:=|\supp\mu_0^{(n)}|=\#\{x\in \mathcal{X}_0: N_0(x)>0\},
\end{equation}
which denotes the number of different states observed in the sample at time $t=0.$ Using it, we can define an implementable, adaptive estimator, which reaches the optimal rates described above. The result is summarized in the Appendix \ref{appendix_final} (Corollary~\ref{cor:adaptive_gamma}).

%% file: sections/statistical_rates.tex
\section{Statistical rates for other diffusions}\label{sec:statistical_rates:general}

We now extend outside of the masked setting to more general diffusions. For the masked diffusion, the factorization in \eqref{eq:ou_factorization} expresses the score as a classifier for the masked token multiplied by an explicit time-dependent factor, and it is this structure that motivated our use of label smoothing. For a general generator $Q_0$, no such factorization is available. We show, however, that the mixed score entropy loss $\mathcal{L}^M$ of Section~\ref{sec:entropy} still admits an explicit minimizer, which smooths the empirical score in a manner analogous to label smoothing. We use this to extend our statistical guarantees to uniform corruption setting.

By linearity of the score entropy loss with respect to the data distribution, the mixed loss $\mathcal{L}^M$ can be expressed as a score entropy loss for the measure $\hat{\mu}_0^{(\gamma)}$ defined in \eqref{eq:intro_smoothed_estimator}. Thus, via the equivalence of the score entropy loss and the score matching loss, $\mathcal{L}^M$ is uniquely minimized by the score,
\begin{equation}\label{eq:hat_s_uniform}
    \hat{s}(x, y, t) := \frac{\hat{\mu}_t(x)}{\hat{\mu}_t(y)} = \frac{P_{0, t} \hat{\mu}_0^{(\gamma)}(x)}{P_{0, t} \hat{\mu}_0^{(\gamma)}(y)}.
\end{equation}
Writing $\mu^{(n)}_t:=P_{0,t}\mu^{(n)}_0$ for the noised empirical measure and $s^{\star,(n)}$ for its score, we have $\hat\mu_t=\frac{n}{n+\gamma}\mu^{(n)}_t+\frac{\gamma}{n+\gamma}\,\mu^U_t $, and dividing $\hat\mu_t(x)$ by $\hat\mu_t(y)$ gives
\begin{equation}\label{eq:smoothed_score}
    \hat{s}(x,y,t) = \big(1-\varepsilon_t(y)\big)\, s^{\star,(n)}(x,y,t)+\varepsilon_t(y)\,s^\star_U(x,y,t), \quad \varepsilon_t(y):=\frac{\gamma\,\mu^U_t(y)}{\gamma\,\mu^U_t(y)+n\mu^{(n)}_t(y)}.
\end{equation}
In the masked case, \eqref{eq:smoothed_score} recovers exactly the label smoothing of Section~\ref{sec:entropy}.

\begin{figure}[t]
    \centering
    \includegraphics[width=\linewidth]{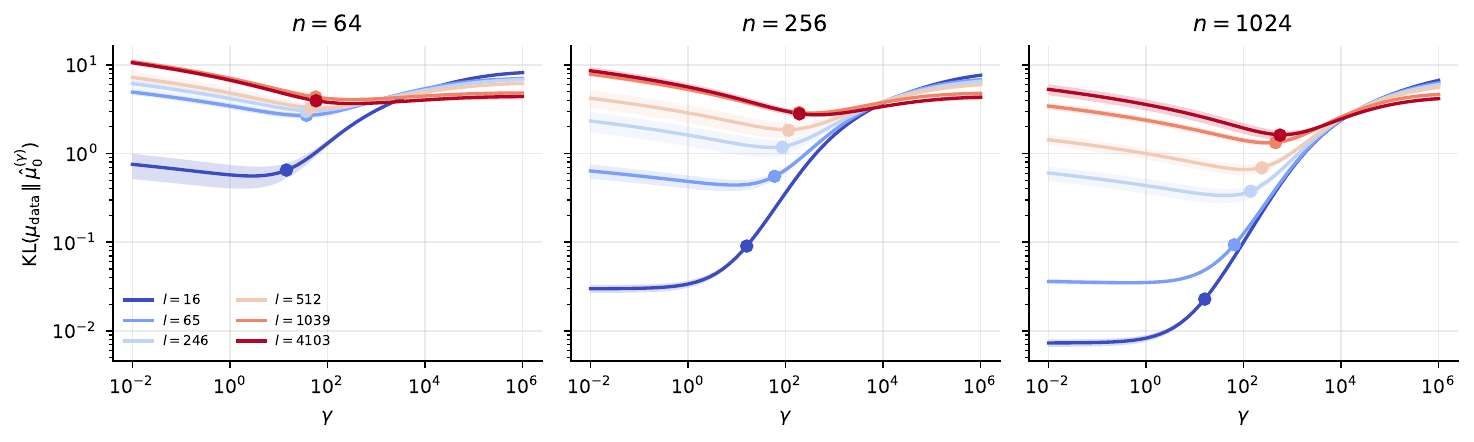}
    \caption{Label-smoothed score on restricted Markov chain dataset. $\kll(\data\mid\hat\nu)$ against $\gamma$, for $n\in\{64,256,1024\}$ for different values of $l$ (blue to red). Dots mark the adaptive choice $\gamma=D_n$.}
    \label{fig:closed_form_1}
\end{figure}

\begin{figure}[t]
    \centering
    \includegraphics[width=0.5\linewidth]{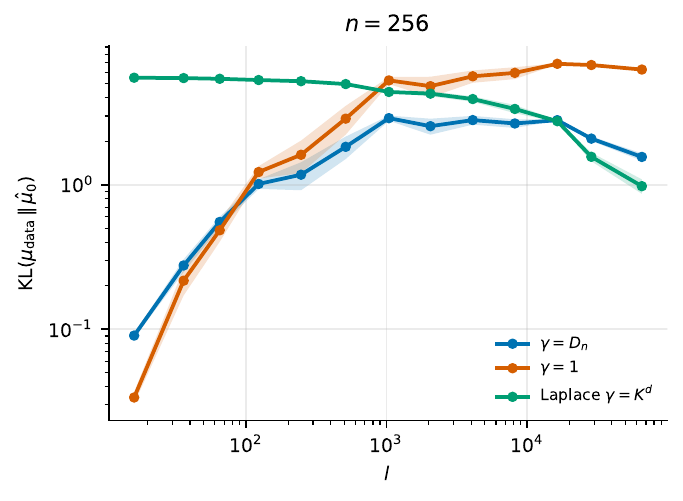}
    \caption{Label-smoothed score on restricted Markov chain dataset as in Figure \ref{fig:closed_form_1}. $\kll(\data\mid\hat\nu)$ against $l$ for different choices of $\gamma$ demonstrating sparsity-adaptive behavior of small $\gamma$.}
    \label{fig:closed_form_1}
\end{figure}

\subsection{Uniform diffusion}

We now consider the case of the uniform diffusion, for which we can obtain a neat expression of the mixed score-entropy loss as well as statistical rates for the limiting terminal distribution. In this case, $\mu^U_t=\Unif(\mathcal X_0)$ for every $t$, and so $s^\star_U \equiv 1$. Thus, the uniform score entropy loss takes the simple form,
\begin{equation*}
    \mathcal L^U_t(s)=\frac{1}{K^d}\sum_{y\in\mathcal X_0}\sum_{x\neq y}Q_t(y,x)\big(s(x,y,t)-\log s(x,y,t)-1\big).
\end{equation*}
Furthermore, $\varepsilon_t(y)=\gamma/(\gamma+nK^d\mu^{(n)}_t(y))$ and the measure $\mu^{(n)}_t(y)$ has a closed-form expression:
\begin{equation*}
    \mu^{(n)}_t(y) = \frac{K^{-d}}{n}\sum_{k=1}^n\big(1+(K-1)\alpha_t\big)^{d-H_k(y)}(1-\alpha_t)^{H_k(y)}=:w_t(y),
\end{equation*}
where $H_k(y):=\#\{j\in[d]:y_j\neq X^{(k)}_{0,j}\}$ is the Hamming distance between $y$ and the $k$-th sample.

Turning to statistical rates, Lemmas~\ref{lem:upper_bound_expectation_mask} and \ref{lem:upper_bound_probability_mask} apply without modification, since they concern only the estimator $\hat\mu_0^{(\gamma)}$. The difference lies in the sampling step. For masked diffusion, the event that all tokens are masked at time $T$ is independent of $X_0$, so that initializing the sampler at the fully masked state recovers $\hat\mu_0^{(\gamma)}$ exactly for every $T$ (Lemma~\ref{lem:sampler_mask}). Uniform diffusion admits no such state, and initializing the sampler at $\Unif(\mathcal X_0)\neq\hat\mu_T$ incurs an additional error, which is controlled in the following lemma.

\begin{lemma}[Relative entropy of the sampler output]\label{lem:sampler_uniform}
Let $\gamma> 0$, let $\hat\mu_0^{(\gamma)}$ be as in \eqref{eq:intro_smoothed_estimator} and $\hat s$ as in \eqref{eq:hat_s_uniform}, and assume $\bar\sigma_t>0$ for every
$t\in(0,T]$. Let $(\nu_u)_u$ be the solution of \eqref{sampling_kolmogorov_equation} with score $\hat s$ and
$\hat\nu:=\lim_{u\uparrow T}\nu_u$ its terminal law. If $\bar\sigma_T\ge\log(dKn)$, then for every $\data\in\mathcal P(\mathcal X_0)$, the solution exists in all $[0,T]$, with $\hat\nu=\nu_T$ and $\kll(\data|\hat\nu)\le\kll(\data|\hat\mu_0^{(\gamma)})+\frac1n$.
\end{lemma}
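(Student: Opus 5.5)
The plan is to compare the sampler with the exact reversal of the forward chain started at $\hat\mu_0:=\hat\mu_0^{(\gamma)}$. Since $\hat s$ in \eqref{eq:hat_s_uniform} is exactly the score of $\hat\mu_t=P_{0,t}\hat\mu_0$, the reverse generator $\hat Q_t$ is the true time-reversal generator of the forward chain started at $\hat\mu_0$. Hence, initialized at $\hat\mu_T$, the Kolmogorov equation \eqref{sampling_kolmogorov_equation} is solved by $u\mapsto\hat\mu_{T-u}$, and it ends at $\hat\mu_0$.

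\textbf{Well-posedness.} The uniform noise makes $\hat\mu_t(y)>0$ for every $t>0$. Moreover, $\hat\mu_0$ itself has full support because $\gamma>0$. So $\hat s$ is bounded and continuous on $[0,T]$, including at $t=0$. The ODE \eqref{sampling_kolmogorov_equation} is therefore linear with bounded integrable coefficients on the finite space $\mathcal X_0$, and it has a unique solution on all of $[0,T]$, with $\hat\nu=\nu_T$.

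\textbf{Reduction to the initial mismatch.} By linearity, $\nu_u=\widetilde P_{0,u}\,\mathrm{Unif}(\mathcal X_0)$ and $\hat\mu_{T-u}=\widetilde P_{0,u}\hat\mu_T$, where $\widetilde P$ is the propagator of the reverse generator. The data processing inequality for the Markov kernel $\widetilde P_{0,T}$ then gives
\[
\kll(\hat\mu_0\mid\hat\nu)\le\kll(\hat\mu_T\mid\mathrm{Unif}(\mathcal X_0)).
\]
Working pointwise rather than with KL, I would get a multiplicative bound. Since $\widetilde P_{0,T}$ is a positive linear map, if $\mathrm{Unif}\ge(1-\eta)\hat\mu_T$ entrywise, then $\hat\nu\ge(1-\eta)\hat\mu_0$ entrywise. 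It follows that
\[
\kll(\data\mid\hat\nu)=\kll(\data\mid\hat\mu_0)+\sum_x\data(x)\log\frac{\hat\mu_0(x)}{\hat\nu(x)}\le\kll(\data\mid\hat\mu_0)-\log(1-\eta).
\]
This is what gives the clean additive $1/n$ for arbitrary $\data$.

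\textbf{Main obstacle: mixing.} It remains to show $\hat\mu_T(y)\le K^{-d}/(1-\eta)$ for all $y$, with $-\log(1-\eta)\le 1/n$. The uniform kernel factorizes over coordinates: with $\alpha=e^{-\bar\sigma_T}$ (under the normalization implicit in $w_t$), the single-site transition probabilities are $K^{-1}(1+(K-1)\alpha)$ for staying and $K^{-1}(1-\alpha)$ for moving. Hence
\[
P_{0,T}(y,x)\le K^{-d}(1+(K-1)\alpha)^d\le K^{-d}e^{dK\alpha}.
\]
Mixing over $x\sim\hat\mu_0$ preserves this bound, so $\hat\mu_T(y)\le K^{-d}e^{dK\alpha}$. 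The condition $\bar\sigma_T\ge\log(dKn)$ gives $dK\alpha\le 1/n$, and therefore
\[
\frac{\mathrm{Unif}(y)}{\hat\mu_T(y)}\ge e^{-1/n},
\]
which yields exactly the additional $1/n$. The delicate points are checking the rate normalization of $Q_0$ (Appendix~\ref{appendix_matix_design}) so that the per-site decay is $e^{-\bar\sigma_T}$ as in the formula for $w_t$, and justifying positivity of the reverse propagator. The latter holds because $\hat Q_t$ has nonnegative off-diagonal entries, so $\widetilde P_{0,u}$ is a stochastic matrix.
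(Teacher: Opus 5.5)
Your proof is correct and takes a genuinely different route from the paper.

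The paper compares the sampler with the exact reversal of the \emph{data} process:
\begin{itemize}
\item Lemma~\ref{lem:transfer}, obtained by integrating the KL-derivative bound of Lemma~\ref{lem:girsanov}, gives $\KL(\mu_{T_0}\mid\nu_{T-T_0})\le\KL(\mu_T\mid\Unif(\mathcal X_0))+\int_{T_0}^T\mathcal L_t(\hat s)\,\dd t$.
\item Lemma~\ref{lem:B} identifies $\mathcal L_t(\hat s)$ with $-\frac{d}{dt}\KL(\mu_t\mid\hat\mu_t)$, so the integral is at most $\KL(\data\mid\hat\mu_0^{(\gamma)})$.
\item The same coordinatewise mixing estimate you use gives $\KL(\mu_T\mid\Unif(\mathcal X_0))\le dK\alpha_T\le 1/n$.
\item Lower semicontinuity of KL handles the limit $T_0\downarrow0$.
\end{itemize}

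You instead compare the sampler with the exact reversal of the \emph{smoothed} process. The sampler reproduces that process exactly because $\hat s$ is a proper score. You then push the sup-norm bound $\hat\mu_T\le e^{1/n}\,\Unif(\mathcal X_0)$ through the nonnegative reverse propagator to get $\hat\nu\ge e^{-1/n}\hat\mu_0^{(\gamma)}$ pointwise.

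Your route is shorter: it needs no KL-derivative computation, no excess-score-entropy identity, and no truncation at $T_0$. It also gives a stronger conclusion that does not involve $\data$ at all, namely $\log\bigl(\hat\mu_0^{(\gamma)}(x)/\hat\nu(x)\bigr)\le 1/n$ for every $x$.

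Your first displayed data-processing inequality is never used. On its own it could not give the result, since KL has no triangle inequality; the pointwise domination is what does the work.

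What the paper's route buys is generality. Lemma~\ref{lem:transfer} holds for an arbitrary score, with the integrated excess score entropy as the error term, so it also covers learned or clipped (improper) scores. Your argument relies essentially on $\hat s$ being the exact score of $P_{0,t}\hat\mu_0^{(\gamma)}$.
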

The initialization error thus contributes an additional term of order $1/n$. Since $k_{\mathrm{eff}}\ge1$, this term is absorbed into $k_{\mathrm{eff}}/n$ up to a universal constant. As a result, uniform diffusion attains the same bounds as masked diffusion, at the cost of a logarithmic lower bound on the time horizon that is not required in the masked case. In the following theorem, we obtain end-to-end bounds for the limiting terminal law. 

\begin{theorem}[Uniform diffusion]\label{thm:uniform}
Let $\gamma$ be a measurable map from $\mathcal{X}_0^n$ to $[1,\min\{n,K^d\}]$ and consider the uniform diffusion with $\bar\sigma_T\geq\log(dKn)$ and terminal law $\hat\nu:=\lim_{u\uparrow T}\nu_u$. Then, both statements of Theorem~\ref{thm:upper_bounds_mask} hold for some different universal constant $C$.
\end{theorem}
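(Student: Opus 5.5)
The proof combines Lemma~\ref{lem:sampler_uniform} with Lemmas~\ref{lem:upper_bound_expectation_mask} and \ref{lem:upper_bound_probability_mask}. The latter two concern only the estimator $\hat\mu_0^{(\gamma)}$ and are independent of the corruption kernel, so they apply verbatim. The steps follow the order of the two statements of Theorem~\ref{thm:upper_bounds_mask}.

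First, I check the hypotheses of Lemma~\ref{lem:sampler_uniform} for a data-dependent $\gamma$. For each realization of $S$, $\gamma(S)\in[1,\min\{n,K^d\}]$ is a fixed positive number. The uniform schedule satisfies $\bar\sigma_t>0$ for $t>0$, and we assume $\bar\sigma_T\ge\log(dKn)$. The lemma then applies pointwise in $S$ and gives, almost surely,
\[
\kll(\data\,|\,\hat\nu)\le \kll\big(\data\,\big|\,\hat\mu_0^{(\gamma(S))}\big)+\tfrac1n .
\]
The bound $1/n$ is deterministic and uniform over $\gamma>0$. For the high-probability statement, the same bound therefore holds simultaneously for every $\gamma\in[1,\min\{n,K^d\}]$, realization by realization. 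Measurability of $S\mapsto\kll(\data|\hat\nu)$ is clear because $\hat\nu$ is a measurable function of $(S,\gamma(S))$ through the ODE solution, which depends continuously on the finitely many parameters.

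Second, for the expectation bound, I take $\mathbb E_S$ of the display above and apply Lemma~\ref{lem:upper_bound_expectation_mask}. This yields the right-hand side of the first statement plus $1/n$. Since $k_{\mathrm{eff}}(\data,n)=\sum_x\min\{1,n\data(x)\}\ge1$ (take $x$ with $\data(x)$ maximal, so that $n\data(x)\ge n/K^d$; more simply, $\sum_x\min\{1,n\data(x)\}\ge\min\{1,n\sum_x\data(x)\}=1$), we have $1/n\le k_{\mathrm{eff}}/n$. The extra term is thus absorbed by replacing $C$ with $C+1$.

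Third, for the high-probability bound, I use the event $A$ of Lemma~\ref{lem:upper_bound_probability_mask}. It has $\mathbb P_S(A)\ge 1-12\delta$ and does not depend on $\gamma$. On $A$, simultaneously for all $\gamma$, I add $1/n\le k_{\mathrm{eff}}/n$ and absorb it in the same way. The remaining terms, $(1+L_\gamma)\log(1/\delta)$ and so on, are unchanged. Up to enlarging $C$, they can be written as in the second statement of Theorem~\ref{thm:upper_bounds_mask}.

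The main obstacle is the legitimacy of applying Lemma~\ref{lem:sampler_uniform} with a random $\gamma$, together with the claim $\hat\nu=\nu_T$ (existence of the solution up to $T$). Both are contained in that lemma's statement, applied conditionally on $S$. Everything else is bookkeeping of constants.
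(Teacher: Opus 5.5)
Your proposal is correct and follows the paper's proof essentially step for step. You fix $S$, apply Lemma~\ref{lem:sampler_uniform} pointwise to pick up the extra $1/n$, absorb it into $k_{\mathrm{eff}}/n$ by enlarging $C$, and then either take expectations via Lemma~\ref{lem:upper_bound_expectation_mask} or intersect with the $\gamma$-independent event $A$ of Lemma~\ref{lem:upper_bound_probability_mask}. One small slip: your first justification of $k_{\mathrm{eff}}\ge1$, via the maximal atom, fails because $n/K^d$ may be below $1$. Your second argument (subadditivity of $\min\{1,\cdot\}$) works, as does the paper's $k_{\mathrm{eff}}(\data,n)\ge k_{\mathrm{eff}}(\data,1)=1$.
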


In particular, under the hypotheses of the theorem above, the minimax rates obtained for the masked setting in Corollary~\ref{cor:sparse_mask} and Corollary~\ref{prop:minimax_mask_prob} transfer to the uniform setting also.

%% file: sections/experiments.tex
\begin{figure}[t]
    \centering
    \includegraphics[width=0.95\linewidth]{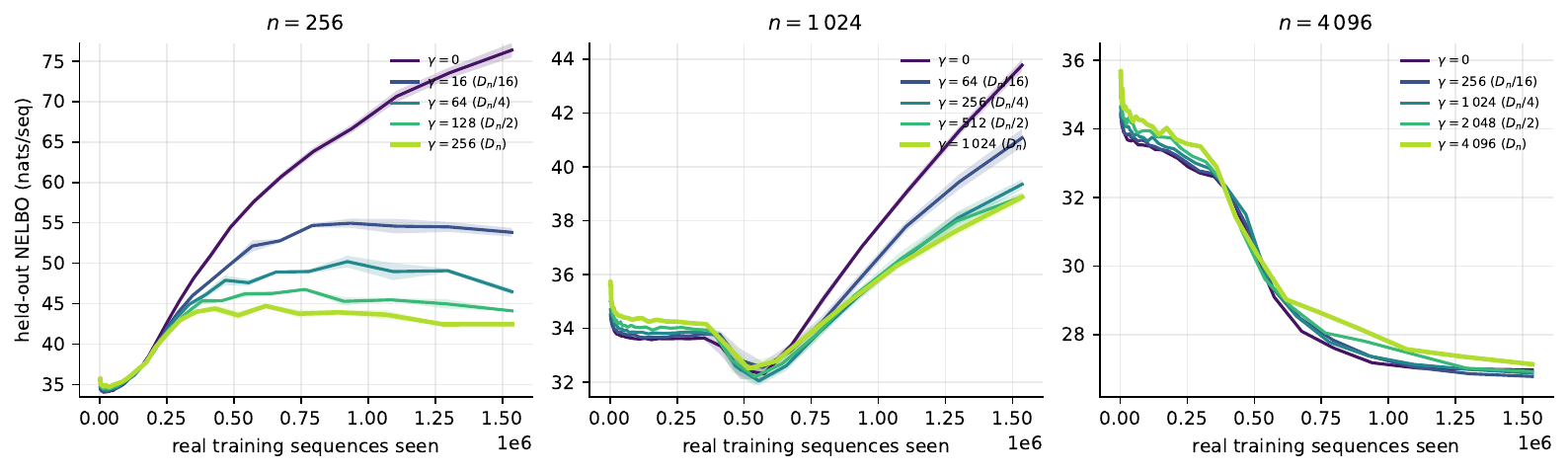}
    \caption{Transformer model trained with regularized score-entropy loss, $\mathcal{L}^M$ on the
text8 $12$-gram target ($K=27$, $d=12$). Training trajectories are plotted for different values of $\gamma$.}
    \label{fig:closed_form}
\end{figure}

\section{Experiments}\label{sec:experiments}

We now provide some experiments that empirically evaluate our theoretical results. This also allows us to inspect the potential practical utility of two objects introduced in Section~\ref{sec:entropy}: the label-smoothed estimator, which requires no training, and the mixed loss $\mathcal{L}^M_t$, which we use to regularize a score network during training. %

\paragraph{Exact label smoothing.} We first analyze the effect of label smoothing by computing the closed form estimator $\hat\psi_c$ directly from dataset counts. As targets, we take the stationary laws of sparse Markov chains on $\mathcal V=[4]$ with $d=8$, so that $K^d=65{,}536$. Restricting the allowed transitions yields support sizes $l$ ranging from $16$ to $4096$. The masked sampler returns exactly $\hat\mu_0^{(\gamma)}$ and so $\kll(\data\mid\nu)$ can be computed in closed form. Figure~\ref{fig:closed_form_1} shows that, for every $l$ and $n$, the error forms a U-shape in $\gamma$ and that the adaptive choice of $\gamma=D_n$ lands close to but not at its minimum, without any direct knowledge of $l$.

\paragraph{Learning with the mixed loss.} We now inspect the utility of the mixed score-entropy loss $L^M_t$ for training, which can be
approximated by replacing each training sequence by a uniformly random one with probability
$p = \gamma/(n+\gamma)$. As a target we take the empirical $12$-gram law of text8 ($K=27$,
$d=12$), drawing training and held-out sequences from disjoint halves of the corpus. At this
length essentially every drawn sequence is distinct ($D_n/n \ge 0.994$), so this experiment
probes the practical utility of the regulariser rather than the sparsity bound, which is vacuous when $k_{\mathrm{eff}} = n$. We anchor the grid to the
data-driven value, $\gamma \in \{0, D_n/16, D_n/4, D_n/2, D_n\}$, so that the prescription
$\gamma = D_n$ lies on the grid at every $n$, and give every arm
$\lceil 6000/(1-p) \rceil$ iterations so that all arms see the same expected number of real
training sequences.

Our results suggest that there may be some practical utility to utilizing label smoothing in the context of discrete diffusion. However, validating this will require a detailed ablation in more practical high dimensional settings, which we leave for future work.

%% file: sections/appendix.tex
\section{Background}
\subsection{Background on the Process}\label{appendix_matix_design}

Throughout, as in Section~\ref{sec:preliminaries}, we use the column-generator convention, which means that for $x\neq y$, $Q_t(x,y)\geq 0$ is the instantaneous rate of a jump from $y$ to $x$, and $Q_t(y,y)=-\sum_{x\neq y}Q_t(x,y)$. Also, we defined $P_{0,t}(x,y)=\mathbb P(X_t=x\mid X_0=y)$, so we can rewrite $\mu_t=P_{0,t}\mu_0$. Since the infinitesimal generator can be rewritten as
$Q_t=\sigma_tQ_0,$ with $Q_0$ a fixed conservative, the generators
commute across times and
\begin{equation}
\label{eq:propagator}
P_{0,t}=\exp\Bigl(\int_0^{t}Q_u\,du\Bigr)=e^{\bar\sigma_tQ_0},
\qquad \bar\sigma_t=\int_0^t\sigma_u\,du,
\end{equation}
with the semigroup property $P_{0,t_3}=e^{(\bar\sigma_{t_3}
-\bar\sigma_{t_2})Q_0}P_{0,t_2}$ for $t_2\le t_3$. 

In both processes below, the matrix $Q_0$
acts independently on the $d$ coordinates, so we can connect its sequence version $Q_0^{\mathrm{seq}}$ with the token one $Q_0^{\mathrm{tok}}$ as 
$Q_0^{\mathrm{seq}}=\sum_{i=1}^d I^{\otimes(i-1)}\otimes
Q_0^{\mathrm{tok}}\otimes I^{\otimes(d-i)}$. In particular,
$P_{0,t}^{\mathrm{seq}}=(P_{0,t}^{\mathrm{tok}})^{\otimes d}$: the kernels
factorize coordinatewise, and it suffices to compute the single-token
propagator. Finally, both token generators are of the form $Q_0^{\mathrm{tok}}
=\Pi-I$ with $\Pi$ a projection (thus it satisfies $\Pi^2=\Pi$), and $I_K$ the identity matrix $K\times K$, so
\begin{equation}
\label{eq:projection-exp}
e^{\bar\sigma_t(\Pi-I)}
=\Pi+e^{-\bar\sigma_t}(I-\Pi)
=\alpha_t\,I+(1-\alpha_t)\,\Pi,
\qquad \alpha_t:=e^{-\bar\sigma_t}.
\end{equation}

\paragraph{Uniform process.}
On the token alphabet $\mathcal V=[K]$, take
$\Pi:=\tfrac1K\mathbf1\mathbf1^\top$ and
$Q_0^{\mathrm{unif}}:=\Pi-I_K$: every token jumps to each of the other
$K-1$ values at rate $1/K$. By \eqref{eq:projection-exp} and
tensorization,
\[
P_{0,t}^{\mathrm{unif}}(x, x_0)
=\prod_{i=1}^{d}\Bigl[\alpha_t\mathbf 1\{x_i=x_{0,i}\}
+\tfrac{1-\alpha_t}{K}\Bigr].
\]
 The
off-diagonal rates of $Q_0^{\mathrm{seq}}$ equal $1/K$, so
$Q_0^{\max}=1/K$, and the total exit rate of any state is
$d(K-1)/K\le d$. The uniform law on $\mathcal V^d$ is stationary
(each column of $\Pi-I_K$ sums to zero against the uniform vector).

\paragraph{Canonical absorbing process.}
On the token alphabet $\mathcal V\cup\{\mathfrak m\}$ (size $K+1$),
take $\Pi:=e_{\mathfrak m}\mathbf1^\top$ and
$Q_0^{\mathrm{mask}}:=\Pi-I_{K+1}$: every real token jumps to
$\mathfrak m$ at unit rate, and $\mathfrak m$ is absorbing. By
\eqref{eq:projection-exp} and tensorization,
\[
P_{0,t}^{\mathrm{mask}}(x, x_0)
=\prod_{i=1}^{d}\Bigl[\alpha_t\mathbf 1\{x_i=x_{0,i}\}
+(1-\alpha_t)\mathbf 1\{x_i=\mathfrak m\}\Bigr].
\]
The
point mass on the fully masked sequence is stationary. The total exit
rate of a state $y$ equals its number of unmasked coordinates, so
the total exit rate of any state is less or equal than $d$. Finally, for a state $y$ with masked coordinate
$i\in M(y)$ and any $a\in\mathcal V$, the sequence-level rate from
$(y_{-i},a)$ to $y$ is the token-level rate $a\to\mathfrak m$, namely
$Q_0\bigl(y,(y_{-i},a)\bigr)=1$.
\subsection{Useful results}\label{appendix_useful_results}

We start by giving two results that are going to be useful in the following, when we are going to need lower bounds on the class of sparse distributions.

\begin{proposition}[\citep{Mourtada_2026}, Section 4.1, Proposition 1]\label{prop:mourtada_lower_bound}
Let $n$ be the number of samples $(X^{(1)},\ldots,X^{(n)})$, and $n, |\mathcal{X}| \geq 2$. Let us assume $P\in \mathcal{P}_l(\mathcal{X}).$
For any $1 \leq l \leq \min (n, |\mathcal{X}|/55)$ and any estimator $\hat P_n = \Phi (X^{(1)}, \dots, X^{(n)})$, where $\Phi$ is an estimator map $\Phi:[|\mathcal{X}|]^n\to \mathcal{P}(\mathcal{X})$, 
there exists a distribution $P \in \mathcal{P}_{l}$ such that
  \begin{equation}
    \label{eq:sparse-lower-high-probability}
    \mathbb{P}_P \bigg( \kll(P\;|\;\hat{P_n}) \geq \frac{l \log (e |\mathcal{X}|/l)}{300\, n} \bigg)
    \geq 1 - 3 \exp \Big( - \frac{l}{35} \Big) %
    \,.
  \end{equation}
\end{proposition}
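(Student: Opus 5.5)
This proposition is quoted from \citet{Mourtada_2026}, so it can be used as a black box. If one wants a self-contained argument, I would follow a Bayesian, missing-mass argument. The idea is to place a prior on $\mathcal P_l(\mathcal X)$ under which, with probability $1-3e^{-l/35}$, a constant fraction of the support is unobserved. The identity of those unobserved atoms should then be essentially uniform given the sample. Any estimator must spread its mass over many candidates and so cannot put much mass on the true missing atoms. This is the source of the factor $\log(e|\mathcal X|/l)$.

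\emph{Construction.} Fix an anchor $x_0$. Partition $\mathcal X\setminus\{x_0\}$ into $l-1$ blocks $B_1,\dots,B_{l-1}$, each of size $\asymp |\mathcal X|/l$; this uses $l\le|\mathcal X|/55$. Draw one atom $Z_j$ uniformly from each block, independently. Let $P$ give mass $1/n$ to each $Z_j$ and the remaining mass $1-(l-1)/n\ge 0$ to $x_0$; this uses $l\le n$. The steps are then as follows.
\begin{enumerate}
\item Each $Z_j$ is unobserved with probability $(1-1/n)^n\ge 1/4$. The indicators of being unobserved are negatively associated, so a Chernoff bound shows that the number $m$ of unobserved atoms satisfies $m\ge l/10$, say, with probability at least $1-e^{-l/35}$, after adjusting constants.
\item Conditionally on the sample, each unobserved $Z_j$ is uniform on the unobserved part of $B_j$, which has size $\ge |B_j|-n$.
\item Let $A$ be the set of unobserved atoms and apply the data-processing (log-sum) inequality to the two-set partition $\{A,A^c\}$. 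This gives
\[
\KL(P\mid\hat P_n)\ \ge\ P(A)\log\frac{P(A)}{\hat P_n(A)}-P(A)+\hat P_n(A).
\]
\item Bound $\hat P_n(A)$: its conditional expectation is $\sum_j \hat P_n(B_j)/|B_j|\lesssim l/|\mathcal X|$. Markov's inequality then keeps $\hat P_n(A)$ below a constant multiple of this with good probability, and a union bound with step 1 gives the total failure probability $3e^{-l/35}$.
\end{enumerate}

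\emph{Main obstacle.} The weak point is the logarithmic factor in step 3. With per-atom mass $1/n$ we have $P(A)=m/n$, so the bound becomes
\[
\frac{m}{n}\log\frac{m|\mathcal X|}{nl}\ \asymp\ \frac{l}{n}\log\frac{|\mathcal X|}{n},
\]
which has $n$ rather than $l$ inside the logarithm. To reach $\log(e|\mathcal X|/l)$, I would first try making the blocks finer, so that each block has size $|\mathcal X|/l$ and $\hat P_n(A)\lesssim m/|\mathcal X|\cdot l$, and take per-atom mass $\asymp 1/n$. I would then balance the logarithm against the mass $\hat P_n$ must reserve for the observed atoms: it must put mass $\gtrsim 1/n$ on each of them, or else pay KL from the observed atoms. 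If that balancing fails, the fallback is to reduce to Mourtada's original Fano/Assouad construction over a Varshamov--Gilbert packing of $l$-subsets, of cardinality $\exp(c\,l\log(e|\mathcal X|/l))$, and convert it to the high-probability form above.
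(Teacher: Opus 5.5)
The paper gives no proof of this proposition; it imports it verbatim from \citet{Mourtada_2026}. Your opening sentence is therefore exactly the paper's route. The self-contained argument you then sketch does not establish the statement, however, and the obstacle you flag is a genuine gap, not a matter of tuning.

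\textbf{The logarithm.} Your step~3 inequality is just $\KL(P\mid\hat P_n)\ge D(P(A),\hat P_n(A))$. It never charges $\hat P_n$ for the mass it puts on the $P$-null points of the unobserved blocks. So an estimator placing a large mass $\beta:=\hat P_n(U)$ on $U:=\bigcup_{j\text{ unobserved}}B_j$ is not penalized, which is why you only get $\log(|\mathcal X|/n)$ (vacuous once $n\gtrsim|\mathcal X|$).

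Your remedies do not fix this. The blocks already have size $\asymp|\mathcal X|/l$. Forcing $\hat P_n$ to reserve mass for the observed atoms is only a second-order constraint, because $\hat P_n$ can borrow from the anchor $x_0$ at cost $\approx(\beta-m/n)^2/2$. Balancing then gives $\beta\approx\sqrt{m/n}$ and only about $\frac mn\log\frac{|\mathcal X|}{\sqrt{nl}}$.

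The missing idea is a first-order charge. Write $\KL(P\mid\hat P_n)=\sum_x D(P(x),\hat P_n(x))$ with $D(0,b)=b$, i.e.\ apply the log-sum inequality to the three-set partition $\{A,\,U\setminus A,\,U^c\}$. This gives $\KL(P\mid\hat P_n)\ge D(P(A),\hat P_n(A))+\hat P_n(U\setminus A)$. On a good event (see below), $\hat P_n(A)\lesssim\beta l/|\mathcal X|$, hence $\hat P_n(U\setminus A)\gtrsim\beta$. Minimizing $a\log\frac{a|\mathcal X|}{C\beta l}-a+c\beta$ over $\beta$, with minimum at $\beta\asymp a:=P(A)=m/n$, yields $\frac mn\log\frac{c|\mathcal X|}{Cl}$. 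This is of order $\frac ln\log\frac{e|\mathcal X|}{l}$ when $m\gtrsim l$ and $|\mathcal X|/l\ge55$.

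\textbf{The probability.} Markov's inequality in step~4 controls $\hat P_n(A)$ only with constant conditional probability. No union bound can then produce failure probability $3e^{-l/35}$, which is the whole content of this high-probability statement.

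Instead, use that given the sample, the unobserved $Z_j$ are independent and uniform on all of $B_j$. No sample point lies in $B_j$, so the bound $|B_j|-n$ in step~2 is unnecessary, and it is useless anyway since it is negative once $n>|\mathcal X|/l$. Define the heavy set $G_j:=\{z\in B_j:\hat P_n(z)>4\hat P_n(B_j)/|B_j|\}$, which satisfies $|G_j|<|B_j|/4$. By Hoeffding, at least half of the $m$ missing atoms avoid their $G_j$ with conditional probability $\ge1-e^{-m/8}$. For those atoms $A'$, one has $\hat P_n(Z_j)\le4\hat P_n(B_j)/|B_j|\le\hat P_n(B_j)/2$, since $|B_j|\ge55$.

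Apply the three-set argument to $A'$ and the union of the blocks containing it, which has no other $P$-mass. Combined with your step~1, this gives the statement up to the values of the numerical constants. Matching $300$ and $35$ exactly requires more careful bookkeeping.
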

\begin{corollary}[\citep{Mourtada_2026}, Section 4.1, Corollary 1]\label{cor:mourtada_lower_bound}
Let us assume $P\in \mathcal{P}_l(\mathcal{X}).$Let $n$ be the number of samples $(X^{(1)},\ldots,X^{(n)})$, and $n, |\mathcal{X}| \geq 110$, $l \in \{ 2, \dots, \min(n, |\mathcal{X}|/55) \}$ and $\delta \in (e^{-n}, e^{-2})$.
For any estimator confidence-dependent $\Phi = \Phi_{n,\delta} : [|\mathcal{X}|]^n \to \mathcal{P}(\mathcal{X})$, there exists a distribution $P \in \mathcal{P}_l(\mathcal{X})$ such that, letting $\hat P_n = \Phi (X_1, \dots, X_n)$, we have
  \begin{equation}\label{eq:minimax-sparse-lower}
    \mathbb{P}_P \bigg( \kll(P|\hat P_n) \geq \frac{l \log (e |\mathcal{X}|/l) + \log (|\mathcal{X}|) \log (1/\delta)}{320\, n} \bigg)
    \geq \delta
    \, .
\end{equation}
\end{corollary}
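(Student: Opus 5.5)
This corollary is a cited result of \citet{Mourtada_2026}, so within this paper it is not re-proved: the lower bounds of Proposition~\ref{prop:mourtada_lower_bound} and Corollary~\ref{cor:mourtada_lower_bound} are imported from that reference. If one were to reconstruct the argument, the plan combines the two sources of difficulty separately and takes a maximum, since $\max\{a,b\}\ge (a+b)/2$.

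\emph{The first term, $l\log(e|\mathcal X|/l)/n$.} This comes directly from Proposition~\ref{prop:mourtada_lower_bound}. Its success probability $1-3e^{-l/35}$ is at least $\delta$ only when $l$ is large. For small $l$, with $l\ge2$, I would instead use a two-point construction on a pair of atoms. Take $P=(1-1/n)\delta_{x_1}+(1/n)\delta_{x_2}$, with $x_2$ ranging over many candidates. With constant probability $x_2$ is unobserved, so the estimator cannot place mass $\gtrsim 1/|\mathcal X|$ on most candidates. This yields a KL of order $\log|\mathcal X|/n\gtrsim l\log(e|\mathcal X|/l)/n$ for bounded $l$, with probability at least $e^{-2}\ge\delta$.

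\emph{The second term, $\log|\mathcal X|\log(1/\delta)/n$.} Here I would take $P=(1-p)\delta_{x_1}+p\,\delta_{x_2}$ with $p\asymp\log(1/\delta)/n$, so that $p<1$ since $\delta>e^{-n}$, and with $x_2$ drawn uniformly from the remaining points. The event that all samples equal $x_1$ has probability $(1-p)^n\ge\delta$ after tuning constants. On this event the estimator output is a fixed distribution $\hat P$, and by averaging over $x_2$ some choice satisfies $\hat P(x_2)\le 2/|\mathcal X|$. Then
\[
\kll(P\mid\hat P)\ge p\log\frac{p|\mathcal X|}{2}-O(p),
\]
which is of order $\log(1/\delta)\log|\mathcal X|/n$ once $|\mathcal X|\ge110$. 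The condition $l\ge2$ keeps this $P$ inside $\mathcal P_l$.

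\emph{Combining.} Whichever term dominates, the corresponding construction gives the bound with probability at least $\delta$. The constant $320$ absorbs the factor $2$ from the maximum together with the constants from each construction.

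The main obstacle is the uniform-in-$l$ control in the first term. Proposition~\ref{prop:mourtada_lower_bound} requires a Fano/Assouad-type packing of $l$-sparse distributions with masses of order $1/n$, and that is exactly where the $\log(e|\mathcal X|/l)$ factor arises.
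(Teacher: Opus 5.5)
\textbf{Verdict: genuine gap.} You are right that the paper does not prove this corollary; it is imported from \citet{Mourtada_2026}. Your reconstruction therefore has to stand on its own, and it fails at the averaging step of both two-point constructions. Using only $\sum_{x\neq x_1}\hat P(x)\le 1$ to find $x_2$ with $\hat P(x_2)\le 2/|\mathcal X|$ gives $\kll(P\mid\hat P)\ge p\log\frac{p|\mathcal X|}{2}-O(p)$. The requirement $(1-p)^n\ge\delta$ caps $p$ at order $\log(1/\delta)/n$. So this bound is of order $p\log|\mathcal X|$ only when $p|\mathcal X|$ is polynomially large in $|\mathcal X|$, i.e.\ when $n\lesssim|\mathcal X|^{1-c}\log(1/\delta)$. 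The hypotheses impose no such restriction; they only require $l\le n$ and $\delta>e^{-n}$. For the admissible choice $|\mathcal X|=110$, $l=2$, $n=10^6$, $\delta=e^{-3}$, your bound is negative, while the target is about $0.075/n$. Your small-$l$ construction with $p=1/n$ fails the same way: its bound $\frac1n\log\frac{|\mathcal X|}{2n}$ is negative once $n>|\mathcal X|/2$. So the claim that the bound is ``of order $\log(1/\delta)\log|\mathcal X|/n$ once $|\mathcal X|\ge110$'' is false in general.

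The missing idea is to play $\delta_{x_1}\in\mathcal P_l$ against the two-point law. Both hypotheses produce the same output $\hat P:=\Phi(x_1,\dots,x_1)$ on the all-$x_1$ sample. Set $\beta:=\log|\mathcal X|\log(1/\delta)/n$ and $m:=1-\hat P(x_1)$.
\begin{itemize}
\item If $m\ge c\beta$, then under $P=\delta_{x_1}$ the sample is almost surely $(x_1,\dots,x_1)$. Hence $\kll(P\mid\hat P)=-\log(1-m)\ge c\beta$ with probability one.
\item Otherwise some $x_2\neq x_1$ has $\hat P(x_2)\le c\beta/(|\mathcal X|-1)$. Take $p=\log(1/\delta)/(2n)$. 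On the all-$x_1$ event, which has probability $(1-p)^n\ge\delta$, you get $\kll(P\mid\hat P)\ge p\log\frac{|\mathcal X|-1}{2c\log|\mathcal X|}-p\gtrsim\beta$ for a small absolute $c$, uniformly in $n$.
\end{itemize}
This single argument also replaces your separate small-$l$ construction. For $l\le 43$, the range where $1-3e^{-l/35}$ may fall below $\delta$, one has $l\log(e|\mathcal X|/l)/n\le 43\log|\mathcal X|/n\le 22\beta$. Proposition~\ref{prop:mourtada_lower_bound} then handles $l\ge 44$.

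Finally, applying $\max\{a,b\}\ge(a+b)/2$ with the constant $300$ of Proposition~\ref{prop:mourtada_lower_bound} yields only $600$. Recovering the stated $320$ needs a finer case split than the one you describe.
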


We list here other results that are going to be useful. In the following for $a\ge0$ and $b>0$ we set, with the convention $0\log 0:=0,$
\[
D(a,b):=a\log\frac ab-a+b,
\]
so that $ D(0,b)=b .$
Using $\phi$, one has $D(a,b)=\phi(a)-\phi(b)-\phi'(b)(a-b)$, i.e.\ $D$ is the Bregman divergence of $\phi$. In particular $D(a,b)=b\,h(a/b)$ with $h(u)=u\log u-u+1\ge0$, hence $D\ge0$ with equality iff $a=b$; additionally, $D$ is jointly convex and positively homogeneous, $D(ca,cb)=cD(a,b)$ for $c\ge0$.

\begin{lemma}[Relative entropy along two Markov flows]\label{lem:girsanov}
Let $\tau>0$ and let $u\mapsto A_u,B_u$ measurable maps from $[0,\tau]$ to $\mathbb R^{|\mathcal{X}|\times|\mathcal X|}.$ Called  $M_u:=\max_{x,y}\big(|A_u(x,y)|+|B_u(x,y)|\big)$, we assume $\int_0^\tau M_u\dd u<\infty$.
 Moreover, we take $A_u$ and $B_u$ as conservative generator in the column convention of Section~\ref{sec:preliminaries}.
 
 Now, let $(p_u)_{u\in[0,\tau]}$, $(q_u)_{u\in[0,\tau]}$ be the absolutely continuous solutions of $\partial_u p_u=A_up_u$, $\partial_u q_u=B_uq_u$ with $p_0,q_0\in\mathcal P(\mathcal X)$ and $q_0>0$ on $\mathcal X$, and assume that the support $S$ of $p_u$ does not depend on $u\in[0,\tau]$. Then, $q_u>0$ on $\mathcal X$ for every $u$, the map $u\mapsto\kll(p_u|q_u)$ is absolutely continuous on $[0,\tau]$, and, writing $r_u:=p_u/q_u$, for a.e.\ $u$
\begin{equation}\label{eq:kl_derivative}
\frac{d}{du}\kll(p_u|q_u)=\sum_{y\in\mathcal X}\sum_{x\neq y}\Big[A_u(x,y)\,p_u(y)\log\frac{r_u(x)}{r_u(y)}-B_u(x,y)\,q_u(y)\big(r_u(x)-r_u(y)\big)\Big],
\end{equation}
where above if a term satisfies $A_u(x,y)p_u(y)=0,$ then the whole term is substitute by $0$ (for a.e.\ $u$ this covers every pair with $x\notin S$ or $y\notin S$). In particular, with $D(a,0):=+\infty$ for $a>0$ and $D(0,0):=0$, for a.e.\ $u$
\begin{equation}\label{eq:kl_derivative_bound}
\frac{d}{du}\kll(p_u|q_u)\le\sum_{y\in\mathcal X}p_u(y)\sum_{x\neq y}D\big(A_u(x,y),B_u(x,y)\big).
\end{equation}
\end{lemma}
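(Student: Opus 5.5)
The plan is to differentiate $\kll(p_u|q_u)=\sum_{x\in S}p_u(x)\log r_u(x)$ directly, after first establishing positivity of $q_u$ and absolute continuity.

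\textbf{Step 1: positivity of $q_u$.} Write $B_u=B_u^{\mathrm{off}}-\Delta_u$, where $\Delta_u$ is the diagonal of exit rates. Since $|\Delta_u(y,y)|\le |\mathcal X|M_u$ is integrable, the variation-of-constants formula gives
\[
q_u(x)\ge q_0(x)\exp\Big(-\int_0^u|B_v(x,x)|\dd v\Big)>0.
\]
This follows from $\partial_u q_u(x)\ge B_u(x,x)q_u(x)$, which holds because the off-diagonal entries of $B_u$ are nonnegative.

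\textbf{Step 2: absolute continuity.} Continuity on the compact interval $[0,\tau]$ gives $\inf_{u,x}q_u(x)>0$. Hence $\log q_u(x)$ is absolutely continuous for every $x$. For $x\in S$, $p_u(x)>0$ on $[0,\tau]$, so it is bounded below by a positive constant and $\log p_u(x)$ is also absolutely continuous. For $x\notin S$ the term $p_u(x)\log r_u(x)$ vanishes identically. It follows that $u\mapsto\kll(p_u|q_u)$ is a finite sum of absolutely continuous functions.

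\textbf{Step 3: the derivative formula.} For a.e. $u$,
\[
\frac{d}{du}\kll(p_u|q_u)=\sum_{x\in S}\dot p_u(x)\big(\log r_u(x)+1\big)-\sum_{x\in S}p_u(x)\frac{\dot q_u(x)}{q_u(x)}.
\]
The constant $1$ drops out, because $\sum_{x\in S}\dot p_u(x)=\frac{d}{du}p_u(S)=0$. Substitute $\dot p_u=A_up_u$ and $\dot q_u=B_uq_u$ and use conservativity to rewrite each generator as off-diagonal differences: $\sum_x (A_u p)(x)f(x)=\sum_y p(y)\sum_{x\ne y}A_u(x,y)(f(x)-f(y))$. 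With $f=\log r_u$ this yields the first term of \eqref{eq:kl_derivative}. For the second term, $\sum_x r_u(x)(B_uq_u)(x)=\sum_y q_u(y)\sum_{x\neq y}B_u(x,y)(r_u(x)-r_u(y))$; this uses $r_u=0$ off $S$, so the sum may be extended over all $x$.

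The main obstacle is bookkeeping for pairs with $x\notin S$ or $y\notin S$. Since $p_u(x)\equiv0$ for $x\notin S$, we have $\dot p_u(x)=0$ a.e. Therefore $\sum_y A_u(x,y)p_u(y)=0$, and because every summand is nonnegative, each $A_u(x,y)p_u(y)=0$ for $y\ne x$. This justifies replacing those terms (where $\log r_u(x)=-\infty$) by $0$, as the convention in the statement prescribes.

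\textbf{Step 4: the bound \eqref{eq:kl_derivative_bound}.} Fix a pair $x\ne y$ with $y\in S$ and set $a=A_u(x,y)$, $b=B_u(x,y)$, $t=r_u(x)/r_u(y)$. Using $q_u(y)r_u(y)=p_u(y)$, the summand equals
\[
p_u(y)\big[a\log t-b\,t+b\big].
\]
Maximizing over $t>0$ gives $\sup_t(a\log t-bt)=a\log(a/b)-a$, so the summand is at most $p_u(y)D(a,b)$, with $D(a,0)=+\infty$ covering the case $a>0=b$.

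If instead $x\notin S$ and $y\in S$, then $a\,p_u(y)=0$ forces $a=0$. The $B$-term is $-b\,q_u(y)(0-r_u(y))=b\,p_u(y)=p_u(y)D(0,b)$, consistent with the bound.

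If $y\notin S$, then $p_u(y)=0$ and $r_u(y)=0$. The remaining $B$-term is $-b\,q_u(y)r_u(x)\le0$, which is at most $0=p_u(y)D(\cdot,\cdot)$. This completes the inequality.
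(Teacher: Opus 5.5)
Your proposal is correct and follows essentially the same route as the paper. Both arguments use Gr\"onwall-type positivity of $q_u$, absolute continuity from lower bounds on $S$, and the identity $A_u(x,y)p_u(y)=0$ for $x\notin S$ to justify the convention. Both then split on $y\in S$ versus $y\notin S$ and use the pointwise inequality $a\log t-bt+b\le D(a,b)$, which the paper phrases as $\log w\le w-1$. One point to make explicit: the differential inequality in your Step~1 already uses $q_u\ge0$. The paper takes this from the standard fact that these Kolmogorov flows preserve nonnegativity.
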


\begin{proof}

We start by showing the positivity of $q$. Since  solutions of Kolmogorov equations with bounded conservative generators stay in $\mathcal P(\mathcal X)$, we deduce immediately that $q_u\ge0$ and, by the column convention, $\partial_u q_u(x)\ge-(|\mathcal X|-1)M_u\,q_u(x)$; using Gr\"onwall's lemma for integrable coefficients, we get $q_u(x)\ge q_0(x)\exp\big(-(|\mathcal X|-1)\int_0^uM_z\,\mathrm d z\big)>0$.

We focus now on the absolute continuity. On the support $S=\supp(p_u)$ the functions $u\mapsto p_u(x)$ are continuous and positive, hence bounded away from $0$ on $[0,\tau]$, and similarly for $q_u(x)$. Thus 
\[
\kll(p_u|q_u)=\sum_{x\in S}[p_u(x)\log p_u(x)-p_u(x)\log q_u(x)]
\]
is a finite sum of products of absolutely continuous functions with smooth functions of absolutely continuous functions taking values in compact subsets of $(0,\infty)$; therefore, it is absolutely continuous and can be differentiated term by term for a.e.\ $u$.

For a.e.\ $u$ and every $x\notin S$, 
\[0=\partial_u p_u(x)=\sum_{y\neq x}A_u(x,y)p_u(y)
\]
is a sum of nonnegative terms, since $A,p_u\geq 0,$ thus
\begin{equation}\label{eq:K2}
A_u(x,y)\,p_u(y)=0
\end{equation}
whenever $x\notin S$ for a.e. $u$,
and $\sum_{x\in S}\partial_u p_u(x)=\partial_u\sum_xp_u(x)=0$. If we omit the index $u$ now, and using the two facts above and $r=0$ outside of $S$,
\begin{equation}\label{tag_1}
\frac{d}{du}\kll(p|q)=\sum_{x\in S}\partial_u p(x)\log r(x)-\sum_{x\in S}r(x)\partial_u q(x).
\end{equation}
Thanks to the the column convention, 
\[
\partial_u p(x)=\sum_{y\neq x}A(x,y)p(y)-p(x)\sum_{y\neq x}A(y,x).
\]
Hence, the first term in \eqref{tag_1} reads
\[
\sum_{x\in S}\partial p(x)\log r(x)=\sum_{x\in S}\sum_{y\neq x}A(x,y)p(y)\log r(x)-\sum_{x\in S}p(x)\sum_{y\neq x}A(y,x)\log r(x).
\]
In the second double sum only $y\in S$ contribute, by \eqref{eq:K2}.  Exchanging the names of $x$ and $y$ there, and extending the first double sum to all $x$ by \eqref{eq:K2} with the stated convention, we obtain $\sum_{y\in S}p(y)\sum_{x\neq y}A(x,y)[\log r(x)-\log r(y)]$. Similarly, using $r(x)=0$ for $x\notin S$ and $q(y)r(y)=p(y)$, so in particular
\begin{align*}
    \sum_{x\in S}r(x)\partial_u q(x)&=\sum_{y}\sum_{x\neq y}B(x,y)q(y)r(x)-\sum_{y}p(y)\sum_{x\neq y}B(x,y)\\
    &=\sum_y\sum_{x\neq y}B(x,y)q(y)\big(r(x)-r(y)\big).
\end{align*}
This proves \eqref{eq:kl_derivative}.

Finally, to prove the upper bound, we split the sum in \eqref{eq:kl_derivative} in $y\notin S$ or $y\in S$. If $y\notin S$ then $p(y)=r(y)=0$. Then, the term inside of the double sum in \eqref{eq:kl_derivative} is equal to 
\[
-\sum_{x\neq y}B(x,y)q(y)r(x)\le0.\]
On the other side, if $y\in S$, factor $q(y)=\frac{p(y)}{r(y)}$ and observe that $\frac{r(x)}{r(y)}\in[0,\infty)$: the term inside the double sum in \eqref{eq:kl_derivative} is now 
\[
p(y)\sum_{x\neq y}\Big[A\log \frac{r(x)}{r(y)}-B\frac{r(x)}{r(y)}+B\Big]
\]
where, to simplify notations, we wrote $A=A(x,y)$, $B=B(x,y)$. We show now that $$A\log \frac{r(x)}{r(y)}-B\frac{r(x)}{r(y)}+B\le A\log\frac AB-A+B=D(A,B).$$ In fact, if $A,B>0$ and $\frac{r(x)}{r(y)}>0$ the quantity above is $\log w\le w-1$ with $w=\frac{r(x)}{r(y)}\frac{B}{A}$, multiplied by $A$; on the other side, if $A=0$ both sides equal $B$; additionally, if $B=0<A$ the right side is $+\infty$; if $\frac{r(x)}{r(y)}=0<A$ the left side is $-\infty$ (a case excluded for a.e.\ $u$ by \eqref{eq:K2} anyway). To finish, we sum everything and obtain \eqref{eq:kl_derivative_bound}.
\end{proof}

To finish, we prove that for both kernels, the support of the distribution doesn't change in $(0,T].$ In particular, it is the whole space in the uniform case and the states $y$ such that there exists a state $x$ in the support of the initial law $\rho$ such that $y$ is obtained from $x$ masking some coordinates.
\begin{lemma}[Support of the forward marginals]\label{lem:support}
For both kernels, let us take $\rho\in\mathcal P(\mathcal X_0)$ as the initial distribution of the process $\rho_t=P_{0,t}\rho$. Then, 
\[
\supp(\rho_{t_1})=\supp(\rho_{t_2})\qquad \forall t_1,t_2 \in(0,T].
\]
In particular, $\supp(\rho_{t})=\mathcal X_0$ for all $t\in(0,T]$ for the uniform kernel, and $\supp(\rho_t)=\{y\in\mathcal X:\rho(X_{A(y)}=y_{A(y)})>0\}$ for the masked kernel and for every $t\in(0,T]$. Moreover, $\rho_t>0$ on $\mathcal X$ for every $t\in(0,T]$ if $\rho>0$ on $\mathcal X_0$, and finally on $[T_0,T]$, where $T_0>0$, the positive entries of $P_{0,t}$ are bounded below by a positive constant.
\end{lemma}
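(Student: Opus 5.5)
The proof splits into the uniform case and the masked case; in each, I first show the support is constant in $t$ and then identify it. Since $\rho\in\mathcal P(\mathcal X_0)$ and $\rho_t=P_{0,t}\rho$, we have
\[
\rho_t(y)=\sum_{x_0}P_{0,t}(y,x_0)\rho(x_0).
\]
So $y\in\supp(\rho_t)$ exactly when $P_{0,t}(y,x_0)>0$ for some $x_0\in\supp(\rho)$. The key point is that, by the tensorized formulas of Appendix~\ref{appendix_matix_design}, the \emph{sign pattern} of $P_{0,t}$ does not depend on $t\in(0,T]$. This follows because $\alpha_t=e^{-\bar\sigma_t}\in(0,1)$ for every $t>0$, using the standing assumption $0<\bar\sigma_t<\infty$.

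\emph{Uniform kernel.} Each factor satisfies $\alpha_t\mathbf 1\{x_i=x_{0,i}\}+(1-\alpha_t)/K\ge(1-\alpha_t)/K>0$. Hence $P_{0,t}(y,x_0)>0$ for all $y,x_0$, and so $\rho_t(y)\ge(1-\alpha_t)^dK^{-d}>0$ for every $y\in\mathcal X_0$. This gives $\supp(\rho_t)=\mathcal X_0$, which is independent of $t$, and in particular $\rho_t>0$.

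\emph{Masked kernel.} The $i$-th factor $\alpha_t\mathbf 1\{y_i=x_{0,i}\}+(1-\alpha_t)\mathbf 1\{y_i=\mathfrak m\}$ is positive exactly when $y_i=\mathfrak m$ or $y_i=x_{0,i}$; the two events are exclusive since $x_{0,i}\neq\mathfrak m$. Writing $A(y)$ for the unmasked coordinates of $y$, we get $P_{0,t}(y,x_0)>0$ iff $x_{0,A(y)}=y_{A(y)}$. Therefore $\rho_t(y)>0$ iff $\rho(X_{A(y)}=y_{A(y)})>0$, which is the claimed $t$-independent description. For the statement ``$\rho>0$ on $\mathcal X_0$ implies $\rho_t>0$ on $\mathcal X$'': in the masked case every $y$ has a completion $x_0\in\mathcal X_0$ of its masked entries, so $\rho(X_{A(y)}=y_{A(y)})\ge\rho(x_0)>0$. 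In the uniform case this is already covered, since there $\mathcal X=\mathcal X_0$.

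For the last claim, fix $T_0>0$. Since $\bar\sigma$ is nondecreasing and finite on $[T_0,T]$, $\alpha_t$ lies in $[\alpha_T,\alpha_{T_0}]$, a compact subinterval of $(0,1)$. Each positive entry of $P_{0,t}$ is a product of $d$ factors, each equal to one of $\alpha_t+(1-\alpha_t)/K$, $(1-\alpha_t)/K$, $\alpha_t$, or $1-\alpha_t$. Every positive entry is therefore at least
\[
\big(\min\{\alpha_T,(1-\alpha_{T_0})/K\}\big)^d>0.
\]

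I expect no real obstacle. The only care needed is to use $\bar\sigma_t>0$ (so that $\alpha_t<1$) and $\bar\sigma_t<\infty$ (so that $\alpha_t>0$), and, in the masked case, to remember that $x_{0,i}\neq\mathfrak m$, which makes the two indicator terms mutually exclusive.
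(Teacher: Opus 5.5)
Your proof is correct and takes essentially the same route as the paper: you read off the sign pattern and the size of the entries directly from the tensorized formulas for $P^{\mathrm{unif}}_{0,t}$ and $P^{\mathrm{mask}}_{0,t}$, using $\alpha_t\in(0,1)$ for $t>0$. Your explicit constant $\big(\min\{\alpha_T,(1-\alpha_{T_0})/K\}\big)^d$ is slightly more careful than the paper's argument, which records only $\alpha_t\le\alpha_{T_0}$ and leaves implicit the lower bound $\alpha_t\ge\alpha_T>0$ (from $\bar\sigma_T<\infty$) that the masked-case factors $\alpha_t^{|A(y)|}$ require.
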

\begin{proof}
The statement follows immediately from the explicit form of the kernels: in fact in the uniform case, for all $y,x_0$,
$$P^{\mathrm{unif}}_{0,t}(y,x_0)\ge((1-\alpha_t)/K)^d>0;$$
on the other side, in the masked case $$P^{\mathrm{mask}}_{0,t}(y,x_0)=\alpha_t^{|A(y)|}(1-\alpha_t)^{d-|A(y)|}\mathbf 1\{x_{0,A(y)}=y_{A(y)}\}.$$
We observe that above $\alpha_t\in(0,1)$ for $t>0$ and $\alpha_t\le\alpha_{T_0}$, $1-\alpha_t\ge1-\alpha_{T_0}$ on $[T_0,T]$. This concludes.
\end{proof}

\section{Proofs of Section~\ref{sec:statitical_part}}\label{appendix_statistical_part}
\subsection{Proofs of Section~\ref{sec:entropy}}
\begin{proof}[Proof of Lemma~\ref{lem:proper_smoothing}]

For $B\subseteq[d]$ and $w\in\mathcal V^B$ let us call $n_B(w)$ the number of samples that are equal to $w$ in the positions $B$, i.e.
\[
n_B(w):=\#\{k\le n:\ X^{(k)}_{0,B}=w\}.
\] 
Thus, given a context $c=(A,v,i)$, we use the notation $n_c$ to denote $n_A(v)$. We notice that it only depends on $(A,v)$. We can then rewrite $$\psi^{(n)}_c(a)=\frac{n_{A\cup\{i\}}(v,a)}{n_A(v)}$$ when $n_c>0$. Since, by assumption, there exists $b_i\in\mathcal{V}$ such that $\data(X_i=b)=0, $ $n_{A\cup\{i\}}(v,b_i)=0$ for all $A$, $v$ and $i\notin A$. It follows that
\begin{equation}\label{36}
\hat\psi_c(b_i)=\frac{\varepsilon_c}{K}\qquad\text{for every context } c=(A,v,i),
\end{equation}
since $\psi^{(n)}_c(b_i)=0$ if $n_c>0$, while $\hat\psi_c=\Unif(\mathcal V)$ and $\varepsilon_c=1$ if $n_c=0$.

Let $\bar\mu\in\mathcal P(\mathcal X_0)$ with $\bar\mu>0$. Thanks to Proposition~2 applied to $\rho=\bar\mu$, for $t\in(0,T]$, $y\in\mathcal X$, $i\in M(y)$, $A=A(y)$ and $x=(y_{-i},x_i)$,
\[
s_{\bar\mu}(x,y,t)=\kappa_t\,\bar\mu(X_i=x_i\mid X_A=y_A).
\]
Notice that these are exactly the pairs $(x,y)$ with $Q_t(y,x)>0$. Hence $s=\kappa_t\hat\psi_{c(y,i)}(x_i)$ is proper if and only if there is $\bar\mu>0$ on $\mathcal X_0$ with
\begin{equation}\label{37}
\bar\mu(X_i=a\mid X_A=v)=\hat\psi_c(a)\qquad\text{for all } c=(A,v,i),\ a\in\mathcal V .
\end{equation}
Let us observe that such a $\bar\mu$ is unique, since, by the chain rule, $\bar\mu(x)=\prod_{j=1}^d\hat\psi_{([j-1],x_{<j},j)}(x_j)$.

We prove now the if part in the statement. Let $\gamma\in(0,\infty)$ and $\bar\mu:=\hat\mu_0$, i.e.
\[
\bar\mu(x)=\frac{n_{[d]}(x)+\gamma K^{-d}}{n+\gamma}>0.
\]
We will often omit $\gamma$ in the following, since it is fixed. Summing over the $K^{d-|A|}$ sequences $x$ with $x_A=v$ we get 
\[
\bar\mu(X_A=v)=\frac{n_A(v)+\gamma K^{-|A|}}{n+\gamma},
\]
and in a similar way 
\[
\bar\mu(X_A=v,X_i=a)=\frac{n_{A\cup\{i\}}(v,a)+\gamma K^{-|A|-1}}{n+\gamma}.
\]
Dividing,
\[
\bar\mu(X_i=a\mid X_A=v)=\frac{n_{A\cup\{i\}}(v,a)+\gamma K^{-|A|-1}}{n_A(v)+\gamma K^{-|A|}}=(1-\varepsilon_c)\,\psi^{(n)}_c(a)+\frac{\varepsilon_c}{K}
\]
with $\varepsilon_c$ as in \eqref{eq:eps} (note that when $n_c=0$ both sides equal $\frac1K$). Thus \eqref{37} holds, $s=s_{\bar\mu}$ is proper and $\mu_s=\hat\mu_0$. If $\gamma=\infty$, $\hat\psi_c=\Unif(\mathcal V)$ for every $c$, which are the conditionals of $\Unif(\mathcal X_0)$.

We work now on the only if part. To start, let $\bar\mu>0$ satisfy \eqref{37}. By \eqref{36}--\eqref{37} and positivity of $\bar\mu$,
\begin{equation}\label{38}
\varepsilon_c=K\,\frac{\bar\mu(X_A=v,X_i=b_i)}{\bar\mu(X_A=v)}>0\qquad\text{for every } c=(A,v,i).
\end{equation}
We take now $A\subseteq[d]$ with $|A|\le d-2$, $v\in\mathcal V^A$, $i\neq j$ in $[d]\setminus A$ and finally $a,a'\in\mathcal V$. Computing $\bar\mu(X_A=v,X_i=a,X_j=a')$ in the two possible orders and dividing by $\bar\mu(X_A=v)>0$,
\begin{equation}\label{39}
\hat\psi_{(A,v,i)}(a)\,\hat\psi_{(A\cup\{i\},(v,a),j)}(a')=\hat\psi_{(A,v,j)}(a')\,\hat\psi_{(A\cup\{j\},(v,a'),i)}(a).
\end{equation}

We prove now that $\varepsilon_c$ depends only on $(A,v)$. Taken $a=b_i$, $a'=b_j$ in \eqref{39}, the contexts $(A\cup\{i\},(v,b_i),j)$ and $(A\cup\{j\},(v,b_j),i)$ have zero count, so by assumption their $\hat\psi$ is $\Unif(\mathcal V)$;then,  the identity reads $\frac{\varepsilon_{(A,v,i)}}{K^{2}}=\frac{\varepsilon_{(A,v,j)}}{K^{2}}$. Hence $\varepsilon_{(A,v,i)}=\varepsilon_{(A,v,j)}$ whenever $|A|\le d-2$; moreover, if $|A|=d-1$ there is a single context with first two entries $(A,v)$, so they must coincide. We write from now on then  $\varepsilon_{A,v}$.

Take now $a=b_i$ and $a'$ arbitrary in \eqref{39}. By \eqref{36}, the uniqueness of $\varepsilon_c$ fixed $(A,v)$ and the fact that $(A\cup\{i\},(v,b_i),j)$ has zero count, after multiplying by $K$, we get in the case $|A|\le d-2,\ j\notin A,\ a'\in\mathcal V$, the other is trivial as shown above, 
\begin{equation}\label{40}
\frac{\varepsilon_{A,v}}{K}=\hat\psi_{(A,v,j)}(a')\,\varepsilon_{A\cup\{j\},(v,a')}.
\end{equation}
For $(B,w)$ with $n_B(w)>0$ and $|B|\le d-1$ define 
\[
g(B,w):=\frac{\varepsilon_{B,w}\,n_B(w)\,K^{|B|}}{(1-\varepsilon_{B,w})\in(0,\infty]},
\] 
with $g=\infty$ if and only if $\varepsilon_{B,w}=1$ ( we recall that $\varepsilon_{B,w}>0$ by \eqref{38}). Let now $|A|\le d-2, j\notin A$ and $a'$ with 
\[
m':=n_{A\cup\{j\}}(v,a')>0;
\]
then $n_A(v)\ge m'>0$ and, denoting for simplicity $\varepsilon$ the parameter $\varepsilon_{A,v}$, and  $\varepsilon'$ the parameter $\varepsilon_{A\cup\{j\},(v,a')}$, we have $\hat\psi_{(A,v,j)}(a')=\frac{(1-\varepsilon)m'}{n_A(v)}+\frac{\varepsilon}{K}$, so that \eqref{40} becomes 
\[
\frac{\varepsilon(1-\varepsilon')}{K}=\frac{(1-\varepsilon)\varepsilon' m'}{n_A(v)},
\]
i.e.
\begin{equation}\label{41}
\varepsilon\,n_A(v)\,K^{|A|}(1-\varepsilon')=(1-\varepsilon)\,\varepsilon'\,m'\,K^{|A|+1}.
\end{equation}
If both $\varepsilon$ and $\varepsilon'$ are less than $1$, \eqref{41} says $g(A,v)=g(A\cup\{j\},(v,a'))$. Furthermore, if $\varepsilon=1$ the left side vanishes, so $\varepsilon'=1$ since $m'>0$; finally, if $\varepsilon'=1$ the right side vanishes, so $\varepsilon=1$; in both cases $g(A,v)=g(A\cup\{j\},(v,a'))=\infty$. Hence $g(A,v)=g(A\cup\{j\},(v,a'))$ in every case.

We notice now that any $(B,w)$ with $n_B(w)>0$ and $|B|\le d-1$ is obtained from $(\emptyset,\emptyset)$ by adding the coordinates of $B$ one at a time. Additionally, every intermediate pair $(B',w_{B'})$ satisfies $n_{B'}(w_{B'})\ge n_B(w)>0$ and $|B'|\le d-2$ before the last step, hence the previous paragraph applies at each step and $g(B,w)=g(\emptyset,\emptyset)=:\gamma\in(0,\infty].$ Note moreover that $n_\emptyset(\emptyset)=n$. 

Solving $g(B,w)=\gamma$ for $\varepsilon_{B,w}$ gives $$\varepsilon_{B,w}=\frac{\gamma K^{-|B|}}{\gamma K^{-|B|}+n_B(w)},$$ which is \eqref{eq:eps} for all $c$ with $n_c>0$; in the case $n_c=0$ we have $\varepsilon_c=1$ by assumption, which is again \eqref{eq:eps}. To finish, by the if part above, $(\hat\psi_c)_c$ are the conditionals of $\hat\mu_0$, so $\bar\mu=\hat\mu_0$ by the uniqueness above.
\end{proof}
\begin{lemma}\label{lem:loss_contexts}
Consider the masked kernel. Let $s(x,y,t)=\kappa_t\psi_{c(y,i)}(x_i)$ with $x=(y_{-i},x_i),\psi_c\in\mathcal P(\mathcal V)$ for every context $c$, and for $k\in\{0,\dots,d-1\}$ set
\[
\omega_k:=\int_0^T\sigma_t\,\alpha_t^{k+1}(1-\alpha_t)^{d-k-1}\,dt\in(0,\infty).
\]
Then
\begin{align*}
\int_0^T\hat L_t(s)\,dt&=\sum_c\omega_{|A|}\frac{n_c}{n}\,H(\psi^{(n)}_c,\psi_c)+C_n,\\
\int_0^T L^U_t(s)\,dt&=\sum_c\omega_{|A|}K^{-|A|}\,H(\mathrm{Unif}(\mathcal V),\psi_c)+C_U,    
\end{align*}
where $C_n,C_U\in\mathbb R$ do not depend on $(\psi_c)_c$. It follows also then that, for every $\gamma>0$, with $\varepsilon_c$ as in \eqref{eq:eps},
\[
\int_0^T\hat L^M_t(s)\,dt=\sum_c\omega_{|A|}\,\frac{n_c+\gamma K^{-|A|}}{n+\gamma}\Big[(1-\varepsilon_c)H(\psi^{(n)}_c,\psi_c)+\varepsilon_c\,H(\mathrm{Unif}(\mathcal V),\psi_c)\Big]+\frac{nC_n+\gamma C_U}{n+\gamma},
\]
which is minimized, uniquely, at $\psi_c=\hat\psi_c=(1-\varepsilon_c)\psi^{(n)}_c+\varepsilon_c\mathrm{Unif}(\mathcal V)$ for every $c$. In a similar way, the first identity is minimized, uniquely, on the contexts with $n_c>0$, at $\psi_c=\psi^{(n)}_c$, while contexts with $n_c=0$ are unconstrained.
\end{lemma}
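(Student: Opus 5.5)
We compute the time-integrated empirical loss directly from the masked kernel. Fix $t$ and a clean sample $X_0^{(k)}$. Under $P_{0,t}(\cdot,X_0^{(k)})$, the noised state $y$ has unmasked set $A=A(y)$ with probability $\alpha_t^{|A|}(1-\alpha_t)^{d-|A|}$, and $y_A=X^{(k)}_{0,A}$. The pairs $x\neq y$ with $Q_t(y,x)>0$ are exactly $x=(y_{-i},a)$ with $i\in M(y)$ and $a\in\mathcal V$, each with rate $Q_t(y,x)=\sigma_t$ by Appendix~\ref{appendix_matix_design}. For such $x$, the ratio $P_{0,t}(x,X_0^{(k)})/P_{0,t}(y,X_0^{(k)})$ equals $\kappa_t\mathbf 1\{a=X^{(k)}_{0,i}\}$.

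Plugging in $s=\kappa_t\psi_{c(y,i)}(a)$ and using $\sum_a\psi_c(a)=1$, the summand for fixed $(y,i)$ becomes
\[
\sigma_t\kappa_t-\sigma_t\kappa_t\log\kappa_t-\sigma_t\kappa_t\log\psi_{c(y,i)}(X^{(k)}_{0,i}).
\]
The first two terms do not depend on $\psi$ and go into $C_n$. For the third, we average over $k$ and group the pairs $(y,i)$ by context $c=(A,v,i)$, where $y$ ranges over $(A,v)$ with $i\notin A$. We also use
\[
\frac1n\sum_k\mathbf 1\{X^{(k)}_{0,A}=v\}\,\mathbf 1\{X^{(k)}_{0,i}=a\}=\frac{n_c}{n}\,\psi^{(n)}_c(a).
\]
Together these yield
\[
\sum_c\frac{n_c}{n}H(\psi^{(n)}_c,\psi_c)\,\sigma_t\kappa_t\alpha_t^{|A|}(1-\alpha_t)^{d-|A|}.
\]
Since $\kappa_t\alpha_t^{k}(1-\alpha_t)^{d-k}=\alpha_t^{k+1}(1-\alpha_t)^{d-k-1}$, integrating in $t$ produces $\omega_{|A|}$. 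Finiteness and positivity of $\omega_k$ follow from $0<\alpha_t<1$ on $(0,T]$, the bound $\sigma_t\alpha_t\le\sigma_t$ and integrability of $\sigma$. The constant $\int_0^T\sigma_t\kappa_t(1-\log\kappa_t)\,\mathbb E[|M(y)|]\,dt$ needs some care near $t=0$, where $\kappa_t\to\infty$; it is weighted by $(1-\alpha_t)$ coming from $\mathbb E|M(y)|=d(1-\alpha_t)$, so it is integrable. I expect this bookkeeping of constants to be the main fiddly point; at worst one notes that it is independent of $\psi$, which is all that the minimization claim requires.

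The uniform-data identity follows from the same computation with $\mu_0=\Unif(\mathcal X_0)$ and the (non-empirical) loss $\mathcal L^U_t$. The context $(A,v)$ now has probability $K^{-|A|}$, and the clean token at $i$ is uniform, which gives the weights $K^{-|A|}$ and $H(\Unif(\mathcal V),\psi_c)$. The mixed identity is then linear algebra. The coefficient of $H(\psi_c^{(n)},\psi_c)$ is $\omega_{|A|}\frac{n}{n+\gamma}\frac{n_c}{n}$, and that of $H(\Unif,\psi_c)$ is $\omega_{|A|}\frac{\gamma K^{-|A|}}{n+\gamma}$. Factoring out $\frac{n_c+\gamma K^{-|A|}}{n+\gamma}$ leaves the weights $1-\varepsilon_c$ and $\varepsilon_c$ with $\varepsilon_c$ as in \eqref{eq:eps}.

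Minimization decouples over contexts, since each $\psi_c$ appears in a single term with positive weight. By linearity of $H$ in its first argument, each term is proportional to $H(\hat\psi_c,\psi_c)=H(\hat\psi_c)+\kll(\hat\psi_c\mid\psi_c)$, which has the unique minimizer $\psi_c=\hat\psi_c$ by Gibbs' inequality. The positive weight $n_c+\gamma K^{-|A|}>0$ means every context is constrained. For the first identity, the weight $n_c/n$ vanishes exactly when $n_c=0$. Hence only contexts with $n_c>0$ are pinned down, and there the unique minimizer is $\psi^{(n)}_c$.
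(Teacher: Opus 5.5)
Your proposal is correct and follows essentially the paper's route: the product form of the masked kernel, grouping pairs $(y,i)$ by context to obtain the weights $\omega_{|A|}$, then linearity of the cross-entropy plus Gibbs' inequality. The only difference is that you compute the kernel ratio per sample before averaging, whereas the paper averages into $\mu^{(n)}_t$ first. One loose end is the finiteness of $C_n$: after $\mathbb E|M(y)|=d(1-\alpha_t)$ cancels $\kappa_t$, the integrand $d\,\sigma_t\alpha_t(1-\log\kappa_t)$ still has the singularity $-\log(1-\alpha_t)$ at $t=0$, and falling back on $\psi$-independence does not help, since an infinite constant would make the identity vacuous; dispose of it via the substitution $u=\bar\sigma_t$ and the integrability of $|\log u|$, as the paper does.
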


\begin{proof}
We fix $t\in(0,T]$. For the masked kernel, $Q_t(y,x)>0$ with $x\neq y$ if and only if $x=(y_{-i},a)$ for some $i\in M(y)$ and $a\in\mathcal V$: in this case, $Q_t(y,x)=\sigma_t$. Taking the average over the sample in \eqref{eq:loss_empirical} and using that $\frac1n\sum_kP_{0,t}(y,X^{(k)}_0)=\mu^{(n)}_t(y)$, $\frac1n\sum_kP_{0,t}(x,X^{(k)}_0)=\mu^{(n)}_t(x)$, we get
\begin{equation}\label{eq:L1}
\hat L_t(s)=\sigma_t\sum_{y\in\mathcal X}\sum_{i\in M(y)}\sum_{a\in\mathcal V}\Big[\mu^{(n)}_t(y)\,s(x,y,t)-\mu^{(n)}_t(x)\log s(x,y,t)\Big],\qquad x=(y_{-i},a).
\end{equation}
Fix $y$ with $M(y)\neq\emptyset$, $i\in M(y)$, and let $c=(A,v,i)$ with $A=A(y)$, $v=y_A$. By the product form of $P^{\mathrm{mask}}_{0,t}$, 
\[
\mu^{(n)}_t(y)=\alpha_t^{|A|}(1-\alpha_t)^{d-|A|}\,\frac{n_c}{n}
\]
and 
\[
\mu^{(n)}_t(x)=\alpha_t^{|A|+1}(1-\alpha_t)^{d-|A|-1}\,n_c\,\frac{\psi^{(n)}_c(a)}{n}.
\]
In both cases. they vanish if $n_c=0$. Since by definition $\kappa_t=\frac{\alpha_t}{1-\alpha_t}$ and $\sum_a\psi_c(a)=\sum_a\psi^{(n)}_c(a)=1$, the inner sum over $a$ in \eqref{eq:L1} equals
\[
\alpha_t^{|A|+1}(1-\alpha_t)^{d-|A|-1}\,\frac{n_c}{n}\Big[1-\log\kappa_t+H(\psi^{(n)}_c,\psi_c)\Big].
\]
As $(y,i)\mapsto c(y,i)$ is a bijection between $\{(y,i):y\in\mathcal X,\,i\in M(y)\}$ and the set of contexts,
\[
\hat L_t(s)=\sum_c\sigma_t\,\alpha_t^{|A|+1}(1-\alpha_t)^{d-|A|-1}\,\frac{n_c}{n}\Big[1-\log\kappa_t+H(\psi^{(n)}_c,\psi_c)\Big].
\]
Integrating in $t$ gives the first identity of the statement with 
\[
C_n:=\sum_c\frac{n_c}{n}\int_0^T\sigma_t\alpha_t^{|A|+1}(1-\alpha_t)^{d-|A|-1}(1-\log\kappa_t)\,dt.
\]
To check that $\omega_k$ and $C_n$ are finite, substitute $u=\bar\sigma_t$, with $\sigma_t\,dt=du$, $\alpha_t=e^{-u}$: 
\[
\omega_k=\int_0^{\bar\sigma_T}e^{-(k+1)u}(1-e^{-u})^{d-k-1}du\in(0,\bar\sigma_T],
\]
and since $1-e^{-u}\ge u/2$ for $u\le1$ and $1-e^{-u}\ge1/2$ for $u\ge1$ we have that $\log\kappa_t=-u-\log(1-e^{-u})$ with $|\log(1-e^{-u})|\le\log2+|\log u|$, so the integrand of $C_n$ is bounded by $e^{-(k+1)u}(1+u+\log2+|\log u|)$, which is integrable on $(0,\bar\sigma_T)$. 

Moreover, if we run the same computations as above with $\mu^{(n)}_0$ replaced by $\mathrm{Unif}(\mathcal X_0)$ (for which $\mathrm{Unif}(X_A=v)=K^{-|A|}$ and the conditional law of $X_i$ given $X_A=v$ is $\mathrm{Unif}(\mathcal V)$), the second identity follows. We observe that the term $\phi(s^\star_U)$ in \eqref{definition_score_entropy_loss} does not depend on $s$ and is absorbed in $C_U$. 

Finally, the identity for 
\[
\hat L^M_t=\frac{n}{n+\gamma}\hat L_t+\frac{\gamma}{n+\gamma}L^U_t
\]
follows since 
\[
\frac{n}{n+\gamma}\frac{n_c}{n}=\frac{n_c+\gamma K^{-|A|}}{n+\gamma}(1-\varepsilon_c)
\]
and 
\[
\frac{\gamma}{n+\gamma}K^{-|A|}=\frac{n_c+\gamma K^{-|A|}}{n+\gamma}\varepsilon_c.
\]
To finish, all weights $\frac{\omega_{|A|}(n_c+\gamma K^{-|A|})}{n+\gamma}$ are positive and the cross-entropy is linear in its first argument, hence 
\[
(1-\varepsilon)H(p,q)+\varepsilon H(u,q)=H((1-\varepsilon)p+\varepsilon u,q),
\]
and $H(p,q)=H(p,p)+\kll(p|q)\ge H(p,p)$ with equality if and only if $q=p$. This gives the two minimization statements.
\end{proof}

\subsection{Proofs of Section~\ref{sec:upper_bounds}}\label{appendix_proof_thm_minimax_mask}

To begin, we recall here that
a context $c$ is a triple $(A,v,i)$, where $A\subseteq [d],$ $v\in \mathcal{V}^{A}$ and $i\in [d]\setminus A.$ In particular, given a certain state $y\in \mathcal{X},$ we will often call $c(y,i)$, or for brevity when it isn't ambiguous $c$, the triple $(A(y), y_{A(y)},i),$ where $A(y)$ are the positions of unmasked tokens in $y$. We will denote also by $M(y):=[d]\setminus A(y),$ the position of the masked tokens in $y.$   
Given a position $i\in[d]$, a token $x_i\in \mathcal{V}$, and a state $y\in \mathcal{X}$ such that $y_i=\mathfrak{m}$, we can then define for a generic $\rho$ the following conditional probability $\psi_c(a):=\frac{\rho(X_A=v,X_i=a)}{\rho(X_A=v)},$ when $\rho(X_A=v)>0.$ 

We first introduce here a key result from \citet{ou2024radd}. It tells us that the score depends on time through an universal and explicit constant, and the data-dependence is independent from time and noise. 
\begin{proposition}[\citep{ou2024radd}, Theorem 1]\label{prop:ou}
Let $\rho\in\mathcal P(\mathcal X_0)$, and $\rho_t:=P^{\mathrm{mask}}_{0,t}\rho$. For every $t\in(0,T]$, every $y\in\mathcal X$ with $\rho_t(y)>0$ (equivalently $\rho(X_{A(y)}=y_{A(y)})>0$), every $i\in M(y)$ and $a\in\mathcal V$,
\[
\frac{\rho_t(x)}{\rho_t(y)}=\kappa_t\,\rho(X_i=a\mid X_{A(y)}=y_{A(y)}),\qquad x:=(y_{-i},a),\quad \kappa_t=\frac{\alpha_t}{1-\alpha_t}.
\]
\end{proposition}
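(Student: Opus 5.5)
The statement is Proposition~\ref{prop:ou}, and I would prove it by direct computation from the product form of the masked kernel recorded in Appendix~\ref{appendix_matix_design}. That form gives
\[
P^{\mathrm{mask}}_{0,t}(y,x_0)=\alpha_t^{|A(y)|}(1-\alpha_t)^{d-|A(y)|}\,\mathbf 1\{x_{0,A(y)}=y_{A(y)}\}
\]
for $y\in\mathcal X$, $x_0\in\mathcal X_0$. The key observation is that the indicator depends on $x_0$ only through its coordinates in $A(y)$, while the prefactor depends on $y$ only through $|A(y)|$. The transition probability therefore splits into a time factor and a data factor.

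\textbf{Marginal of a partially masked state.} Summing over $x_0$ against $\rho$, I get for every $y\in\mathcal X$
\[
\rho_t(y)=\sum_{x_0}P^{\mathrm{mask}}_{0,t}(y,x_0)\rho(x_0)=\alpha_t^{|A(y)|}(1-\alpha_t)^{d-|A(y)|}\,\rho\big(X_{A(y)}=y_{A(y)}\big).
\]
Since $\alpha_t\in(0,1)$ for $t>0$ (because $0<\bar\sigma_t<\infty$), the prefactor is strictly positive. Hence $\rho_t(y)>0$ if and only if $\rho(X_{A(y)}=y_{A(y)})>0$, which is the claimed equivalence; it is consistent with Lemma~\ref{lem:support}.

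\textbf{Unmasking one coordinate.} Now fix $i\in M(y)$ and $a\in\mathcal V$, and set $x=(y_{-i},a)$. Then $A(x)=A(y)\cup\{i\}$ and $x_{A(x)}=(y_{A(y)},a)$. Applying the same formula to $x$ gives
\[
\rho_t(x)=\alpha_t^{|A(y)|+1}(1-\alpha_t)^{d-|A(y)|-1}\,\rho\big(X_{A(y)}=y_{A(y)},X_i=a\big).
\]
Dividing by $\rho_t(y)>0$, the powers cancel to leave the factor $\alpha_t/(1-\alpha_t)=\kappa_t$. The probabilities combine into $\rho(X_{A(y)}=y_{A(y)},X_i=a)/\rho(X_{A(y)}=y_{A(y)})$, which is exactly the conditional $\rho(X_i=a\mid X_{A(y)}=y_{A(y)})$.

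\textbf{Main obstacle.} There is no real obstacle here. The only step needing care is the marginalization: one must check that the sum over $x_0$ of the indicator gives the marginal of $\rho$ on the coordinates $A(y)$. This holds because the masked coordinates of $y$ are hit with probability $1-\alpha_t$ regardless of the value of $x_{0,j}$, so those coordinates of $x_0$ are summed out freely.
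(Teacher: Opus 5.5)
Your proof is correct and follows the same route as the paper. You write $\rho_t(y)$ and $\rho_t(x)$ via the product form of $P^{\mathrm{mask}}_{0,t}$ and divide; you also explicitly check that $\alpha_t\in(0,1)$ for $t>0$, which justifies the equivalence $\rho_t(y)>0 \iff \rho(X_{A(y)}=y_{A(y)})>0$ that the paper leaves implicit.
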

\begin{proof}
The proof follows immediately from the structure of $P^{\mathrm{mask}}_{0,t}$: in fact, \[
\rho_t(y)=\alpha_t^{|A|}(1-\alpha_t)^{d-|A|}\rho(X_A=y_A)
\]
with $A=A(y)$, and 
\[
\rho_t(x)=\alpha_t^{|A|+1}(1-\alpha_t)^{d-|A|-1}\rho(X_A=y_A,X_i=a).
\]
Dividing, we conclude.
\end{proof}
Let us now define an important object. Given a score $s$ (measurable in $t$, positive where $Q_t(y,x)>0$) define the \emph{excess score entropy}
\begin{equation}\label{eq:excess}
\mathcal E_t(s):=\sum_{y\in\mathcal X}\mu_t(y)\sum_{x\ne y}Q_t(y,x)\,D\big(s^\star(x,y,t),\,s(x,y,t)\big)\in[0,\infty],
\end{equation}
with $D$ as above (and $D(a,0)=+\infty$ for $a>0$, $D(0,0)=0$). Expanding $D$ and using $\mu_t(y)s^\star(x,y,t)=\mu_t(x)$ whenever $Q_t(y,x)>0$, one checks that $\mathcal E_t(s)=L_t(s)$, with $L_t$ the score-entropy loss \eqref{definition_score_entropy_loss}; in particular $L_t(s^\star)=0$ and $L_t\ge0$.

Throughout the section, many results hold also for the uniform case. For this reason we give already now their lemmas and proofs.

\begin{lemma}[Bounds on the uniform score]\label{lem:A}
Consider the uniform kernel, $\gamma>0$, and $\hat s$ the score for the uniform case. Then, for all $t\in[0,T]$ and $x,y\in\cX_0$,
\[
M_\gamma^{-1}\le\hat s(x,y,t)\le M_\gamma,\qquad M_\gamma:=\frac{K^d(n+\gamma)}{\gamma},
\]
and $t\mapsto\hat s(x,y,t)$ is continuous. Moreover, for $0<T_0\le T$, 
\[
s^\star(x,y,t)\le \frac{K^d}{(1-\alpha_{T_0})^{d}}
\]
for all $t\in[T_0,T]$ and every $x,y\in\cX_0$.
\end{lemma}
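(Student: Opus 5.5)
The plan is to exploit the fact that, for the uniform kernel, $\hat s$ is the exact ratio $\hat\mu_t(x)/\hat\mu_t(y)$ with $\hat\mu_t=P_{0,t}\hat\mu_0^{(\gamma)}$, as in \eqref{eq:hat_s_uniform}. Both bounds should then reduce to pointwise two-sided bounds on the marginals $\hat\mu_t$ and $\mu_t$.

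For the first claim, I would start from the mixture $\hat\mu_t=\frac{n}{n+\gamma}\mu^{(n)}_t+\frac{\gamma}{n+\gamma}\mathrm{Unif}(\mathcal X_0)$. Stationarity of the uniform law under the uniform kernel gives $\mu^U_t=\mathrm{Unif}(\mathcal X_0)$ for every $t$. Hence, for every $t\in[0,T]$ and $y\in\mathcal X_0$,
\[
\frac{\gamma}{(n+\gamma)K^d}\le\hat\mu_t(y)\le 1,
\]
where the upper bound holds simply because $\hat\mu_t$ is a probability measure. Taking the ratio of the upper bound for $\hat\mu_t(x)$ and the lower bound for $\hat\mu_t(y)$ gives $\hat s\le K^d(n+\gamma)/\gamma=M_\gamma$. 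Exchanging the roles of $x$ and $y$ gives $\hat s\ge M_\gamma^{-1}$. At $t=0$ the same bounds hold, since $\hat\mu_0$ itself dominates $\frac{\gamma}{n+\gamma}K^{-d}$.

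Continuity comes from the explicit propagator. The entries of $P_{0,t}=\exp(\bar\sigma_tQ_0)$, written out in the uniform-process paragraph, are polynomials in $\alpha_t=e^{-\bar\sigma_t}$, and $t\mapsto\bar\sigma_t$ is continuous because $\sigma$ is integrable. Thus $t\mapsto\hat\mu_t(x)$ is continuous. The denominator $\hat\mu_t(y)$ is bounded away from zero by the lower bound above, so the ratio $\hat s(x,y,t)$ is continuous in $t$.

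For the bound on $s^\star$, I would use the lower bound on the kernel from the proof of Lemma~\ref{lem:support}, namely $P^{\mathrm{unif}}_{0,t}(y,x_0)\ge((1-\alpha_t)/K)^d$. Averaging over $x_0\sim\data$ gives $\mu_t(y)\ge((1-\alpha_t)/K)^d$. Since $\alpha_t$ is nonincreasing, on $[T_0,T]$ this strengthens to $\mu_t(y)\ge((1-\alpha_{T_0})/K)^d>0$. In particular $\mu_t(y)>0$, so $s^\star=\mu_t(x)/\mu_t(y)$ is the genuine ratio in \eqref{definition_true_score}. Combining this with $\mu_t(x)\le 1$ yields $s^\star(x,y,t)\le K^d/(1-\alpha_{T_0})^d$.

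I do not see a real obstacle here. The only step needing care is the monotonicity of $\alpha_t$ on $[T_0,T]$, which holds because $\bar\sigma$ is nondecreasing, as $\sigma\ge0$.
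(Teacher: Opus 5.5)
Your proof is correct and follows essentially the paper's route. The paper gets $\hat\mu_t\ge\gamma/(K^d(n+\gamma))$ from double stochasticity of $P_{0,t}$, viewing $\hat\mu_t(x)$ as a convex combination of the values of $\hat\mu_0^{(\gamma)}$; this is the same fact as your stationarity of the uniform law plus dropping the nonnegative empirical term, and your bound on $s^\star$ and your (more explicit) continuity argument match what the paper does.
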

\begin{proof}
Since we consider the uniform kernel, we know that $Q^{\mathrm{unif}}_0$ is symmetric, so $P_{0,t}=e^{\bar\sigma_tQ_0}$ is symmetric with unit column sums, or equivalently it is doubly stochastic. Hence 
\[
\hat\mu_t(x)=\sum_{x_0}P_{0,t}(x,x_0)\muhat(x_0)
\]
is a convex combination of the values of $\muhat$, thus 
\[
\frac{\gamma}{(K^d(n+\gamma))}=\min\muhat\le\hat\mu_t(x)\le\max\muhat\leq 1
\]
Now, dividing we get the two-sided bound and the continuity in $t$ is follows quickly. 

To end, for $s^\star$, we have that $\mu_t(x)\le1$ and 
\[
\mu_t(y)\ge\min_{x_0}P_{0,t}(y,x_0)\ge\Big(\frac{1-\alpha_t}{K}\Big)^d\ge\Big(\frac{1-\alpha_{T_0}}{K}\Big)^d
\]
on $[T_0,T]$.
\end{proof}

\begin{lemma}[Excess score entropy along the forward flow]\label{lem:B}
Consider the uniform kernel, $\gamma>0$, $0<T_0\le T, \hat\mu_t:=P_{0,t}\muhat$, and  $\hat s$ as in \eqref{eq:hat_s_uniform}. Then, the map $t\mapsto\KL(\mu_t\,|\,\hat\mu_t)$ is absolutely continuous on $[T_0,T]$, for a.e.\ $t\in[T_0,T]$ and it holds
\[
\frac{d}{dt}\KL(\mu_t\,|\,\hat\mu_t)=-\cE_t(\hat s)=-L_t(\hat s).
\]
Consequently,
\[
\int_{T_0}^T L_t(\hat s)\dd t=\KL(\mu_{T_0}|\hat\mu_{T_0})-\KL(\mu_T|\hat\mu_T)\le\KL(\mu_{T_0}|\hat\mu_{T_0})\le\KL(\mudata\,|\,\muhat).
\]
\end{lemma}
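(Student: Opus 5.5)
The plan is to apply Lemma~\ref{lem:girsanov} with $p_u:=\mu_{t}$ and $q_u:=\hat\mu_{t}$ on $[T_0,T]$, both driven by the \emph{same} forward generator, $A_u=B_u=Q_t$. We then show that the resulting derivative formula \eqref{eq:kl_derivative} collapses to $-\mathcal E_t(\hat s)$. The final inequality chain follows by integrating and using monotonicity.

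\textbf{Checking the hypotheses of Lemma~\ref{lem:girsanov}.} Both $\mu_t=P_{0,t}\mudata$ and $\hat\mu_t=P_{0,t}\muhat$ solve $\partial_t\rho_t=Q_t\rho_t$ with the same generator $Q_t=\sigma_tQ_0$. Since $\sigma$ is integrable, $\int_{T_0}^T M_u\,\dd u<\infty$ with $M_u\lesssim\sigma_u\max|Q_0|$. By Lemma~\ref{lem:support}, for the uniform kernel $\supp(\mu_t)=\mathcal X_0$ for all $t\in(0,T]$, so the support is constant on $[T_0,T]$. Moreover $q_{T_0}=\hat\mu_{T_0}>0$, because $\muhat>0$. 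Hence the lemma applies, and $t\mapsto\KL(\mu_t|\hat\mu_t)$ is absolutely continuous on $[T_0,T]$.

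\textbf{Computing the derivative.} Set $A=B=Q_t$ and $r_t=\mu_t/\hat\mu_t$ in \eqref{eq:kl_derivative}. Writing $\hat\mu_t(y)r_t(x)=\mu_t(y)\,\frac{\hat\mu_t(x)}{\hat\mu_t(y)}\,\frac{r_t(x)}{r_t(y)}$, the two ratios are
\[
\frac{r_t(x)}{r_t(y)}=\frac{s^\star(x,y,t)}{\hat s(x,y,t)},\qquad \frac{\hat\mu_t(x)}{\hat\mu_t(y)}=\hat s(x,y,t),
\]
so the summand becomes
\[
Q_t(x,y)\,\mu_t(y)\Big[\log\frac{s^\star}{\hat s}-s^\star+\hat s\Big]=-Q_t(x,y)\,\mu_t(y)\,D(s^\star,\hat s).
\]
The uniform generator is symmetric, $Q_t(x,y)=Q_t(y,x)$, which matches the weights in \eqref{eq:excess}. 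This gives $\frac{d}{dt}\KL(\mu_t|\hat\mu_t)=-\mathcal E_t(\hat s)$, and $\mathcal E_t=L_t$ by the identity stated after \eqref{eq:excess}. All quantities are finite: Lemma~\ref{lem:A} bounds $\hat s$ above and below, and bounds $s^\star$ on $[T_0,T]$.

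\textbf{Integration and monotonicity.} Integrating over $[T_0,T]$ gives the equality, and dropping the nonnegative term $\KL(\mu_T|\hat\mu_T)$ gives the first inequality. For the last inequality, $\KL(\mu_{T_0}|\hat\mu_{T_0})\le\KL(\mudata|\muhat)$ is the data-processing inequality for the Markov kernel $P_{0,T_0}$. If $\KL(\mudata|\muhat)=\infty$ there is nothing to prove; here it is in fact finite, since $\muhat>0$.

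The only delicate point is the derivative step: the sign conventions of the column generator must be matched correctly against the weights $Q_t(y,x)$ in \eqref{eq:excess}. Symmetry of the uniform kernel resolves this.
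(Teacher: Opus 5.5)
Your overall route matches the paper's: you apply Lemma~\ref{lem:girsanov} with $A=B=Q_t$, $p=\mu_t$, $q=\hat\mu_t$ on $[T_0,T]$, then integrate and use data processing. Your hypothesis checks and your final paragraph are fine. The gap is in the derivative computation, which is the only nontrivial step.

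The identity you write is false. Its right-hand side equals $\mu_t(x)$, whereas $\hat\mu_t(y)r_t(x)=\mu_t(y)\,r_t(x)/r_t(y)=\mu_t(x)/\hat s(x,y,t)$. Your bracket does not follow even from that identity, which would give $\log\frac{s^\star}{\hat s}-s^\star+1$. It is also not $-D(s^\star,\hat s)$: for $s^\star=2$, $\hat s=4$ it equals $2-\log 2>0$, while $-D\le 0$ always. Using $\hat\mu_t(y)=\mu_t(y)/r_t(y)$ correctly, the summand of \eqref{eq:kl_derivative} at the pair $(x,y)$ is $Q_t(x,y)\mu_t(y)(\log w-w+1)$ with $w=s^\star(x,y,t)/\hat s(x,y,t)$. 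This equals $-Q_t(x,y)\mu_t(y)\,D(\hat s,s^\star)/\hat s$, a divergence with the arguments in the opposite order. Symmetry of $Q_t$ does not turn it termwise into $-Q_t(y,x)\mu_t(y)D(s^\star,\hat s)$, so invoking symmetry does not close the step.

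The missing idea is the relabelling used in the paper. Exchange the names of $x$ and $y$ in the double sum, so the term indexed by $(y,x)$ becomes $Q_t(y,x)\mu_t(x)\big(-\log w-\tfrac1w+1\big)$. Then use $\mu_t(x)=\mu_t(y)s^\star=\mu_t(y)\hat s\,w$ to rewrite it as $-\mu_t(y)Q_t(y,x)\,\hat s\,(w\log w-w+1)=-\mu_t(y)Q_t(y,x)\,D(s^\star,\hat s)$. Summing gives $-\mathcal E_t(\hat s)$. No symmetry of the generator is needed, because the weight $Q_t(y,x)$ of \eqref{eq:excess} appears automatically after the swap. With this step replaced, the rest of your argument goes through.
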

\begin{proof}
To begin, we apply Lemma~\ref{lem:girsanov} on $[T_0,T]$ with $A_t=B_t=Q_t=\sigma_tQ_0$, $p_t=\mu_t$, $q_t=\hat\mu_t$ and $\int\sigma<\infty$: we observe that $$q_{T_0}=\hat\mu_{T_0}>0$$ because $\muhat>0$ and Lemma~\ref{lem:support}. Moreover, again thanks to Lemma~\ref{lem:support}, the support of $\mu_t$ is $\cX_0$ for every $t>0$, hence it is constant. Calling for simplicity $r_t:=\frac{\mu_t}{\hat\mu_t}>0$, by \eqref{eq:kl_derivative}, for a.e.\ $t$, we have
\begin{align*}
\frac{d}{dt}\KL(\mu_t|\hat\mu_t)&=\sum_{y}\sum_{x\ne y}Q_t(x,y)\Big[\mu_t(y)\log\frac{r_t(x)}{r_t(y)}-\hat\mu_t(y)\big(r_t(x)-r_t(y)\big)\Big]\\
&=\sum_{y}\mu_t(y)\sum_{x\ne y}Q_t(x,y)\,\big(\log  \frac{r_t(x)}{r_t(y)}-\frac{r_t(x)}{r_t(y)}+1\big),
\end{align*}
where above we used $\hat\mu_t(y)=\mu_t(y)/r_t(y)$ and that $\frac{r_t(x)}{r_t(y)}=\frac{s^\star(x,y,t)}{\hat s(x,y,t)}\in(0,\infty)$. Exchanging the names of $x$ and $y$ in the double sum, since $\mu_t(x)=\mu_t(y)s^\star(x,y,t)=\mu_t(y)\,\hat s(x,y,t)\frac{r_t(x)}{r_t(y)}$, the summand becomes
\begin{align*}
    \mu_t(y)\,Q_t(y,x)\,\hat s(x,y,t)\,\frac{r_t(x)}{r_t(y)}\Big(-\log \frac{r_t(x)}{r_t(y)}-\frac{r_y(y)}{r_t(x)}+1\Big)&=-\mu_t(y)\,Q_t(y,x)\,\hat s(x,y,t)\,h(\frac{r_t(x)}{r_t(y)})\\
    &=-\mu_t(y)\,Q_t(y,x)\,D\big(s^\star(x,y,t),\hat s(x,y,t)\big), 
\end{align*}
where we recall that $h(u)=u\log u-u+1$ and $D(a,b)=b\,h(a/b)$. Summing gives $-\cE_t(\hat s)$.
Integrating on $[T_0,T]$, using that $\KL\ge0$, and finally the data processing inequality, we get the last inequality.
\end{proof}

\begin{lemma}[Transfer from score error to the sampler output]\label{lem:transfer}
Let $0<T_0<T, \mu_\infty\in\cP(\cX)$ with $\mu_\infty>0$ on $\cX$. Let $s$ be a score, measurable in $t$, with $\sup\{s(x,y,t):\ x,y\in\cX,\ t\in[T_0,T]\}<\infty$ and $s(x,y,t)>0$ whenever $s^\star(x,y,t)Q_t(y,x)>0$. Moreover, let $(\nu_u)_{u\in[0,T-T_0]}$ be the solution of \eqref{sampling_kolmogorov_equation} with score $s$ and initial law $\nu_0=\mu_\infty$. Then
\begin{equation}\label{44}
\KL(\mu_{T_0}\,|\,\nu_{T-T_0})\le\KL(\mu_T\,|\,\mu_\infty)+\int_{T_0}^T\cE_t(s)\dd t .
\end{equation}
In particular, for the uniform kernel, $\mu_\infty=\Unif(\cX_0)$ and $s=\hat s$ of \eqref{eq:hat_s_uniform}, the hypotheses hold by Lemma~\ref{lem:A} and \eqref{44} reads $\KL(\mu_{T_0}|\nu_{T-T_0})\le\KL(\mu_T|\Unif(\cX_0))+\int_{T_0}^TL_t(\hat s)\dd t$.
\end{lemma}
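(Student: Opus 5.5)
We would prove \eqref{44} by applying Lemma~\ref{lem:girsanov} on the interval $[0,\tau]$, $\tau:=T-T_0$. For $p_u$ we take the forward marginal run backwards, $p_u:=\mu_{T-u}$; for $q_u$ we take the sampler, $q_u:=\nu_u$. The generator of $p_u$ is the ideal reverse generator, $A_u:=\widetilde Q_u$ with off-diagonal entries $s^\star(x,y,T-u)Q_{T-u}(y,x)$. The generator of $q_u$ is $B_u:=\hat Q_u$ with entries $s(x,y,T-u)Q_{T-u}(y,x)$. Then $q_0=\mu_\infty>0$ and $p_0=\mu_T$.

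We first check the hypotheses of Lemma~\ref{lem:girsanov}.
\begin{enumerate}[label=$\bullet$]
\item Integrability of $M_u$. Since $u\le\tau$ means $T-u\ge T_0$, the rates of $B_u$ are bounded by $\sup s\cdot\sigma_{T-u}\max Q_0$, which is integrable. For $A_u$ we need $s^\star$ bounded on $[T_0,T]$ where $Q_t(y,x)>0$. Lemma~\ref{lem:support} gives that the positive entries of $P_{0,t}$ are bounded below on $[T_0,T]$, so $\mu_t(y)$ is bounded below on its (time-independent) support, and $\mu_t(x)\le1$.
\item Constant support. Lemma~\ref{lem:support} shows that $\supp(p_u)=\supp(\mu_{T-u})$ does not depend on $u$.
\item $p_u$ solves its Kolmogorov equation. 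This is the time-reversal identity quoted in Section~\ref{sec:preliminaries}: $\widetilde\mu_u=\mu_{T-u}$ solves \eqref{eq:ideal_chain}, and $\mu_t$ is absolutely continuous on $[T_0,T]$. The diagonal entries of $\widetilde Q_u$ on states outside the support are irrelevant.
\end{enumerate}

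Bound \eqref{eq:kl_derivative_bound} then gives, for a.e.\ $u$,
\[
\frac{d}{du}\kll(p_u|q_u)\le\sum_{y}\mu_{T-u}(y)\sum_{x\ne y}Q_{T-u}(y,x)\,D\big(s^\star(x,y,T-u),s(x,y,T-u)\big).
\]
Here we used the positive homogeneity $D(ca,cb)=cD(a,b)$ to pull out $Q_{T-u}(y,x)\ge0$. The right-hand side is exactly $\mathcal E_{T-u}(s)$. The hypothesis that $s>0$ wherever $s^\star Q_t(y,x)>0$ ensures no $D(a,0)=+\infty$ term appears. Terms with $s^\star=0$ give $D(0,b)=b$, which is finite.

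Integrating from $0$ to $\tau$, using absolute continuity and changing variables $t=T-u$, gives
\[
\kll(\mu_{T_0}|\nu_{T-T_0})\le\kll(\mu_T|\mu_\infty)+\int_{T_0}^T\mathcal E_t(s)\,\dd t,
\]
which is \eqref{44}.

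For the particular case, Lemma~\ref{lem:A} gives $\hat s\in[M_\gamma^{-1},M_\gamma]$, so $\hat s$ is bounded and positive. The uniform measure is positive on $\mathcal X_0=\mathcal X$. Finally, $\mathcal E_t=L_t$, as noted after \eqref{eq:excess}.

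The step I expect to be the main obstacle is the regularity needed for Lemma~\ref{lem:girsanov}. We need boundedness of $s^\star$ on $[T_0,T]$ in the masked case, where $s^\star$ can vanish. We also need the reversed pair $(p_u,A_u)$ to be a genuine absolutely continuous solution with a conservative generator. I would handle the first with the lower bound from Lemma~\ref{lem:support}. For the second I would verify the reverse Kolmogorov equation directly: for $x$ in the support, compute $\partial_u\mu_{T-u}(x)$ from \eqref{definition_forward_process} and substitute $\mu_t(y)s^\star(x,y,t)=\mu_t(x)$.
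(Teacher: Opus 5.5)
Your proposal is correct and follows essentially the same route as the paper: you apply Lemma~\ref{lem:girsanov} to $p_u=\mu_{T-u}$ and $q_u=\nu_u$ with the ideal and estimated reverse generators. You verify the reversed Kolmogorov equation by substituting $\mu_t(y)s^\star(x,y,t)=\mu_t(x)$ whenever $Q_t(y,x)>0$ (the paper makes explicit that $Q_t(y,x)>0$ and $\mu_t(y)=0$ force $\mu_t(x)=0$, which you should state too), and you integrate \eqref{eq:kl_derivative_bound} using the homogeneity of $D$. The only minor difference is in bounding $s^\star$ on $[T_0,T]$: you use the lower bound on the positive entries of $P_{0,t}$ together with the positive minimum of $\mu_{\mathrm{data}}$ on its support, whereas the paper uses $s^\star\le\kappa_{T_0}$ in the masked case and Lemma~\ref{lem:A} in the uniform case; either suffices for the integrability required.
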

\begin{proof}
To start, we set $\tau:=T-T_0$ and, for $u\in[0,\tau]$ and $x\ne y$,
\[
\tilde Q_u(x,y):=s^\star(x,y,T-u)\,Q_{T-u}(y,x),\qquad \hat Q_u(x,y):=s(x,y,T-u)\,Q_{T-u}(y,x),
\]
with diagonals making the columns sum to zero, according to our convention. We observe that both are measurable in $u$. Moreover, it holds that $$\int_0^\tau\max_{x,y}(|\tilde Q_u|+|\hat Q_u|)\dd u<\infty,$$ in fact $|\hat Q_u(x,y)|\le\sup s\cdot\sigma_{T-u}\max|Q_0|$, and $s^\star_t$ is bounded on $[T_0,T]$: the latter follows in the masked case because $s^\star\le\kappa_t\le\kappa_{T_0}$ by Proposition~\ref{prop:ou}, and in the uniform case by Lemma~\ref{lem:A}, with $\int_0^T\sigma<\infty$.

Call now $p_u:=\mu_{T-u}$ and $q_u:=\nu_u$. By definition $q$ is the absolutely continuous solution of $\partial_u q_u=\hat Q_uq_u$, $q_0=\mu_\infty>0$. We claim that $\partial_u p_u=\tilde Q_up_u$ for a.e.\ $u$, with $p_0=\mu_T$. In order to prove it, for $x\in\cX$, we have that
\[
(\tilde Q_u\mu_{T-u})(x)=\sum_{y\ne x}s^\star(x,y,T-u)Q_{T-u}(y,x)\mu_{T-u}(y)-\mu_{T-u}(x)\sum_{y\ne x}s^\star(y,x,T-u)Q_{T-u}(x,y).
\]
Whenever $Q_t(y,x)>0$ one has $s^\star(x,y,t)\mu_t(y)=\mu_t(x)$: then, if $\mu_t(y)>0$ this is \eqref{definition_true_score}; otherwise, if $\mu_t(y)=0$ then $s^\star=0$ and also $\mu_t(x)=0$ Let us notice that for the uniform kernel $\mu_t>0$, so this case does not occur;on the other hand, in the masked case, $Q_t(y,x)>0$ means $y$ is obtained from $x$ by masking one coordinate, so $A(y)\subseteq A(x)$ and $\mu(X_{A(y)}=y_{A(y)})=0$ forces $\mu(X_{A(x)}=x_{A(x)})=0$). Applying this to both sums,
\begin{align*}
    (\tilde Q_u\mu_{T-u})(x)&=\mu_{T-u}(x)\sum_{y\ne x}Q_{T-u}(y,x)-\sum_{y\ne x}Q_{T-u}(x,y)\mu_{T-u}(y)\\
    &=-(Q_{T-u}\mu_{T-u})(x)\\
    &=\frac{d}{du}\mu_{T-u}(x),   
\end{align*}
by the column convention and (3). This is the time-reversal property behind \eqref{eq:ideal_chain}.

Now, the support of $p_u=\mu_{T-u}$ does not depend on $u\in[0,\tau]$ by Lemma~\ref{lem:support}, applied to $\rho=\mudata$ on $[T_0,T]$, and $q_0>0$. Moreover $\hat Q_u(x,y)>0$ whenever $\tilde Q_u(x,y)>0$ by the hypothesis on $s$, so $D(\tilde Q_u(x,y),\hat Q_u(x,y))<\infty$ for every pair. Integrating \eqref{eq:kl_derivative_bound} over $[0,\tau]$,
\[
\KL(\mu_{T_0}|\nu_\tau)=\KL(p_\tau|q_\tau)\le\KL(p_0|q_0)+\int_0^\tau\sum_y\mu_{T-u}(y)\sum_{x\ne y}D\big(\tilde Q_u(x,y),\hat Q_u(x,y)\big)\dd u .
\]
Since by homogeneity $D(ca,cb)=cD(a,b)$ for $c\ge0,$ with $D(0,0)=0$. Taking $c=Q_{T-u}(y,x)$ the inner sum equals $\sum_{x\ne y}Q_{T-u}(y,x)D(s^\star(x,y,T-u),s(x,y,T-u))$, and the substitution $t=T-u$ turns the integral into $\int_{T_0}^T\cE_t(s)\dd t$. This is \eqref{44}.
\end{proof}

\subsection{Proof of Theorem~\ref{thm:upper_bounds_mask}}

We want to prove in this subsection Theorem~\ref{thm:upper_bounds_mask}.In order to do that, we are going to prove some key lemmas before, and also we are going to need the following results.
\begin{lemma}[\citep{Mourtada_2026}, Section 6.1, Lemma 3]\label{lem:mourtada_gamma}
Let $\hat P_n=(\hat p_j)_j$ with $\hat p_j=\frac{N_j+\lambda}{n+\lambda K^d}$ for some $\lambda\in(0,n/K^d]$ that may depend on
$X_1,\dots,X_n$, and let $\bar p_j:=N_j/n$ be the empirical frequencies. Then
\[
\kll(P|\hat P_n)\le 6\sum_{j}\Big(\sqrt{\bar p_j}-\sqrt{p_j}\Big)^2+\frac{7\lambda K^d}{n}
+\sum_{j:\,p_j\ge4\lambda/n}p_j\log\Big(\frac{2np_j}{\lambda}\Big)\mathbf 1\Big(N_j\le\frac{np_j}{4}\Big).
\]
\end{lemma}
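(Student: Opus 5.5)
The plan is to work term by term with the nonnegative summands of a rewritten KL. Write $M:=K^d$, $q_j:=\hat p_j$, and note that $\sum_j p_j=\sum_j q_j=1$. Then
\[
\kll(P|\hat P_n)=\sum_j\big[p_j\log(p_j/q_j)-p_j+q_j\big]=\sum_j D(p_j,q_j),
\]
where each summand $D(p_j,q_j)\ge0$ is the Bregman divergence introduced above. Since $\lambda\le n/M$, we have $n\le n+\lambda M\le 2n$. This gives the two-sided control
\[
\frac{N_j+\lambda}{2n}\le q_j\le\frac{N_j+\lambda}{n}.
\]
We then split the indices $j$ into three groups and bound $D(p_j,q_j)$ separately on each.

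\textbf{Indices with $p_j\ge 4\lambda/n$ and $N_j\le np_j/4$.} These produce the last sum in the statement. The lower bound gives $q_j\ge\lambda/(2n)$, hence $p_j\log(p_j/q_j)\le p_j\log(2np_j/\lambda)$. The upper bound gives $q_j\le (np_j/4+\lambda)/n\le p_j/4+p_j/4<p_j$, so $-p_j+q_j\le0$. Therefore $D(p_j,q_j)\le p_j\log(2np_j/\lambda)\mathbf 1(N_j\le np_j/4)$.

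\textbf{All remaining indices.} Here either $N_j>np_j/4$, or $p_j<4\lambda/n$. In the first case $q_j\ge N_j/(2n)>p_j/8$. In the second case $q_j\ge\lambda/(2n)>p_j/8$. So in both cases $q_j\ge p_j/8$. I would then use the elementary inequality that, for $q\ge p/a$,
\[
D(p,q)\le C(a)(\sqrt p-\sqrt q)^2.
\]
To prove it, set $u=\sqrt{p/q}\in[0,\sqrt a]$ and compare $u^2\log u^2-u^2+1$ with $(u-1)^2$ on a compact interval; the ratio is bounded since both vanish to second order at $u=1$. Next split
\[
(\sqrt{p_j}-\sqrt{q_j})^2\le 2(\sqrt{p_j}-\sqrt{\bar p_j})^2+2(\sqrt{\bar p_j}-\sqrt{q_j})^2.
\]
The first piece, summed over $j$, gives the $\sum_j(\sqrt{\bar p_j}-\sqrt{p_j})^2$ term.

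\textbf{The smoothing bias.} For the second piece, write $\bar p_j-q_j=\dfrac{N_j\lambda M/n-\lambda}{n+\lambda M}$, so $|\bar p_j-q_j|\le(\bar p_j\lambda M+\lambda)/n$. Now use $(\sqrt a-\sqrt b)^2\le|a-b|$ together with $\sum_j\bar p_j=1$. Summing over $j$ gives at most $2\lambda M/n$, which yields the $\lambda M/n$ term.

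I expect the main obstacle to be matching the explicit constants $6$ and $7$ rather than the structure of the argument. The ratio threshold $a=8$ forces a concrete value of $C(8)$, and one may need to refine the case split (for example, thresholding at $N_j>np_j/4$ with the sharper lower bound $q_j\ge(N_j+\lambda)/(2n)$) or bound the $p_j<4\lambda/n$ indices directly. In that last case one would use $D(p_j,q_j)\le p_j\log 8+q_j$, paying an $O(\lambda M/n)$ contribution in place of the square-root term. Since $\lambda$ may depend on the sample, one should also check that nothing above uses independence of $\lambda$; every bound is pointwise in the data, so this causes no difficulty.
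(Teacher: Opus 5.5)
The paper gives no proof of this lemma (it is imported from Mourtada), so I judge your argument on its own terms. The skeleton is sound. Writing $\kll(P|\hat P_n)=\sum_j D(p_j,\hat p_j)$ is right, and so are the bad-set bound $D(p_j,\hat p_j)\le p_j\log(2np_j/\lambda)$ and the bound $\hat p_j>p_j/8$ off the bad set. The data-dependence of $\lambda$ is indeed harmless.

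The gap is that, as written, your argument does not prove the inequality with the stated constants $6$ and $7$. These constants are part of the claim: the paper feeds the explicit $6H_n$ and $7\gamma/n$ into its numerical constants downstream. There are two problems.
\begin{itemize}
\item You never compute $C(8)$. To get the leading $6$ you need $C(8)\le 3$, i.e.\ $h(v)\le 3(\sqrt v-1)^2$ on $[0,8]$.
\item The bias step genuinely fails. The bound $(\sqrt a-\sqrt b)^2\le|a-b|$ only gives $\sum_j(\sqrt{\bar p_j}-\sqrt{\hat p_j})^2\le 2\lambda K^d/n$. Your coefficient on $\lambda K^d/n$ is then $4C(8)\approx 11.5$, since $C(8)=h(8)/(\sqrt8-1)^2\approx 2.88$. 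With this uniform constant and this bias bound, no weighted split $(x+y)^2\le(1+\eta)x^2+(1+\eta^{-1})y^2$ makes the two coefficients at most $6$ and $7$ simultaneously.
\end{itemize}
Your fallback makes things worse. The sum $\sum_{p_j<4\lambda/n}p_j\log 8$ can approach $4\log 8\,\lambda K^d/n>7\lambda K^d/n$. Moreover, $\sum\hat p_j$ over those indices contains the empirical mass $\sum N_j/(n+\lambda K^d)$, which is not $O(\lambda K^d/n)$ pointwise.

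Both holes close within your framework.

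\emph{Showing $C(8)\le 3$.} With $u=\sqrt v$, the claim is that $\varphi(u):=4u^2-6u+2-2u^2\log u\ge 0$ on $[0,2\sqrt2]$. We have $\varphi(1)=\varphi'(1)=0$ and $\varphi''(u)=2-4\log u$. So $\varphi$ is convex with a double zero at $1$ on $[0,e^{1/2}]$, hence nonnegative there. On $[e^{1/2},2\sqrt2]$ it is concave, so it is bounded below by $\min\{\varphi(e^{1/2}),\varphi(2\sqrt2)\}$, and $\varphi(2\sqrt2)=34-12\sqrt2-24\log 2>0$.

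\emph{Fixing the bias bound.} Replace the $\ell^1$ estimate by the Hellinger affinity. Since $\hat p_j\ge\frac{n}{n+\lambda K^d}\,\bar p_j$, and using $(1+x)^{-1/2}\ge 1-x/2$,
\[
\sum_j\bigl(\sqrt{\bar p_j}-\sqrt{\hat p_j}\bigr)^2=2-2\sum_j\sqrt{\bar p_j\hat p_j}\le 2\bigl(1-(1+\lambda K^d/n)^{-1/2}\bigr)\le\frac{\lambda K^d}{n}.
\]
With your equal-weight split, the indices off the bad set then contribute at most $6\sum_j(\sqrt{\bar p_j}-\sqrt{p_j})^2+6\lambda K^d/n$. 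Together with your bad-set bound, this gives the lemma with room to spare ($6\le 7$).
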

\begin{lemma}[\citep{Mourtada_2026},Lemma 4]\label{lem:mourtada_14}
We use the same notation as Lemma~\ref{lem:mourtada_gamma}. For $n,K^d\geq 2,$ and $\delta\in(0,1),$ with probability at least $1-2\delta$,
\begin{equation}
  \sum_{j\in\mathcal{X}_0 }\Big(\sqrt{\bar p_j} - \sqrt{p_j}\Big)^2<\frac{4 k_{\mathrm{eff}}(\data,n)+7 \log\frac1\delta}{n}.
\end{equation}
\end{lemma}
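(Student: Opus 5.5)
The statement to prove is Lemma~\ref{lem:mourtada_14}: a high-probability bound on the squared Hellinger-type distance between the empirical frequencies and $P$. The plan is to control it through a mean bound plus concentration.

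\textbf{Pointwise expectation.} For each $j$, use
\[
(\sqrt{\bar p_j}-\sqrt{p_j})^2=\frac{(\bar p_j-p_j)^2}{(\sqrt{\bar p_j}+\sqrt{p_j})^2}\le \min\Big\{\frac{(\bar p_j-p_j)^2}{p_j},\ \bar p_j+p_j\Big\}.
\]
Taking expectations gives $\mathbb E(\sqrt{\bar p_j}-\sqrt{p_j})^2\le \min\{(1-p_j)/n,\,2p_j\}\le 2\min\{1,np_j\}/n$. Summing over $j$ yields $\mathbb E\sum_j(\sqrt{\bar p_j}-\sqrt{p_j})^2\le 2k_{\mathrm{eff}}(\data,n)/n$, using Definition~\ref{def:effective_support_missing}.

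\textbf{Deviation.} Write $Z:=\sum_j(\sqrt{\bar p_j}-\sqrt{p_j})^2=2-2\sum_j\sqrt{p_j\bar p_j}$, so $Z$ deviates from its mean exactly as $-2\sum_j\sqrt{p_j\bar p_j}$ does. The map $(X_1,\dots,X_n)\mapsto \sqrt{n}\,\big\|(\sqrt{\bar p_j})_j-(\sqrt{p_j})_j\big\|_2=\sqrt{nZ}$ is a convex function of the count vector. Changing one sample moves one unit of count between two coordinates, and since $|\sqrt{a+1}-\sqrt a|\le1$, the vector $(\sqrt{N_j})_j$ moves by at most $\sqrt2$ in $\ell_2$. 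Hence $\sqrt{nZ}=\|(\sqrt{N_j})_j-(\sqrt{np_j})_j\|_2$ is $\sqrt2$-Lipschitz under a single-sample change.

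McDiarmid's inequality alone would give a deviation of order $\sqrt n$, which is too crude. Instead I would apply a self-bounding / Efron--Stein-type exponential inequality for $f=\sqrt{nZ}$: the sum of squared one-sample changes is at most $2\sum_k\mathbf 1(\cdots)$, and it can be controlled through Boucheron--Lugosi--Massart's bound for functions with bounded differences of the form $\sum_k (f-f^{(k)})_+^2\le c$. That bound yields
\[
\sqrt{nZ}\le \mathbb E\sqrt{nZ}+\sqrt{4\log(1/\delta)}
\]
with probability $1-\delta$, with a variance proxy bounded by a constant rather than by $n$. Squaring, using Jensen $\mathbb E\sqrt{nZ}\le\sqrt{2k_{\mathrm{eff}}}$ and $(a+b)^2\le 2a^2+2b^2$, gives $nZ\le 4k_{\mathrm{eff}}+8\log(1/\delta)$. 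The constant $8$ is close to the required $7$ and needs a slightly sharper tuning.

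\textbf{Main obstacle.} The delicate step is verifying the bounded-variance condition $\sum_k(f(X)-f(X^{(k)}))_+^2\le C$ uniformly, rather than the naive $2n$. The route I would try first is to write $f(X)=\sup_{\|u\|_2\le1}\langle u,\sqrt N-\sqrt{np}\rangle$ and use the maximizer $u^\ast$. Replacing $X_k$ lowers the count at $j=X_k$, so
\[
(f-f^{(k)})_+\le u^\ast_{X_k}(\sqrt{N_{X_k}}-\sqrt{N_{X_k}-1})\le u^\ast_{X_k}/\sqrt{N_{X_k}}.
\]
Summing over $k$ gives $\sum_j N_j (u^\ast_j)^2/N_j\le1$, so the condition holds with $C=1$. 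Then Boucheron--Lugosi--Massart gives sub-Gaussian upper tails $\mathbb P(f\ge\mathbb Ef+t)\le e^{-t^2/2}$. The remaining task is tracking constants, including the extra $\delta$ (hence the $1-2\delta$) used to relate $\mathbb Ef$ to the mean bound, so as to land on $4k_{\mathrm{eff}}+7\log(1/\delta)$.
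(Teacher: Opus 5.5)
\textbf{There is a genuine gap in the deviation step.} First, the paper gives no proof of this lemma; it imports it from \citet{Mourtada_2026}, so your argument has to stand on its own. Your mean bound $\mathbb E[nZ]\le 2k_{\mathrm{eff}}$ is correct.

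The gap is at the step you flagged. When $X_k=j$ is replaced by $X_k'=j'\neq j$, the count vector loses one unit at $j$ \emph{and gains one at $j'$}. With $u^\ast=(\sqrt N-\sqrt{np})/f$, the correct estimate is therefore
\[
f(X)-f(X^{(k)})\le u^\ast_j\bigl(\sqrt{N_j}-\sqrt{N_j-1}\bigr)+\bigl(-u^\ast_{j'}\bigr)\bigl(\sqrt{N_{j'}+1}-\sqrt{N_{j'}}\bigr).
\]
The second term is positive whenever $N_{j'}<np_{j'}$: an extra observation in an under-represented cell pulls $\sqrt{N_{j'}}$ toward $\sqrt{np_{j'}}$ and lowers $f$. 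You dropped this term, and it is not uniformly small. After averaging over $X_k'\sim P$ and summing over $k$, it contributes a term of the form $\sum_{j'}(u^\ast_{j'})^2\,np_{j'}/(\sqrt{N_{j'}+1}+\sqrt{N_{j'}})^2$ over cells with $N_{j'}<np_{j'}$. This is of order $np_{j'}$ on cells with $N_{j'}\ll np_{j'}$.

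Concretely, take $p=(1/2,1/2)$ on two atoms and the sample with all $n$ points on the first atom. Then $f=\sqrt{(2-\sqrt2)\,n}$. Every replacement landing on the second atom lowers $f$ by about $(4-2\sqrt2)^{-1/2}\approx 0.92$, whereas your bound predicts $O(n^{-1/2})$. Hence $\sum_k\mathbb E'(f-f^{(k)})_+^2\approx 0.43\,n$. The worst-case quantity $\sum_k(f-\inf_{x'}f^{(k)})^2$, which you would need for an $e^{-t^2/2}$ tail, is even larger. The Boucheron--Lugosi--Massart sub-Gaussian inequality requires the variance proxy to be bounded almost surely, so it cannot be invoked. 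The bound $\sqrt{nZ}\le\mathbb E\sqrt{nZ}+O(\sqrt{\log(1/\delta)})$ therefore remains unproved.

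\textbf{Possible repairs.} Such samples are exponentially rare, so you could try the general exponential Efron--Stein bound $\log\mathbb E e^{\lambda(f-\mathbb E f)}\le\frac{\lambda\theta}{1-\lambda\theta}\log\mathbb E e^{\lambda V_+/\theta}$. But this needs exponential moments of a $(u^\ast)^2$-weighted under-represented mass, which couples $u^\ast$ to the counts and is about as hard as the lemma itself.

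A cleaner route, which also matches the $1-2\delta$ in the statement, is to split
$nZ=\sum_j(\sqrt{N_j}-\sqrt{np_j})_+^2+\sum_j(\sqrt{np_j}-\sqrt{N_j})_+^2$.
\begin{itemize}
\item Each sum consists of monotone functions (nondecreasing, resp.\ nonincreasing) of the negatively associated multinomial counts. Its moment generating function is therefore at most the product of the binomial marginal ones.
\item For one coordinate with $m=np_j$, use the Chernoff bounds $\mathbb P(N_j\ge a)\le e^{-D(a,m)}$ for $a\ge m$ and $\mathbb P(N_j\le a)\le e^{-D(a,m)}$ for $a\le m$. Combined with $D(a,b)=a\log\frac ab-a+b\ge(\sqrt a-\sqrt b)^2$, they bound each coordinate's log-moment generating function at $\lambda<1$ by $\log\frac{1}{1-\lambda}$.
\item When $m<1$, the trivial bounds $(\sqrt{N_j}-\sqrt m)_+^2\le N_j$ and $(\sqrt m-\sqrt{N_j})_+^2\le m$ give $O(\lambda m)$.
\item So each coordinate contributes $\lesssim\lambda\min\{1,np_j\}$. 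Two Chernoff bounds, one $\delta$ each, conclude after tuning $\lambda$.
\end{itemize}

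\textbf{Minor points.}
\begin{itemize}
\item $N\mapsto\|\sqrt N-\sqrt{np}\|_2$ is not convex: in one coordinate, $|\sqrt N-\sqrt m|$ is concave for $N>m$.
\item The step $\mathbb E\sqrt{nZ}\le\sqrt{2k_{\mathrm{eff}}}$ is deterministic, so it costs no extra $\delta$.
\item You mix the $e^{-t^2/(4c)}$ tail of the independent-copy $V_+$ version with the $e^{-t^2/(2c)}$ tail of the infimum version.
\end{itemize}
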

\begin{lemma}[\citep{Mourtada_2026},Lemma 5]\label{lem:mourtada_lemma_5}
In the same setting as Lemma~\ref{lem:mourtada_gamma}, for $n\geq 12, \lambda=1, K^d\geq 2$ and $\delta\in(e^{-n/6},e^{-2}),$ with probability at least $1-2\delta,$
\begin{equation}
R_\lambda:=\sum_{j:\,p_j\ge4\lambda/n}p_j\log\Big(\frac{2np_j}{\lambda}\Big)\mathbf 1\Big(N_j\le\frac{np_j}{4}\Big) \leq \frac{62000|\{x:p(x)\geq \frac4n\}|+106000\log\frac1\delta\log\log\frac1\delta}{n}.
\end{equation}
\end{lemma}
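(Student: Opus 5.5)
The statement is a deviation bound for the weighted sum $R_1=\sum_{j\in J}w_j I_j$. Here $J:=\{j:p_j\ge4/n\}$, $u_j:=np_j\ge4$, $w_j:=p_j\log(2np_j)=u_j\log(2u_j)/n$ and $I_j:=\mathbf 1(N_j\le u_j/4)$. I plan to prove it in three steps: a Chernoff bound for each indicator, a moment-generating-function bound that uses negative association of the multinomial counts, and a split of $J$ into moderate and heavy atoms. Throughout, $m:=|J|$.

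\emph{Step 1 (marginal tails).} Each $N_j\sim\mathrm{Bin}(n,p_j)$. The multiplicative Chernoff lower tail at level $1/4$ of the mean gives $q_j:=\mathbb P(I_j=1)\le\exp(-c\,u_j)$ for an absolute $c$; taking $c=(3/4)^2/2$ is enough. In particular
\[
\mathbb E R_1\le\frac1n\sum_{j\in J}u_j\log(2u_j)e^{-cu_j}\le\frac{C m}{n},
\]
because $u\mapsto u\log(2u)e^{-cu}$ is bounded on $[4,\infty)$.

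\emph{Step 2 (exponential moments).} The counts $(N_j)_j$ of a multinomial vector are negatively associated (Joag-Dev--Proschan). The indicators $I_j$ are non-increasing functions of disjoint coordinates, so they are negatively associated as well. Hence, for every $\lambda\ge0$,
\[
\mathbb E\,e^{\lambda\sum_{j\in J'}w_jI_j}\le\prod_{j\in J'}\bigl(1+q_j(e^{\lambda w_j}-1)\bigr)\le\exp\Big(\sum_{j\in J'}e^{-cu_j}\bigl(e^{\lambda u_j\log(2u_j)/n}-1\bigr)\Big)
\]
for any $J'\subseteq J$. Markov's inequality then converts this into a tail bound. The difficulty is that $u\log u$ eventually beats $cu$, so no single $\lambda$ of order $n$ controls all atoms at once.

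\emph{Step 3 (split by size).} Set $U:=C_1\log(1/\delta)$ and split $J$ into moderate atoms $J_{\le}=\{u_j\le U\}$ and heavy atoms $J_{>}=\{u_j>U\}$.

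For $J_{\le}$, I take $\lambda=n/(C_2\log(2U))$. Then $\lambda w_j\le u_j/C_2\le cu_j/2$, so each summand in the exponent is at most $C e^{-cu_j/2}\cdot u_j/\log(2U)$, which is bounded, and the exponent is $\le Cm$. Markov's inequality yields, with probability $\ge1-\delta$,
\[
\sum_{J_\le}w_jI_j\le\frac{C\log(2U)\,(m+\log(1/\delta))}{n}.
\]
This has the form $\frac{C\,m+C\log\frac1\delta\log\log\frac1\delta}{n}$, up to the factor $\log(2U)$ multiplying $m$. That factor must be removed. I would do this by peeling $J_\le$ into dyadic scales $u_j\in[2^k,2^{k+1})$ and running the same argument on each scale with a scale-dependent $\lambda_k$ and confidence level $\delta_k=\delta 2^{-k}$. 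The factor $e^{-c2^k}$ in the exponent is then strong enough to make the $m$-part summable over $k$, leaving the $\log\log$ factor only on the $\log(1/\delta)$ term.

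For $J_{>}$, I aim to show that with probability $\ge1-\delta$ either no heavy atom is undercounted, or the undercounted heavy atoms carry total weight $\lesssim\log(1/\delta)\log\log(1/\delta)/n$. Two facts are available: $|J_>|\le n/U$, and, by negative association, $\mathbb P(I_j=1\ \forall j\in S)\le\prod_{j\in S}e^{-cu_j}$ for every $S\subseteq J_>$. I would again peel into dyadic scales $2^k\ge U$ and use the MGF bound with $\lambda_k$ of order $n/(2^k\log 2^{k})$. On scale $k$ this produces terms of size $\exp(-c2^k/2)$, which I then sum over $k$.

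\emph{Main obstacle.} The hardest part is the heavy regime. A naive union bound gives $\sum_{J_>}e^{-cu_j}\le (n/U)e^{-cU}$, which is not $\le\delta$ under the sole assumption $\delta>e^{-n/6}$. The peeling therefore has to exploit the full weight bound $w_j\le u_j\log(2u_j)/n$ together with the product bound over subsets, rather than counting atoms. I expect the large numerical constants $62000$ and $106000$ to come from exactly this bookkeeping. If the direct peeling does not close, my fallback is to bound the undercounted heavy mass through the aggregated count $N(S)\le nP(S)/4$, which has tail $\le e^{-cnP(S)}$, and to handle the choice of $S$ by a chaining over scales.

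Finally, I union-bound the events from the moderate and heavy parts, giving total failure probability at most $2\delta$.
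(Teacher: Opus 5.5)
The paper gives no proof of this lemma; it quotes it from \citet{Mourtada_2026}, so your argument has to stand on its own. Steps~1 and~2 are sound. On $J_\le$ you are in fact already done. The factor $\log(2U)$ in front of $m$ is an artifact of bounding $e^{-cu_j/2}u_j/\log(2U)$ by a constant. Take $\lambda=cn/(2\log(2U))$ and use $e^x-1\le xe^x$. Since $u\mapsto u\log(2u)e^{-cu/2}$ is bounded, this gives $\log\mathbb E\,e^{\lambda S_\le}\le\lambda\sum_{J_\le}w_je^{-cu_j/2}\le\lambda Cm/n$. Markov then yields $S_\le\le\big(Cm+\tfrac2c\log(2U)\log\tfrac1\delta\big)/n$ with probability at least $1-\delta$. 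This is already of the target form, because $\log\log\frac1\delta\ge\log2$. The dyadic peeling you propose there would actually hurt. On scale $k$ the admissible $\lambda_k$ is at most of order $n/k$, so each scale pays at least $k\log(1/\delta_k)/n\ge k\log\frac1\delta/n$ in the Markov step. Summing over $k\le\log_2U$ gives $\log\frac1\delta(\log\log\frac1\delta)^2$, not $\log\frac1\delta\log\log\frac1\delta$.

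The genuine gap is the heavy regime, which you leave open, and the scale-by-scale MGF-plus-Markov route you sketch cannot close it. With your $\lambda_k\asymp n/(2^kk)$, the additive cost $\log(1/\delta_k)/\lambda_k$ is of order $2^kk\log\frac1\delta/n$. Even the largest admissible $\lambda_k\asymp n/k$ leaves $k\log\frac1\delta/n$ per scale, over up to $\log_2(n/U)$ scales, which is an $n$-dependent bound. The missing idea is that heavy atoms need no $\log\frac1\delta$ budget at all: by integrality they can be charged entirely to the $m$-term.

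For a dyadic scale with $2^{k+1}>U$, let $m_k$ be the number of atoms of $J_>$ with $u_j\in[2^k,2^{k+1})$, and $Z_k$ the number of those with $I_j=1$. This scale contributes at most $Z_k2^{k+1}\log(2^{k+2})/n$. The subset product bound you already noted gives $\mathbb P(Z_k\ge t)\le\binom{m_k}{t}e^{-c2^kt}\le(em_ke^{-c2^k}/t)^t$. Split into two cases, with $C_1$ a large enough absolute constant throughout.

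\emph{Few atoms.} If $m_k\le\frac2ck(\log\frac1\delta+k)$, a union bound gives $\mathbb P(Z_k\ge1)\le m_ke^{-c2^k}\le\delta2^{-k}$.

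\emph{Many atoms.} Otherwise take $t=\lceil m_k/(2^kk)\rceil$, so that $m_k/t\le2^kk$. Then $\mathbb P(Z_k\ge t)\le(e2^kke^{-c2^k})^t\le e^{-tc2^k/2}\le e^{-cm_k/(2k)}\le\delta2^{-k}$. On the complement $Z_k<m_k/(2^kk)$, so the scale contributes $O(m_k/n)$.

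Summing over $k$ bounds the heavy part by $O(m/n)$ with probability at least $1-\delta$. The naive union bound fails only when $m_k$ is large, which is exactly when the $62000\,|\{x:p(x)\ge4/n\}|$ term has room. What your plan is missing is to count atoms per scale and charge them to $m$, rather than to avoid such counting.
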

\begin{lemma}[\citep{Mourtada_2026}, Theorem 6]\label{lem:mourtada_theorem_6}
In the same setting as Lemma~\ref{lem:mourtada_gamma}, let
\begin{equation}
U_n:=\sum_x p(x)\mathbf{1}(N_x\leq np(x)/4).
\end{equation}
For $n,K^d\geq 2$ and $\delta\in(0,e^{-1}),$ with probability at least $1-8\delta$
\begin{equation}
    U_n\leq \frac{336 k_{\mathrm{eff}}^o(\data, n/112) + 2500e\log\frac1\delta}{n}.
\end{equation}
\end{lemma}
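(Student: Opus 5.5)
The plan is to bound the exponential moment of $nU_n$ and then apply Markov's inequality. This avoids any variance-type (Bernstein) argument, which would break down because the weights $p(x)$ can be of order one. Write $N_x$ for the count of $x$, so that $N_x\sim\mathrm{Bin}(n,p(x))$, and set $B_x:=\mathbf 1(N_x\le np(x)/4)$. Then
\[
nU_n=\sum_x np(x)\,B_x .
\]

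\textbf{Step 1 (marginal tail).} By the multiplicative Chernoff bound for the lower tail,
\[
\mathbb P(B_x=1)\le \exp\bigl(-np(x)\,h(1/4)\bigr),\qquad h(u)=u\log u-u+1 ,
\]
and $h(1/4)\approx0.40$. Put $c':=h(1/4)$.

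\textbf{Step 2 (decoupling).} The multinomial counts $(N_x)_x$ are negatively associated (Joag-Dev--Proschan). Each $B_x$ is a nonincreasing function of a single coordinate $N_x$, and the coordinates used are disjoint. Hence, for $\lambda>0$, the functions $\exp(\lambda np(x)B_x)$ are nonincreasing functions of disjoint coordinates, and negative association gives
\[
\mathbb E\,e^{\lambda nU_n}\le\prod_x\mathbb E\,e^{\lambda np(x)B_x}=\prod_x\Bigl(1+(e^{\lambda np(x)}-1)\,\mathbb P(B_x=1)\Bigr).
\]
I expect this to be the only genuinely delicate point. It requires checking that negative association applies to products of nonincreasing functions, which it does, since monotonicity in the same direction is all that is needed. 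As a fallback, one can Poissonize $n$ and de-Poissonize at the cost of a constant factor in $n$.

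\textbf{Step 3 (log-MGF versus $k^o_{\mathrm{eff}}$).} Choose $\lambda$ a small fraction of $c'$, for instance $\lambda=c'/2$ or smaller. Use $\log(1+z)\le z$ and split the sum according to $np(x)$.
\begin{itemize}
\item For $np(x)\le1$: $(e^{\lambda np}-1)\le e^{\lambda}\lambda\, np$, so the term is at most $C\,np(x)$.
\item For $np(x)\ge1$: the term is at most $e^{-(c'-\lambda)np(x)}$.
\end{itemize}
This gives
\[
\log\mathbb E\,e^{\lambda nU_n}\le C\sum_x\min\{np(x),\,e^{-(c'-\lambda)np(x)}\}.
\]
Writing $np(x)=m\cdot (n/m)p(x)$ with $m=1/(c'-\lambda)$, and using $\min\{a\,m,e^{-a}\}\le m\,\min\{a,e^{1-a}\}$ for $a=(n/m)p(x)$, we obtain
\[
\log\mathbb E\,e^{\lambda nU_n}\le C'\,k^o_{\mathrm{eff}}(\data,n/m).
\]
Since $k^o_{\mathrm{eff}}(\data,\cdot)$ can be compared across rescalings of $n$, the specific value $n/112$ is reached by taking $\lambda$ smaller, so that $m$ is large enough; $112$ is simply where the author's bookkeeping of these constants lands.

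\textbf{Step 4 (conclusion).} By Markov's inequality,
\[
\mathbb P\Bigl(nU_n\ge \lambda^{-1}\bigl(C'k^o_{\mathrm{eff}}+\log(1/\delta)\bigr)\Bigr)\le\delta .
\]
This is already of the stated form, with constants $336$ and $2500e$ after tracking $\lambda$ and $c'$. The slack $8\delta$ (rather than $\delta$) leaves room to treat the regimes $np<1$ and $np\ge1$, or dyadic groups of $p$, separately with a union bound if a sharper constant is desired.
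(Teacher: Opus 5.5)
Your argument is correct. Since the paper gives no proof of this lemma (it quotes it from \citet{Mourtada_2026}), your route is necessarily different from anything in the text, and it is a valid self-contained replacement.

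The ingredients check out:
\begin{itemize}
\item The lower-tail bound $\mathbb P(N_x\le np(x)/4)\le e^{-c'np(x)}$ with $c'=h(1/4)\approx0.403$ is the standard Poisson-type Chernoff bound.
\item Negative association of the multinomial counts gives $\mathbb E\,e^{\lambda nU_n}\le\prod_x\mathbb E\,e^{\lambda np(x)B_x}$, because the factors are positive, nonincreasing, and depend on disjoint coordinates. This is the same tool the paper uses for $D_n$ in Lemma~\ref{lem:D_n}.
\end{itemize}

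Two corrections to Step~3. First, the direction is reversed. Since $m=1/(c'-\lambda)$ increases as $\lambda\uparrow c'$, landing on $n/112$ requires a \emph{larger} $\lambda$, namely $\lambda=c'-1/112$. Alternatively, keep $\lambda=c'/2$ and use that $t\mapsto k^o_{\mathrm{eff}}(\mu_{\mathrm{data}},t)/t$ is nonincreasing (because $a\mapsto\min\{a,e^{1-a}\}/a$ is), paying a factor $112/m$.

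Second, the statement has explicit constants, so you should verify them rather than assert them. With $\lambda=c'-1/112$ and $b=np(x)$,
\[
\lambda^{-1}\bigl(e^{\lambda b}-1\bigr)e^{-c'b}\le\min\{b,\lambda^{-1}\}\,e^{-b/112}\le 112\,\min\{b/112,\,e^{1-b/112}\}.
\]
Markov's inequality then gives, with probability at least $1-\delta$,
\[
nU_n\le 112\,k^o_{\mathrm{eff}}(\mu_{\mathrm{data}},n/112)+\lambda^{-1}\log(1/\delta),\qquad \lambda^{-1}<2.6 .
\]
This is strictly stronger than the quoted statement: failure probability $\delta$ instead of $8\delta$, and constants $112$ and $2.6$ instead of $336$ and $2500e$. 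So your closing remark about spending the $8\delta$ slack on a union bound over regimes is unnecessary.

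As for what each approach buys: the citation buys the paper only brevity. Your one-shot exponential-moment argument is short and would sharpen the proof of Lemma~\ref{lem:upper_bound_probability_mask}. There the $6796\,L_\gamma\log\frac1\delta$ term would drop to about $2.6\,L_\gamma\log\frac1\delta$, and the good event would have probability at least $1-5\delta$ instead of $1-12\delta$.
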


We define and recall now some objects that are going to be useful for the proof of the theorem. Given our sample $S=(X_0^{(1)},\dots,X_0^{(n)})$, we call $\mu_0^{(n)}:=\frac1n\sum_i\delta_{X_0^{(i)}}$ and $a_x:=n\data(x)$.  For
$\gamma\in[1,\min\{n,K^d\}]$ we set $L_\gamma:=\log(eK^d/\gamma)\ge1$ and denote by $\hat\mu_0^{(\gamma)}$ the estimator \eqref{eq:intro_smoothed_estimator} with parameter $\gamma$,
\[
\hat\mu_0^{(\gamma)}(x)=\frac{N_0(x)+\gamma/K^d}{n+\gamma},\qquad x\in\mathcal X_0 .
\]
The parameter $\gamma$ is a measurable map $\gamma:\mathcal X_0^n\to[1,\min\{n,K^d\}]$, and
$\hat\mu_0:=\hat\mu_0^{(\gamma(S))}$. A deterministic $\gamma$ is the special case of a constant map. Finally we define three quantities dependent on the sample, but not on the parameter
\begin{align*}
H_{n}&:=\sum_{x}\Big(\sqrt{\mu^{(n)}_0(x)}-\sqrt{\data(x)}\Big)^2,\qquad
U_n:=\sum_{x}\data(x)\,\mathbf 1\big(N_0(x)\le a_x/4\big),\\
V_n&:=\sum_{x:\,a_x\ge1/2}\data(x)\log(2a_x)\,\mathbf 1\big(N_0(x)\le a_x/4\big).
\end{align*}
They do not depend on $\gamma$ and are nonnegative, since $\log(2a_x)\ge0$ on the range of $V_n$. We observe that above $U_n$ is the mass of the classes that empirically are observed less that one forth of the their true frequency. 

We prove now Lemma~\ref{lem:upper_bound_expectation_mask}.

\begin{proof}[Proof of Lemma~\ref{lem:upper_bound_expectation_mask}]
To start, we notice that, thanks to Lemma~\ref{lem:mourtada_gamma} we can immediately bound our object. In order to do that, we fix a sample $S$ and a $\gamma\in[1,\min\{n,K^d\}].$ Taking $\lambda=\gamma/K^d$ and $\hat p=\hat\mu_0^{(\gamma)},$ we have that
\begin{equation}
    \kll(\data\mid \hat\mu_0^{(\gamma)})\leq 6H_n + \frac{7\gamma}{n}+ R_\gamma,
\end{equation}
where $$R_\gamma:=\sum_{x:a_x\geq 4\gamma/K^d}\data(x)\log\frac{2a_xK^d}{\gamma}\mathbf{1}(N_0(x)\leq a_x/4).$$
Moreover, we can rewrite the logarithm as $\log(2a_x)+\log\frac{K^d}{\gamma}.$ Then, $R_\gamma\leq \log\frac{K^d}{\gamma}U_n+V_n,$ where $\log\frac{K^d}{\gamma}\geq 0$ because $\gamma\leq K^d$ by assumption. We notice that above the range of the sums changed: in fact, in the case of $V_n$ it includes only the terms for which $\log(2a_x)\geq 0$, i.e. $a_x\geq 1/2,$ because the others are negative; in the other hand, the range in the case of $U_n$ is the whole $\mathcal{X}_0$, so we are adding more positive terms. We then arrive to the following upper bound
\begin{equation}\label{eq:jsaispas}
    \kll(\data\mid \hat\mu_0^{(\gamma)})\leq 6H_n+\frac{7\gamma}{n}+ V_n+\log\frac{K^d}{\gamma}U_n,
\end{equation}
where $H_n,V_n,U_n$ do not depend on $\gamma.$

We control now these three terms in expectation. To begin, we observe that when $\data(x)=0$ then $\bar p(x)=0,$ so those $x$ are negligible. For $x$ such that $\data(x)>0,$ we use that $(\sqrt{a}-\sqrt{b})^2=\frac{(a-b)^2}{(\sqrt{a}+\sqrt{b})^2}\leq \frac{(a-b)^2}{b}$ and that $(\sqrt{a}-\sqrt{b})^2\leq a+b.$ Moreover, $\mathbb{E}_S \mu_0^{(n)}(x)=\data(x)$ and, since our sample is i.i.d, $\mathbb{E}(\mu_0^{(n)}(x)-\data(x))^2=\data(x)(1-\data(x))/n,$ thus
\begin{equation}
    \mathbb{E}_S(6H_n)\leq 6\sum_x \min\{\frac1n, 2\data\}\leq \sum_x\frac{12}{n}\min\{1,a_x\}=\frac{12 k_{\mathrm{eff}}}{n}.
\end{equation}
Using again the fact that our sample is i.i.d, we observe immediately that $N_0(x)\sim \mathrm{Bin}(n,\data(x)).$ Thanks to the multiplicative Chernoff bound, $\mathbb{P}(N_0(x)\leq(1-\theta)n\data(x))\leq e^{-\theta^2 n\data(x)/2},$ so in particular $\mathbb{P}(N_0(x)\leq a_x/4)\leq e^{-9 n\data(x)/32},$ where we took $\theta=\frac34.$ Hence,
\begin{equation}
    \mathbb{E}_SU_n\leq \frac{1}{n}\sum_x a_xe^{-9a_x/32}\qquad \mathrm{and}\qquad \mathbb{E}_S V_n\leq \frac1n \sum_{x:a_x\geq 1/2}a_x\log(2a_x)e^{-9a_x/32}.
\end{equation}
We split the analysis of the terms inside the sums into two cases. If $a_x\leq 8,$ then $a_xe^{-9a_x/32}\leq a_x,$ since the exponent is negative, and $a_x\log(2a_x)e^{-9a_x/32}\leq a_x\log(16).$ On the other hand, if $a_x\geq 8,$ 
\begin{align*}
a_xe^{-9a_x/32}&=a_x e^{-5a_x/32} e^{-a_x/8}\\
&\leq e^{-a_x/8} \sup_{a_x>0} (a_x e^{-5a_x/32})\\
&= e^{-a_x/8} \frac{32}{5e}\\
&\leq e^{-a_x/8} e \\
&\leq e^{1-a_x/8}.
\end{align*}
Moreover, 
\begin{align*}
a_x\log(2a_x)e^{-9a_x/32}&\le\sqrt2\,a_x^{3/2}e^{-5a_x/32}e^{-a_x/8}\\
&\le\sqrt2\,e^{-a_x/8}\sup_{a>0}a^{3/2}e^{-5a/32}\\
&=\sqrt2\Big(\frac{48}{5e}\Big)^{3/2}e^{-a_x/8}\\
&\le 4e^{1-a_x/8},
\end{align*}
where above we used that $\log u\leq \sqrt{u}$ for $u>0$ and $\sup_{a>0} a^p e^{-ca}=(\frac{p}{ce})^p.$
If we call $m(a):=\min\{e^{1-a/8},a/8\},$ we notice that we gave as upper bound always a multiple of $m(a_x).$ Noticing that $\sum_x m(a_x)=k_{\mathrm{eff}}^o(\data,n/8),$ we get
\begin{equation}
    \mathbb{E}_S U_n\leq \frac{8 k^o_{\mathrm{eff}}(\data, n/8)}{n}\qquad \mathrm{and} \qquad \mathbb{E}_S V_n\leq \frac{23 k_{\mathrm{eff}}^o(\data,n/8)}{n}.
\end{equation}

We can now finish the proof. We just need to control the term $\log\frac{K^d}{\gamma}U_n$. We notice first of all that, since $\gamma$ depends on $S$ and the function $\gamma\mapsto \log\frac{K^d}{\gamma}$ is decreasing in $[\gamma_0,K^d]$, we have that if $\gamma \geq \gamma_0, \log\frac{K^d}{\gamma}\leq \log\frac{K^d}{\gamma_0}\leq\log\frac{eK^d}{\gamma_0}=L_{\gamma_0}.$ On the other hand, if $\gamma < \gamma_0$, then
\begin{align*}
\log\frac{K^d}{\gamma}&=\log\frac{K^d}{\gamma_0}+ \log\frac{\gamma_0}{\gamma}\\
&\leq \log\frac{K^d}{\gamma_0}+ \log{\gamma_0}\\
&\leq \log\frac{eK^d}{\gamma_0}+ \log{\gamma_0}\\
&=L_{\gamma_0}+ \log\gamma_0,
\end{align*}
where we used that $\gamma\geq 1.$ Then, to write both cases in a compact notation, 
\begin{equation}
    \log\frac{K^d}{\gamma}\leq L_{\gamma_0}+\log\gamma_0\mathbf{1}(\gamma<\gamma_0).
\end{equation}
Then, using $0\leq U_n\leq 1$
\begin{equation}
    \mathbb{E}_S[\log\frac{K^d}{\gamma}U_n]\leq L_{\gamma_0}\mathbb{E}_SU_n + \log\gamma_0\mathbb{P}(\gamma<\gamma_0).
\end{equation}
Putting all the results above together, we conclude.
\end{proof}
\begin{proof}[Proof of Lemma~\ref{lem:upper_bound_probability_mask}]
To start, we recall the setting of the lemma above. Thanks to Lemma~\ref{lem:mourtada_gamma} we can immediately bound our object: fixing a sample $S$, a $\gamma\in[1,\min\{n,K^d\}],\lambda=\gamma/K^d,$ and $\hat p=\hat\mu_0^{(\gamma)},$ we have that
\begin{equation}
    \kll(\data\mid \hat\mu_0^{(\gamma)})\leq 6H_n + \frac{7\gamma}{n}+ R_\gamma,
\end{equation}
where $$R_\gamma:=\sum_{x:a_x\geq 4\gamma/K^d}\data(x)\log\frac{2a_xK^d}{\gamma}\mathbf{1}(N_0(x)\leq a_x/4).$$
Moreover, rewriting the logarithm, $R_\gamma\leq \log\frac{K^d}{\gamma}U_n+V_n,$ where $\log\frac{K^d}{\gamma}\geq 0$ because $\gamma\leq K^d.$ We then arrive to 
\begin{equation*}
    \kll(\data\mid \hat\mu_0^{(\gamma)})\leq 6H_n+\frac{7\gamma}{n}+ V_n+\log\frac{K^d}{\gamma}U_n,
\end{equation*}
where $H_n,V_n,U_n$ do not depend on $\gamma.$

Let now $A_1$, $A_2$, $A_3$ be the events of Lemmas~\ref{lem:mourtada_14}, \ref{lem:mourtada_lemma_5}
and \ref{lem:mourtada_theorem_6}( notice that the last one applies since $\delta<e^{-2}<e^{-1}$), of probabilities at least $1-2\delta$, $1-2\delta$,
$1-8\delta$ respectively; each is defined through $S$ only. On $A_1$, 
\begin{equation}
6H_{n}\le\frac{24k_{\mathrm{eff}}+42\log\frac1\delta}{n}.
\end{equation}
For $V_n$, the terms with
$\frac12\le a_x<4$ satisfy $\log(2a_x)<\log8$ and contribute at most $$\log8\sum_{x:\,\data(x)<4/n}\data(x).$$ 
By the second expression
of $k_{\mathrm{eff}}$ in Definition~\ref{def:effective_support_missing},
$$\sum_{x:\,\data(x)<4/n}\data(x)\le\frac1n\big[\sum_{x:\,\data(x)<1/n}a_x+4\,|\{x:\data(x)\ge\frac1n\}|\big]\le\frac{4k_{\mathrm{eff}}}{n};$$
the terms with $a_x\ge4$ are exactly those of $R_{K^d}$. Hence $V_n\le\frac{4\log8\,k_{\mathrm{eff}}}{n}+R_{K^d}$ and, on $A_2$, using
$|\{x:\data(x)\ge4/n\}|\le k_{\mathrm{eff}}$ and $4\log8\le9$,
\begin{equation}
V_n\le\frac{62009\,k_{\mathrm{eff}}+106000\log\frac1\delta\log\log\frac1\delta}{n}.
\end{equation}

On $A_3$,
\begin{equation}
 U_n\le\frac{336 k_{\mathrm{eff}}^o(\data, n/112)+2500e\log\frac1\delta}{n}   
\end{equation}
 with $2500e\le6796$. 
 
To finish, let us set $A:=A_1\cap A_2\cap A_3$, so
$\mathbb P(A)\ge1-12\delta$ by the union bound. On $A$, for every $\gamma\in[1,\min\{n,K^d\}]$, insert the three bounds and
$\log\frac{K^d}{\gamma}\le L_\gamma$ into \eqref{eq:jsaispas}:
\begin{align*}
\kll\big(\data|\hat\mu_0^{(\gamma)}\big)&\le\frac{(24+62009)k_{\mathrm{eff}}+7\gamma+336\,k_{\mathrm{eff}}^o(\data, n/112)L_\gamma}{n}\\
&\quad+\frac{(42+6796L_\gamma)\log\tfrac1\delta+106000\log\tfrac1\delta\log\log\tfrac1\delta}{n},
\end{align*}
which is the claim. The bound holds for all $\gamma$ on the same event $A$, and $A$ does not depend on the
parameter; therefore, for any parameter $\gamma$, on $A$ it holds.
\end{proof}
\begin{proof}[Proof of Lemma~\ref{lem:sampler_mask}]

To simplify the notations, for $y\in\cX$ we write $\hat\mu_0(y_{A(y)}):=\muhat(X_{A(y)}=y_{A(y)})$, with $\hat\mu_0(y_\emptyset):=1$; since $\muhat>0$ on $\cX_0$, $\hat\mu_0(y_{A(y)})>0$ for every $y$. Using the product form of $P^{\mathrm{mask}}_{0,t}$,
\[
\hat\mu_t(y)=\alpha_t^{|A(y)|}(1-\alpha_t)^{d-|A(y)|}\,\hat\mu_0(y_{A(y)})>0,\qquad t\in(0,T].
\]
Thus, we can rewrite the estimator of \eqref{eq:conditional_is_smoothed} for $x,y$ such that $Q_t(y,x)>0$ as
\begin{equation}\label{eq:def_hat_s_context}
\hat{s}(x,y,t)=\frac{\alpha_t}{1-\alpha_t}\frac{\hat\mu_0(x_{A(x)})}{\hat\mu_0(y_{A(y)})}\qquad \text{with} \qquad x=(y_{-i},x_i),i\in M(y).
\end{equation}

We observe now that $\hat{Q}_u$ explodes when $u \to T$ since $\kappa_{T-u}\to \infty$ and that is why we take the limit and we can not work on $[0,T]$. Also, recall that, for $x\neq y$, 
$$\hat Q_u(y,x)=\hat s(y,x,T-u)\,Q_{T-u}(x,y),$$ 
where $Q_{T-u}(x,y)$ is the forward rate from $y$ to $x$. Since the forward process only masks tokens, $Q_{T-u}(x,y)\neq 0$ if and only if $x$ is obtained from $y$ by masking one unmasked coordinate, that is, if and only if $y=(x_{-i},y_i)$ for some $i\in M(x)$ and $y_i\in\mathcal V$. In that case, $Q_{T-u}(x,y)=\sigma_{T-u}$. Therefore, for such pairs
\begin{equation}\label{eq:reverse_rates_mask}
\hat Q_u(y,x)=\frac{\sigma_{T-u}\,\alpha_{T-u}}{1-\alpha_{T-u}}\;\frac{\hat\mu_0(y_{A(y)})}{\hat\mu_0(x_{A(x)})},
\end{equation}
and $\hat Q_u(y,x)=0$ for every other pair $x\neq y$.
For simplicity, we are going to denote from now on $\lambda_u:=\frac{\sigma_{T-u}\alpha_{T-u}}{1-\alpha_{T-u}}.$ Then, equation \eqref{sampling_kolmogorov_equation} reads
\begin{equation}\label{eq:linear_system_masked}
\partial_u\nu_u(y)=\sum_{x\neq y} \hat{Q}_{u}(y,x)\nu_u(x)-\nu_u(y)\sum_{x \neq y}\hat{Q}_u(x,y), \qquad \mathrm{with}\qquad \nu_0=\delta_{\mathfrak{m}^d}.    
\end{equation}
Since for every $T_0$ such that $0<T_0<T$ and $u\in[0, T-T_0]$ we have $0\leq \lambda_u\leq \frac{\sigma_{T-u}}{1-\alpha_{T_0}}$, that is integrable in $u$, we have that \eqref{eq:linear_system_masked} is a linear system with locally integrable coefficients in $[0,T).$ By the Carathéodory existence and uniqueness theorem, since the system is linear with coefficients in $L^1_{\mathrm{loc}}([0,T)),$   it has a unique absolutely continuous solution $(\nu_u)_{u\in[0,T)}.$ We want to write this solution explicitly.  

Since $u\in[0,T),$ the map $t\mapsto \log(1-\alpha_t)$ is absolutely continuous on $[T-u,T]$ with derivative $\frac{\sigma_t\alpha_t}{1-\alpha_t}$ a.e., so
\begin{equation}
    \Lambda_u:=\int_0^u\lambda_z dz = \int_{T-u}^T \frac{\sigma_t\alpha_t}{1-\alpha_t} dt= \log \frac{1-\alpha_T}{1-\alpha_{T-u}}.
\end{equation}
Moreover, we define
\begin{equation}
    F_u:=1-e^{-\Lambda_u}=\frac{\alpha_{T-u}-\alpha_T}{1-\alpha_T}\in [0,1).
\end{equation}
We focus on $F.$ It is an absolutely continuous function on $[0,T)$ with $F_0=0,F_u\to 1$ as $u\to T$ due to the fact that $\alpha_{T-u}\to \alpha_0=1,$ and $\frac{d}{du}F_u=\lambda_u(1-F_u).$ Thanks to $F_u,$ we can define 
\begin{equation}\label{eq:last_identity}
    \tilde{\nu}_u(y):=F_u^{|A(y)|}(1-F_u)^{d-|A(y)|}\hat\mu_0(y_{A(y)}),\qquad y\in\mathcal{X},u\in[0,T).
\end{equation}
We assume the convention $0^0=1$. 

We claim that $\tilde{\nu}_u$ is the explicit solution of \eqref{eq:linear_system_masked}. First, we notice that the initial condition is $\delta_{\mathfrak{m}^d}$ since $F_0=0$ makes $\tilde{\nu}_0(y)=0$ for all $y$ such that $A(y)\neq \emptyset.$ Fix now $y,k=|A(y)|$ and $A:=A(y).$ By \eqref{eq:linear_system_masked}, we can write 
$$\sum_{x \neq y}\hat{Q}_u(x,y)=\lambda_u\sum_{i\in M(y)}\sum_a \frac{\hat\mu_0(x_{A\cup i})}{\hat\mu_0(y_A)}=\lambda_u(d-k).$$
Since the state $x$ such that $\hat{Q}_u(y,x)\neq 0$ are exactly the ones that differs from $y$ by only one token in a position $i\in A$, with a mask token in the case of $x,$ thanks to \eqref{eq:def_hat_s_context} and the identity in \eqref{eq:last_identity}, we have
\begin{equation}
    \hat{Q}_u(y,x)\tilde{\nu}_u(x)=\lambda_u\frac{\hat\mu_0(y_{A})}{\hat\mu_0(y_{A\setminus i})} F_u^{k-1}(1-F_u)^{d-k+1}\hat{\mu}_0(y_{A\setminus i})=\lambda_uF_u^{k-1}(1-F_u)^{d-k+1}\hat\mu_0(y_A).
\end{equation}
If we sum over $i\in A,$ the right hand side of \eqref{eq:linear_system_masked}, at $\tilde{\nu}_u$ is equal to 
$$\lambda_u\hat\mu_0(y_A)[kF_u^{k-1}(1-F_u)^{d-k+1}-(d-k)F_u^k(1-F_u)^{d-k}].$$
Furthermore, for a.e. $u$ we also have that
\begin{align*}
    \frac{d}{du}[F_u^k(1-F_u)^{d-k}]&=[kF_u^{k-1}(1-F_u)^{d-k}- (d-k)F_u^k(1-F_u)^{d-k-1}]\frac{d}{du}F_u\\
    &=\lambda_u[kF_u^{k-1}(1-F_u)^{d-k+1}-(d-k)F_u^{k}(1-F_u)^{d-k}].
\end{align*}
Above the terms with coefficient $k=0$ or $d=k$ are absent. We can conclude then that the explicit solution for every $u\in[0,T)$ is $\tilde{\nu}_u.$

To finish, for $x\in \mathcal{X}_0,$ we have that $\nu_u(x)=F_u^d\hat\mu_0(x)\to \hat\mu_0(x),$ by continuity, and for $y\notin \mathcal{X}_0, 0\leq \nu_u(y)\leq (1-F_u)\to 0$ as $u$ goes to $T.$ Thus, defining $\hat\nu:=\lim_{u\to T}\nu_u,$ we have that the limit exists and it is equal to $\hat\mu_0.$
\end{proof}

We are now ready for the proof of Theorem~\ref{thm:upper_bounds_mask}.

\begin{proof}[Proof of Theorem~\ref{thm:upper_bounds_mask}]
Fix a sample $S$. Then $\gamma=\gamma(S)\in[1,\min\{n,K^d\}]$ is a number, $\hat\mu_0^{(\gamma(S))}$ is a strictly positive law on $\cX_0$, and $\hat s$ of \eqref{eq:conditional_is_smoothed} is its score; Lemma~\ref{lem:sampler_mask} gives
$$\KL(\mudata|\hat\nu)=\KL(\mudata|\hat\mu_0^{(\gamma(S))})$$
for this $S$. Then, the first item in the theorem's statement follows by taking expectations and applying Lemma~\ref{lem:upper_bound_expectation_mask}. Finally, the second item follows by intersecting with the event $A$ of Lemma~\ref{lem:upper_bound_probability_mask}, which does not depend on $\gamma$.

\end{proof}
\subsection{Proofs of Section~\ref{sec:sparse_case}}\label{appendix_final}

Throughout this subsection we write $k_{\mathrm{eff}}:=k_{\mathrm{eff}}(\data,n)$, $L_\gamma:=\log\frac{eK^d}{\gamma}$ for $\gamma\in[1,\min\{n,K^d\}]$ and
$\Lambda_l:=L_l=\log\frac{eK^d}{l}$. We use repeatedly the elementary facts
\begin{equation}\label{eq:elementary}
k_{\mathrm{eff}}^o(\data,m)\ \le\ k_{\mathrm{eff}}(\data,m)\ \le\ k_{\mathrm{eff}}\ \le\ \min\{|\mathrm{supp}\,\data|,\,K^d\}\quad(m\le n),
\end{equation}
because $\min\{e^{1-a},a\}\le\min\{1,a\}\le\min\{1,a'\}$ for $0\le a\le a'$, and $u\mapsto u\log\tfrac{eK^d}{u}$ is nondecreasing on $(0,K^d]$ because the
derivative is $\log(K^d/u)\ge0$.

We start now with an important lemma.

\begin{lemma}[Distinct observed states]\label{lem:D_n}
Let $s'_n:=\mathbb E_S[D_n]$. Then:
\begin{enumerate}
\item[(a)] $(1-e^{-1})\,k_{\mathrm{eff}}\ \le\ s'_n\ \le\ k_{\mathrm{eff}}$;
\item[(b)] for every $\delta\in(0,1)$, $\ \mathbb P_S\big(D_n\ge2k_{\mathrm{eff}}+2\log\frac1\delta\big)\le\delta$;
\item[(c)] $\ \mathbb P_S\big(D_n\le0.3\,k_{\mathrm{eff}}\big)\le e^{-k_{\mathrm{eff}}/21}$.
\end{enumerate}
\end{lemma}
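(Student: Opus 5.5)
We write $D_n=\sum_{x\in\mathcal X_0}\mathbf 1(N_0(x)>0)$ with $N_0(x)\sim\mathrm{Bin}(n,p_x)$, $p_x:=\data(x)$ and $a_x:=np_x$. All three claims reduce to elementary bounds on $q_x:=\mathbb P(N_0(x)>0)=1-(1-p_x)^n$, followed by concentration of the sum of these (dependent) indicators.

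\emph{(a).} The upper bound follows from $q_x\le\min\{1,np_x\}$ by the union bound, and summing gives $s'_n\le k_{\mathrm{eff}}$. For the lower bound we use $(1-p_x)^n\le e^{-a_x}$, so $q_x\ge 1-e^{-a_x}$. Since $u\mapsto 1-e^{-u}$ is concave and vanishes at $0$, we have $1-e^{-u}\ge(1-e^{-1})\min\{1,u\}$ for all $u\ge0$: for $u\le1$ this is concavity on $[0,1]$, and for $u\ge1$ it is monotonicity. Summing over $x$ gives $s'_n\ge(1-e^{-1})k_{\mathrm{eff}}$.

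\emph{(b) and (c).} The indicators $\mathbf 1(N_0(x)>0)$ are increasing functions of the multinomial counts, and these are negatively associated (Joag-Dev--Proschan). Hence the indicators are negatively associated, and the standard Chernoff bounds for independent Bernoulli sums apply to $D_n$ with mean $s'_n$. An alternative route is that $D_n$ is a function of the $n$ i.i.d.\ samples that is self-bounding, changing by at most $1$ when one sample is altered, so the Boucheron--Lugosi--Massart inequalities give the same Poisson-type tails.

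For (b), we use the upper tail $\mathbb P(D_n\ge s'_n+t)\le\exp(-t^2/(2(s'_n+t/3)))$ with $s'_n\le k_{\mathrm{eff}}$, and set $t=k_{\mathrm{eff}}+2\log\frac1\delta$. Checking $t^2\ge 2\log\frac1\delta\,(k_{\mathrm{eff}}+t/3)$ is routine: with $L=\log\frac1\delta$ we have $t^2\ge 2Lt\ge 2L(k_{\mathrm{eff}}+t/3)$, since $t\ge k_{\mathrm{eff}}+2L$ gives $t-t/3\ge \tfrac23 t\ge k_{\mathrm{eff}}$ when combined appropriately. We may use slack here, because the threshold $2k_{\mathrm{eff}}+2\log\frac1\delta$ is generous.

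For (c), we use the lower tail $\mathbb P(D_n\le(1-\theta)s'_n)\le e^{-\theta^2 s'_n/2}$. Since $s'_n\ge0.632\,k_{\mathrm{eff}}$, the event $D_n\le0.3k_{\mathrm{eff}}$ forces $1-\theta\le 0.3/0.632\approx0.4747$, so $\theta\approx0.525$. This gives the exponent $\theta^2 s'_n/2\ge 0.2758\cdot0.632\,k_{\mathrm{eff}}/2\approx0.087\,k_{\mathrm{eff}}\ge k_{\mathrm{eff}}/21$.

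\emph{Main obstacle.} The main difficulty is justifying the Chernoff bounds despite the dependence among the indicators. We will invoke negative association of occupancy indicators, a classical result of Dubhashi--Ranjan, which yields the independent-case multiplicative Chernoff bounds for both tails. If a self-contained argument is preferred, we will instead use Poissonization or the self-bounding concentration inequality, adjusting the constants; the constants $2$ and $1/21$ leave enough room for either route.
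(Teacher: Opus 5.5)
Your proposal is correct and takes essentially the paper's route. Part (a) uses the same elementary bounds on $1-(1-p_x)^n$. For (b) and (c), the paper likewise uses negative association of the occupancy indicators (Dubhashi--Ranjan) to import independent-case Chernoff tails. It works with the relative-entropy form $\exp(-D(s,s'_n))$, $D(u,v)=u\log\frac uv-u+v$, at thresholds $s=2(s'_n+\log\frac1\delta)$ and $s=s'_n/2$; your Bernstein and multiplicative forms are standard consequences of that bound.

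One slip in (b) needs fixing. With $L=\log\frac1\delta$, your step $2Lt\ge 2L(k_{\mathrm{eff}}+t/3)$ requires $\tfrac23 t\ge k_{\mathrm{eff}}$, which fails whenever $k_{\mathrm{eff}}>4L$. The target inequality still holds, since $t^2-\tfrac{2L}{3}t=t\,(t-\tfrac{2L}{3})\ge 2L\,k_{\mathrm{eff}}$ follows from $t=k_{\mathrm{eff}}+2L\ge 2L$ and $t-\tfrac{2L}{3}\ge k_{\mathrm{eff}}$.
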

\begin{proof}
(a) We begin noticing that $D_n=\sum_x\mathbf 1\{N_0(x)>0\}$ and $N_0(x)\sim\mathrm{Bin}(n,\data(x))$, so we have
$$s'_n=\sum_x[1-(1-\data(x))^n].$$ 
For $p\in[0,1]$, Bernoulli's inequality $(1-p)^n\ge1-np$ gives
$1-(1-p)^n\le\min\{1,np\}$; conversely $1-(1-p)^n\ge1-e^{-np}\ge(1-e^{-1})\min\{1,np\}$, because
$u\mapsto1-e^{-u}$ is nondecreasing and concave with value $1-e^{-1}$ at $u=1$. Summing over $x$ and recalling
$k_{\mathrm{eff}}=\sum_x\min\{1,n\data(x)\},$ we obtain (a).

(b)--(c) We notice now that the indicators $\mathbf 1\{N_0(x)>0\}$, $x\in\cX_0$, are negatively associated, see \citep{Dubhashi_Ranjan_1996}. Thus, $D_n=\sum_x\mathbf 1\{N_0(x)>0\}$ satisfies the Chernoff bounds of a sum of independent Bernoulli variables with the same means, see \citep{ben_hamou_lemma_9} for more details: for every $s>s'_n$ and every $s<s'_n$ respectively,
\[
\Prob(D_n\ge s)\le\exp\big(-D(s,s'_n)\big),\quad\Prob(D_n\le s)\le\exp\big(-D(s,s'_n)\big),\quad D(u,v)=u\log\frac uv-u+v .
\]
Let denote now $L:=\log(\frac1\delta)$ and $s:=2(s'_n+L)$. With $u:=\frac{s}{s'_n}=2+2\ell$, where $\ell:=L/s'_n$, we have $D(s,s'_n)=s'_n\,((2+2\ell)\log(2+2\ell)-(1+2\ell))$. Now, when $\ell=0,$ the quantity inside the parentheses is equal to $2\log2-1>0$ and for a generic $\ell$ its derivative is $2\log(2+2\ell)\ge2\log2>1$, so $(2+2\ell)\log(2+2\ell)-(1+2\ell)\ge\ell$ and $D(s,s'_n)\ge L$. Hence $$\Prob(D_n\ge2(s'_n+L))\le\delta,$$ and since $s'_n\le\keff$ by (a), $$\{D_n\ge2\keff+2L\}\subseteq\{D_n\ge2(s'_n+L)\}.$$ This proves (b).

Take now $s:=\frac{s'_n}{2}$: $$D(s'_{\frac{n}{2}},s'_n)=\frac{1-\log2}{2}s'_n\ge\frac{(1-\log2)(1-e^{-1})}{2}\keff\ge\keff/21$$ by (a), and $s'_{\frac{n}{2}}\ge\frac{1-e^{-1}}{2}\keff\ge0.3\keff,$ so 
$$\{D_n\le0.3\keff\}\subseteq\{D_n\le s'_n/2\}.$$ 
This concludes.

\end{proof}
We recall now the following result.
\begin{proposition}[\citep{Mourtada_2026}, Proposition~2]\label{prop:mourtada_prop2}
Let $n,K^d\ge2$. Then
$$\ \mathbb E_S\,\kll\big(\data\,|\,\hat\mu_0^{(D_n)}\big)\le
\frac{2.4\,k_{\mathrm{eff}}+2\,k_{\mathrm{eff}}^o(\data, n/2)\log\big(eK^d/k_{\mathrm{eff}}^o(\data, n/2)\big)}{n+1}.$$
\end{proposition}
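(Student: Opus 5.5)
The plan is to prove the bound by a leave-one-out (online-to-batch) argument, which explains the denominator $n+1$. Write $M=K^d$, $p(x)=\data(x)$, and let $X^{(n+1)}$ be a fresh draw from $\data$ independent of $S$. Then
\[
\mathbb E_S\kll(\data|\hat\mu_0^{(D_n)})=\mathbb E\Big[\log\frac{p(X^{(n+1)})}{\hat\mu_0^{(D_n)}(X^{(n+1)})}\Big].
\]
Let $S'=(X^{(1)},\dots,X^{(n+1)})$ be the enlarged sample. By exchangeability, the right-hand side equals
\[
\frac{1}{n+1}\,\mathbb E\sum_{j=1}^{n+1}\log\frac{p(X^{(j)})}{\hat\mu^{(-j)}(X^{(j)})},
\]
where $\hat\mu^{(-j)}$ is the estimator built from $S'$ with $X^{(j)}$ removed. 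Let $N'(x)$ be the counts in $S'$ and $D'$ the number of distinct states in $S'$. For the held-out point $X^{(j)}=x$ I distinguish two cases.

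\emph{Repeated case.} If $N'(x)\ge2$, then $x$ is still observed after removal, so $D^{(-j)}\le D'$ and
\[
\hat\mu^{(-j)}(x)\ge\frac{N'(x)-1}{n+D'}.
\]

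\emph{Singleton case.} If $N'(x)=1$, removing $X^{(j)}$ leaves $D^{(-j)}=D'-1\ge1$ distinct states, and
\[
\hat\mu^{(-j)}(x)=\frac{D'-1}{M(n+D'-1)}.
\]

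Summing over $j$ gives a deterministic bound:
\begin{equation*}
(n+1)\,\kll\text{-risk}\le\mathbb E\Big[\sum_{x:N'(x)\ge2}N'(x)\log\frac{p(x)(n+D')}{N'(x)-1}+\sum_{x:N'(x)=1}\log\frac{p(x)M(n+D')}{D'-1}\Big].
\end{equation*}

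Next I control the two sums separately.

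\emph{Repeated states.} Write
\[
\log\frac{p(x)(n+D')}{N'(x)-1}=\log\frac{(n+1)p(x)}{N'(x)-1}+\log\Big(1+\frac{D'-1}{n+1}\Big).
\]
The second term is at most $(D'-1)/(n+1)$. Multiplied by the total weight $\sum N'\le n+1$, it contributes at most $\mathbb E D'$, which is $\le k_{\mathrm{eff}}(\data,n+1)\lesssim k_{\mathrm{eff}}$ by Lemma~\ref{lem:D_n}(a). For the first term I use $\log u\le u-1$ together with the binomial identity
\[
\mathbb E\big[N'\mathbf 1\{N'\ge2\}/(N'-1)\big]\approx1+O\big(1/((n+1)p)\big),
\]
computed exactly for $N'\sim\mathrm{Bin}(n+1,p)$. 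The point is that the linear pieces cancel against $\sum_x(n+1)p(x)$ up to the mass of singletons, and what remains is $O(\min\{1,np\})$ per state. This yields a contribution $c\,k_{\mathrm{eff}}$, and tracking constants here is what produces the $2.4$.

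\emph{Singleton states.} I bound $p(x)\le N'(x)/(n+1)\cdot(\text{ratio})$ crudely and use $\frac{n+D'}{(n+1)(D'-1)}\lesssim 1/D'$ to reduce to $\mathbb E[S_1\log(eM/S_1)]$, where $S_1\le D'$ is the number of singletons in $S'$. Since $u\mapsto u\log(eM/u)$ is concave and nondecreasing, Jensen gives $\mathbb E S_1\log(eM/\mathbb E S_1)$. Moreover
\[
\mathbb E S_1=\sum_x(n+1)p(x)(1-p(x))^n\le\sum_x\min\{(n+1)p,\;e\cdot e^{-np}\cdots\}\le 2\,k_{\mathrm{eff}}^o(\data,n/2),
\]
using $ue^{-u}\le\min\{u,e^{1-u/2}\}$ at scale $n/2$. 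Monotonicity of $u\log(eM/u)$ then gives the term $2k_{\mathrm{eff}}^o(\data,n/2)\log(eM/k_{\mathrm{eff}}^o(\data,n/2))$.

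The main obstacle is the repeated-state sum. A naive bound there loses a $\log 2$ per sample, since $n+D'$ can be as large as $2n$, or leaves an entropy mismatch of order $n$. The splitting $\log(1+(D'-1)/(n+1))\le(D'-1)/(n+1)$ is what rescues the former. The exact binomial computation for $\sum_x\mathbb E\big[N'\log((n+1)p/(N'-1))\mathbf 1\{N'\ge2\}\big]$ handles the latter, and I would try it first via the bound $\log t\le t-1$ together with the identity $\mathbb E[N'/(N'-1);N'\ge2]$ expressed through $\mathbb P(N'=1)$.
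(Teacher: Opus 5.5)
The paper gives no proof of this proposition; it imports it from \citet{Mourtada_2026}. The only benchmark is therefore the statement itself, and there is a genuine gap that cannot be closed: the statement as written is false.

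Your leave-one-out skeleton is sound. The exchangeability identity, the two formulas for $\hat\mu^{(-j)}(x)$, and Jensen for the concave map $u\mapsto u\log(eM/u)$ are all correct. The step that fails is
\[
\mathbb E S_1=\sum_x (n+1)\data(x)\big(1-\data(x)\big)^n\le 2\,k^o_{\mathrm{eff}}(\data,n/2),
\]
where you silently replaced $(n+1)\data(x)$ by $n\data(x)$. Whenever $0<n\data(x)<1/(n+1)$, the left summand exceeds $n\data(x)=2\min\{e^{1-n\data(x)/2},n\data(x)/2\}$. In general only $\mathbb E S_1\le\frac{n+1}{n}\,2k^o_{\mathrm{eff}}(\data,n/2)$ holds.

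This is not a constant you can recover elsewhere, because the excess is multiplied by $\log(eK^d/\cdot)$, which is unbounded relative to $k_{\mathrm{eff}}$. In fact the displayed inequality is false. Take $n=2$ and $\data=\Unif(A)$ with $|A|=m=100$.
\begin{itemize}
\item Since $k_{\mathrm{eff}}(\data,2)=2$ and $k^o_{\mathrm{eff}}(\data,1)=\sum_x\data(x)=1$, the right-hand side equals $(6.8+2\log K^d)/3$.
\item Each unobserved $x\in A$ receives $\hat\mu_0^{(D_n)}(x)=D_n/(K^d(n+D_n))\le 1/(2K^d)$.
\item By the log-sum inequality the observed states contribute at least $-1/e$.
\item The expected missing mass is $(1-1/m)^2$.
\end{itemize}
Hence $\mathbb E_S\mathrm{KL}(\data|\hat\mu_0^{(D_n)})\ge(1-1/m)^2\log(2K^d/m)-1/e$. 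This exceeds the right-hand side as soon as $\log K^d\ge 21$ (e.g.\ $K=2$, $d=40$).

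In general the coefficient of $\log K^d$ is the expected missing mass $\sum_x\data(x)(1-\data(x))^n$ on the left. On the right it is only $\frac{2}{n+1}k^o_{\mathrm{eff}}(\data,n/2)=\frac{n}{n+1}\sum_x\data(x)$ when all masses are below $2/n$. That ratio is exactly the factor you dropped. The same inequality is restated in Corollary~\ref{cor:adaptive_gamma}(i).

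What the leave-one-out argument can deliver is the bound with the $k^o_{\mathrm{eff}}$ term divided by $n$ instead of $n+1$, with a $k_{\mathrm{eff}}$ constant you still have to compute. That is all the sparse consequence in Corollary~\ref{cor:adaptive_gamma} actually uses, since it bounds everything by $l\Lambda_l/n$.

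Even for that corrected version, two steps need repair.
\begin{enumerate}
\item ``$\data(x)\le N'(x)/(n+1)\cdot(\text{ratio})$'' is not a pointwise inequality for a singleton. Instead keep $\sum_{x:N'(x)=1}\log((n+1)\data(x))$, whose positive part has expectation $\sum_x(n+1)\data(x)(1-\data(x))^n\log_+((n+1)\data(x))=O(k_{\mathrm{eff}})$. Reduce the remaining factor via $\frac{n+D'-1}{(n+1)(D'-1)}\le\frac{2}{D'}\le\frac{2}{S_1}$, which is equivalent to $(D'-2)(D'-n-1)\le0$.
\item The inequality $\log t\le t-1$ is too lossy to produce $2.4$. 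If all support points have $(n+1)\data(x)\approx6$, this step costs about $1.5$ per point in the large-$n$ limit. Together with the $\approx1$ per point coming from $\mathbb E D'-1$, that already exceeds $2.4\,k_{\mathrm{eff}}$. By contrast, the exact binomial expectation of $N'\log\frac{(n+1)\data(x)}{N'-1}\mathbf 1\{N'\ge2\}$ is only about $0.55$ there, so that exact computation is what you would have to carry out.
\end{enumerate}
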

Thanks to the results above, we can now prove that if we take $\gamma=D_n,$ we have an adaptive estimator that reaches the minimax rates. We prove it in the following corollary.
\begin{corollary}[Adaptive smoothing]\label{cor:adaptive_gamma}
Let us take $\gamma=D_n,$ $\hat\mu_0=\hat\mu_0^{(D_n)}$ and let us call for simplicity $k:=k_{\mathrm{eff}}$. Then, the following holds.
\begin{enumerate}[leftmargin=2em]
\item[(i)] If $n,K^d\ge2$,
\[
\mathbb E_S\,\kll(\data\,|\,\hat\mu_0)\ \le\ \frac{2.4\,k+2\,k_{\mathrm{eff}}^o(\data, n/2)\log\big(eK^d/k_{\mathrm{eff}}^o(\data, \frac{n}{2})\big)}{n+1}.
\]
\item[(ii)] If $n\ge13$, $K^d\ge3$ and $\delta\in(e^{-n/6},e^{-2})$, then with probability at least $1-14\delta$,
\[
\kll(\data\,|\,\hat\mu_0)\ \le\ 130000\,\frac{k+k_{\mathrm{eff}}^o(\data,\frac{n}{112})\log\frac{eK^d}{k}+\log(K^d)\log\frac1\delta+\log\frac1\delta\log\log\frac1\delta}{n}.
\]
\end{enumerate}
In particular, for $\data\in\mathcal P_l(\mathcal X_0)$ with $2\le l\le\min\{n,\lfloor K^d/55\rfloor\}$ and $\delta\ge e^{-K^d}$, the right-hand sides are at most
\[
C\,\frac{l\log\frac{eK^d}{l}}{n}\qquad\text{and}\qquad C\,\frac{\Gamma_\delta}{n},
\]
simultaneously for every $l$ as above. Thus, under the assumptions of Corollary~\ref{prop:minimax_mask_prob}, the upper bounds in its
two items hold for $\hat\nu$ driven by $\hat s=\hat s^{(D_n)}$, with the same estimator and the same event for every
$l$, and $\hat\nu$ is minimax optimal over $\mathcal P_l(\mathcal X_0)$ in both senses.
\end{corollary}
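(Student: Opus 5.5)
Part (i) is immediate: with $\gamma=D_n$ the estimator $\hat\mu_0^{(D_n)}(x)=(N_0(x)+D_n/K^d)/(n+D_n)$ is exactly the adaptive estimator of \citet{Mourtada_2026}. So Proposition~\ref{prop:mourtada_prop2} gives the expectation bound verbatim. Note that $D_n\in[1,\min\{n,K^d\}]$ always, since $n\ge1$ samples are observed and there are at most $\min\{n,K^d\}$ distinct values. Hence $\gamma=D_n$ is an admissible measurable map.

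For part (ii) I will work on the event $A$ of Lemma~\ref{lem:upper_bound_probability_mask}, which holds simultaneously for all $\gamma$. I intersect it with the event $B:=\{D_n<2k+2\log\frac1\delta\}$ of Lemma~\ref{lem:D_n}(b), which has probability $\ge1-\delta$. Using part (c) would cost $e^{-k/21}$, which is not controlled by $\delta$. Instead I handle small $D_n$ through $L_{D_n}\le L_1=\log(eK^d)$, which costs only a logarithmic factor. The union bound gives probability $\ge1-13\delta\ge1-14\delta$.

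On $A\cap B$, plugging $\gamma=D_n$ into Lemma~\ref{lem:upper_bound_probability_mask} gives $\gamma/n\lesssim(k+\log\frac1\delta)/n$, which is absorbed. The delicate term is $k^o_{\mathrm{eff}}(\data,n/112)L_{D_n}$, and I see two routes for it.
\begin{itemize}
\item First route: on $\{D_n\ge0.3k\}$ we have $L_{D_n}\le\log\frac{eK^d}{0.3k}\lesssim\log\frac{eK^d}{k}$. The complementary event is only controlled with probability $e^{-k/21}$, so this route alone does not give the required confidence.
\item Second route: bound $k^o_{\mathrm{eff}}(\data,n/112)L_{D_n}$ directly, avoiding a lower bound on $D_n$. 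Set $k^o:=k^o_{\mathrm{eff}}(\data,n/112)$, so $k^o\le k$ by \eqref{eq:elementary}. Split on whether $D_n\ge c\,k^o$. If it is, $L_{D_n}\lesssim\log\frac{eK^d}{k^o}$. Moreover, the map $u\mapsto u\log\frac{eK^d}{u}$ is nondecreasing, so $k^o\log\frac{eK^d}{k^o}\le k\log\frac{eK^d}{k}$. This is exactly the stated form, provided the claimed bound is read with $k\log(eK^d/k)$ dominating, as the sparse conclusion requires.
\end{itemize}
In the remaining case $D_n<ck^o$, I would show this has probability at most $\delta$ unless $k^o\lesssim\log\frac1\delta$. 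Then $k^oL_{D_n}\lesssim\log\frac1\delta\log(eK^d)$, which is absorbed into $\log(K^d)\log\frac1\delta$ (using $K^d\ge3$ so $\log(eK^d)\lesssim\log K^d$). The probability bound follows from the lower-tail Chernoff bound for negatively associated indicators used in Lemma~\ref{lem:D_n}: $D_n$ has mean $s'_n\ge(1-e^{-1})k\ge(1-e^{-1})k^o$, so $\mathbb P(D_n\le s'_n/2)\le e^{-s'_n/21}\le\delta$ whenever $k^o\ge21\log\frac1\delta/(1-e^{-1})$. This yields an event of probability $\ge1-\delta$ on which $k^oL_{D_n}\lesssim k\log\frac{eK^d}{k}+\log(K^d)\log\frac1\delta$. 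That accounts for the $14\delta$, and the remaining term $(1+L_{D_n})\log\frac1\delta\lesssim\log(K^d)\log\frac1\delta$ is absorbed likewise. Tracking constants from Lemma~\ref{lem:upper_bound_probability_mask} gives the explicit $130000$. I expect this case split on small $D_n$ to be the main obstacle.

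For the sparse consequences I will combine three facts: $k,k^o\le l$ for $\data\in\mathcal P_l$, the monotonicity of $u\log(eK^d/u)$, and $l\ge1$ so that $k\le l\Lambda_l$. Together these bound both right-hand sides by $C\,l\Lambda_l/n$ and $C\,\Gamma_\delta/n$; the $\log\frac1\delta\log\log\frac1\delta$ term is bounded using $\delta\ge e^{-K^d}$, which gives $\log\log\frac1\delta\le\log K^d$. The estimator and the event do not involve $l$, so the bounds hold simultaneously for all $l$. Transferring to $\hat\nu$ uses Lemma~\ref{lem:sampler_mask} pointwise in $S$, as in the proof of Theorem~\ref{thm:upper_bounds_mask}. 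Minimax optimality then follows by comparison with \eqref{eq:lower-bound-minimax} and its expectation version.
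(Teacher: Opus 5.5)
Your architecture matches the paper's. Part (i) is read off from Proposition~\ref{prop:mourtada_prop2}. For (ii) you intersect the event $A$ of Lemma~\ref{lem:upper_bound_probability_mask} with $B=\{D_n<2k+2\log\frac1\delta\}$ and split deterministically on whether the effective support exceeds a multiple of $\log\frac1\delta$. In the large case you use the lower-tail Chernoff bound for $D_n$; in the small case you use $L_{D_n}\le\log(eK^d)\le2\log K^d$ together with $k^o\lesssim\log\frac1\delta$. Your discarded ``first route'' is in fact exactly the paper's large case: the paper removes the $e^{-k/21}$ issue by the deterministic split at $k\ge21\log\frac1\delta$. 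Your second route invokes the same bound as Lemma~\ref{lem:D_n}(c), so your opening claim that you avoid (c) and get $1-13\delta$ does not describe your final argument.

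The real problem is the large case of your second route, so (ii) as stated is not proved. Conditioning only on $D_n\ge c\,k^o$ gives $L_{D_n}\lesssim\log(eK^d/k^o)$, which you then bound via $k^o\log(eK^d/k^o)\le k\log(eK^d/k)$. This is not the term $k_{\mathrm{eff}}^o(\mu_{\mathrm{data}},n/112)\log(eK^d/k)$ of (ii), and it is not dominated by the right-hand side of (ii).

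For a counterexample, take $\mu_{\mathrm{data}}$ uniform on $l=\sqrt{K^d}$ atoms with $n/l\gg112$ and $\delta=e^{-3}$. Then $k=l$, while $k^o=l\,e^{1-n/(112l)}$ is exponentially small. The stated bound is of order $l/n$, whereas yours is of order $l\log(K^d)/n$. Reading (ii) ``with $k\log(eK^d/k)$ dominating'' is a genuine weakening, harmless only for the sparse consequences.

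The repair is already inside your argument. Your Chernoff event is $\{D_n>s'_n/2\}$, and $s'_n/2\ge\frac{1-e^{-1}}{2}k\ge0.3k$ by Lemma~\ref{lem:D_n}(a). So on it $L_{D_n}\le\log\frac{eK^d}{k}+\log\frac{10}{3}\le2.21\log\frac{eK^d}{k}$, using $k\le K^d$. This yields exactly $k^o\log(eK^d/k)$, as in the paper. Alternatively, $k^o\log(k/k^o)\le k/e$ turns your $k^o\log(eK^d/k^o)$ into $k^o\log(eK^d/k)+k/e$, which is absorbed by the $k$ term. With either change, the rest goes through as you describe: the constants, the sparse consequences, the transfer via Lemma~\ref{lem:sampler_mask}, and the comparison with the lower bound.
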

\begin{proof}[Proof of Corollary~\ref{cor:adaptive_gamma}]
(i) follows immediately from Proposition~\ref{prop:mourtada_prop2}.

(ii) Let $A$ be the event of Lemma~\ref{lem:upper_bound_probability_mask}, $\mathbb P_S(A)\ge1-12\delta$, and
$B:=\{D_n<2k_{\mathrm{eff}}+2\log\frac1\delta\}$, $\mathbb P_S(B)\ge1-\delta$ by Lemma~\ref{lem:D_n}(b). Since
$1\le D_n\le\min\{n,K^d\}$, the parameter $\gamma=D_n$ is admissible in Lemma~\ref{lem:upper_bound_probability_mask}. On the event
$A\cap B$, its explicit form with $\gamma=D_n$ and $7D_n\le14k_{\mathrm{eff}}+14\log\frac1\delta$
give
\begin{align}\label{eq:cor_start}
\kll(\data|\hat\mu_0)\le
\frac{63114\,k_{\mathrm{eff}}+56\log\frac1\delta+\big(336\,k_{\mathrm{eff}}^o(\data, n/112)+6796\log\tfrac1\delta\big)L_{D_n}}{n}\\
+\frac{106000\log\tfrac1\delta\log\log\tfrac1\delta}{n}.\nonumber
\end{align}
We still need to bound $L_{D_n}=\log(eK^d/D_n)$. Since $K^d\ge3$, $\log K^d\ge1$ and $\log(eK^d)=1+\log K^d\le2\log K^d$;
then, since $D_n\ge1$, $L_{D_n}\le\log(eK^d)\le2\log K^d$ always.

We study now the case $k_{\mathrm{eff}}\ge21\log\frac1\delta$. Let $E:=\{D_n>0.3\,k_{\mathrm{eff}}\}$. Thanks to
Lemma~\ref{lem:D_n}(c), we know that $\mathbb P_S(E^c)\le e^{-k_{\mathrm{eff}}/21}\le\delta$. On $E$,
\[
L_{D_n}\le\log\frac{eK^d}{0.3\,k_{\mathrm{eff}}}=\log\frac{eK^d}{k_{\mathrm{eff}}}+\log\frac{10}{3}
\le\log\frac{eK^d}{k_{\mathrm{eff}}}+1.21\le2.21\log\frac{eK^d}{k_{\mathrm{eff}}},
\]
because $k_{\mathrm{eff}}\le K^d$ gives $\log\frac{eK^d}{k_{\mathrm{eff}}}\ge1$. So, on $A\cap B\cap E$, of
probability at least $1-14\delta$,
$$336\,k_{\mathrm{eff}}^o(\data, n/112)L_{D_n}\le743\,k_{\mathrm{eff}}^o(\data, n/112)\log\frac{eK^d}{k_{\mathrm{eff}}}$$ 
and
$$6796\log\frac1\delta\,L_{D_n}\le13592\log(K^d)\log\frac1\delta.$$

On the other hand, in the case $k_{\mathrm{eff}}<21\log\frac1\delta$,  on $A\cap B$, of probability at least $1-13\delta$, using $k_{\mathrm{eff}}^o(\data, n/112)\le k_{\mathrm{eff}}<21\log\frac1\delta$ and $L_{D_n}\le2\log K^d,$ we have
\[
336\,k_{\mathrm{eff}}^o(\data, n/112)L_{D_n}\le336\cdot21\cdot2\,\log(K^d)\log\frac1\delta=14112\log(K^d)\log\frac1\delta
\]
and
$$6796\log\frac1\delta\,L_{D_n}\le13592\log(K^d)\log\frac1\delta.$$

In both cases, inserting into \eqref{eq:cor_start} and using $56\log\frac1\delta\le56\log(K^d)\log\frac1\delta$,
with probability at least $1-14\delta$,
\begin{align*}
\kll(\data|\hat\mu_0) &\le
\frac{63114\,k_{\mathrm{eff}}+743\,k_{\mathrm{eff}}^o(\data, n/112)\log\frac{eK^d}{k_{\mathrm{eff}}}+27760\log(K^d)\log\frac1\delta}{n}\\
& \qquad +\frac{106000\log\frac1\delta\log\log\frac1\delta}{n},
\end{align*}
which implies the claim.

We focus now on the sparse case. Let $\data\in\mathcal P_l(\mathcal X_0)$ with $2\le l\le\min\{n,\lfloor K^d/55\rfloor\}$ and
$\delta\ge e^{-K^d}$. We note that $\Lambda_l\ge1$, and we are going to use
$k_{\mathrm{eff}}^o(\data,m)\le k\le l$, the monotonicity of $u\mapsto u\log(eK^d/u)$ on $[1,\min\{n,K^d\}],$ and that
$\log\log\frac1\delta\le\log(K^d)$.

By \eqref{eq:elementary}, $k_{\mathrm{eff}}^o(\data, m)\le k_{\mathrm{eff}}\le l\le K^d$ for
$m\le n$, hence $k_{\mathrm{eff}}\le l\Lambda_l$ and, thanks to monotonicity,
\[
k_{\mathrm{eff}}^o(\data, n/2)\log\frac{eK^d}{k_{\mathrm{eff}}^o(\data, n/2)}\le l\Lambda_l
\]
and
\[
k_{\mathrm{eff}}^o(\data, n/112)\log\frac{eK^d}{k_{\mathrm{eff}}}\le k_{\mathrm{eff}}\log\frac{eK^d}{k_{\mathrm{eff}}}\le l\Lambda_l.
\]

Therefore (i) gives 
$$\mathbb E_S\kll(\data|\hat\mu_0)\le\frac{(2.4+2)\,l\Lambda_l}{n}=\frac{4.4\,l\Lambda_l}{n},$$
and (ii) gives, with probability at least $1-14\delta$ since $\delta\ge e^{-K^d}$ implies
$\log\log\frac1\delta\le\log K^d$,
\[
\kll(\data|\hat\mu_0)\le130000\,\frac{2\,l\Lambda_l+2\log(K^d)\log\frac1\delta}{n}=260000\,\frac{\Gamma_\delta}{n}.
\]
We observe that neither $\hat\mu_0^{(D_n)}$ nor the events $A\cap B\cap E$, $A\cap B$ depend on $l$, so both bounds hold
simultaneously for every $l$ with $\data\in\mathcal P_l(\mathcal X_0)$.

To finish, we transfer this bounds to the sampler. For every fixed sample $S$, $\hat\mu_0^{(D_n)}$ is a strictly positive law on
$\mathcal X_0$ and $\hat s^{(D_n)}$ is its plug-in score. Thus,  Lemma~\ref{lem:sampler_mask}, applied conditionally
on $S$, gives $$\kll(\data|\hat\nu)=\kll(\data|\hat\mu_0)$$ for the masked kernel and
$$\kll(\data|\hat\nu)\le\kll(\data|\hat\mu_0)+\frac1n$$ for the uniform kernel, with for the latter the additional hypothesis $\bar\sigma_T\ge\log(dKn)$.
Since $\frac1n\le\frac{l\Lambda_l}{n}\le\frac{\Gamma_\delta}{n}$, $\hat\nu$ satisfies the upper bounds of
Corollary~\ref{prop:minimax_mask_prob} with constants $5.4$ and $260001$, with the same event for every $l$. This concludes.
\end{proof}

Above we proved that we reach minimax rates with $\gamma=D_n.$ For completeness, we prove here also that the same rates are reachable if $\gamma$ is inside of an interval dependent of the sparsity parameter $l$. Since this parameter is unknown, the previous corollary gave us a way to implement a valid choice of $\gamma$ despite the fact that we don't know $l.$ Moreover, let us notice that $D_n$ is in general close tu $\keff,$ not $l,$ so it may be outside of the interval. This means Corollary~\ref{cor:adaptive_gamma} and Corollary~\ref{prop:minimax_mask_prob} prove the same rates for potentially different values of $\gamma.$ 
\begin{proof}[Proof of Corollary~\ref{prop:minimax_mask_prob}]

Since the proof is the same for both kernels, we are going to explain it for both masked and uniform diffusion. Throughout $\mudata\in\cP_l(\cX_0)$, so $\koeff(\mudata,m)\le\keff\le l$ for every $m\le n$. Also, since $l\le K^d/55$, it holds $\Lambda_l\ge\log(55e)>5$. Furthermore, $K^d\ge110$ implies $\Lambda_l\le1+\log K^d\le2\log K^d$; and $1\le l\le l\Lambda_l\le\Gamma_\delta$.

We fix a sample $S$ and we call the its potentially associated density $\hat\mu_0^{(\gamma(S))}$, which is a strictly positive law on $\cX_0,$ and $\hat s$ is its score. Lemma~\ref{lem:sampler_mask} gives  $$\KL(\mudata|\hat\nu)=\KL(\mudata|\hat\mu_0)$$
for the masked case, while Lemma~\ref{lem:sampler_uniform}, under the hypothesis $\bar\sigma_T\ge\log(dKn)$, gives 
$$\KL(\mudata|\hat\nu)\le\KL(\mudata|\hat\mu_0)+\frac1n$$
for the uniform kernel. As $\frac1n\le \frac{l\Lambda_l}{n}\le\frac{\Gamma_\delta}{n}$, it suffices to prove the upper bounds for $\hat\mu_0$ with constant $C-1$.

Let $\gamma_0:=\max\{1,c_0l\}$ be a deterministic element of $[1,\min\{n,K^d\}]$ since $c_0\le1$ and $l\le\min\{n,K^d\}$; by assumption $\gamma\ge\gamma_0$ surely. If $c_0l\ge1$ then $L_{\gamma_0}=\log\frac{eK^d}{c_0l}=\Lambda_l+\log\frac1{c_0}$; on the other side, if $c_0l<1$ then $\gamma_0=1$ and $L_{\gamma_0}=\Lambda_l+\log l\le\Lambda_l+\log\frac1{c_0}$. Since $u\mapsto L_u$ is nonincreasing,
\[
L_\gamma\le L_{\gamma_0}\le\Lambda_l+\log\tfrac1{c_0}\le c_1\Lambda_l,\qquad c_1:=1+\frac{\log(1/c_0)}{\log(55e)},
\]
where we used $\Lambda_l\ge\log(55e)$.

We apply Lemma~\ref{lem:upper_bound_expectation_mask}, with $n\ge110\ge8$ and this $\gamma_0$: $\Prob_S(\gamma<\gamma_0)=0$, $\E_S[\gamma]\le C_0l\Lambda_l$, $\keff\le l$, $\koeff(\mudata,n/8)\le l$. With the explicit constants of the proof of Lemma~\ref{lem:upper_bound_expectation_mask},
\[
\E_S\KL(\mudata|\hat\mu_0)\le\frac{12\,l+7C_0\,l\Lambda_l+(23+8L_{\gamma_0})\,l}{n}\le\frac{(12+7C_0+31c_1)\,l\Lambda_l}{n}.
\]
Thanks to the results above, this gives the first item with $C_1:=13+7C_0+31c_1$.

We now work on the upper bound of the second item. Let $A$ be the event of Lemma~\ref{lem:upper_bound_probability_mask}, which is applicable since $n\ge13$, $K^d\ge3$, $\delta\in(e^{-n/6},e^{-2})$, we have  $\Prob_S(A)\ge1-12\delta$, on which the bound holds simultaneously for every $\gamma\in[1,\min\{n,K^d\}]$, hence for the random $\gamma$. On $A$, with the explicit constants of the proof of Lemma~\ref{lem:upper_bound_probability_mask}, $\keff,\koeff(\mudata,\frac{n}{112})\le l\le l\Lambda_l, \,\gamma\le C_0\Gamma_\delta,\, 
L_\gamma\le c_1\Lambda_l\le2c_1\log K^d, \,\log(1/\delta)\le\log(K^d)\log(1/\delta),\, \log\log(1/\delta)\le\log K^d,$
since $\delta\ge e^{-K^d},$ it holds
\begin{align*}
    \KL(\mudata|\hat\mu_0)&\le\frac{(63100+336c_1)\,l\Lambda_l+7C_0\Gamma_\delta+(106042+13592c_1)\log(K^d)\log\frac1\delta}{n}\\
    &\le\frac{(169150+7C_0+13928c_1)\,\Gamma_\delta}{n}.
\end{align*}
With the results above, this gives the upper bound of the second item with $C_2:=169151+7C_0+13928c_1$. Set $C:=\max\{C_1,C_2\}$, which depends only on $(c_0,C_0)$.

For completeness, we also prove that the upper bounds of the first and second item match the minimax lower bounds up to universal constants.
Let $S\mapsto\nu_S\in\cP(\cX)$ be measurable. Define $\tilde\nu_S:=\nu_S(\cdot\mid\cX_0)$ if $\nu_S(\cX_0)>0$ and $\tilde\nu_S$ a fixed element of $\cP(\cX_0)$ otherwise: for the uniform kernel $\cX=\cX_0$ and $\tilde\nu_S=\nu_S$ in particular. 
Now, for every $\mu\in\cP(\cX_0)$, 
$$\KL(\mu|\nu_S)\ge\KL(\mu|\tilde\nu_S).$$ 
In fact, if $\nu_S(\cX_0)=0$ the left side is $+\infty$, otherwise $\tilde\nu_S\ge\nu_S$ on $\cX_0$. The map $S\mapsto\tilde\nu_S$ is an estimator $\cX_0^n\to\cP(\cX_0)$, so Corollary~\ref{cor:mourtada_lower_bound} applies with $|\cX|=K^d$; its hypotheses $n,K^d\ge110$, $l\in\{2,\dots,\min(n,\lfloor K^d/55\rfloor)\}$ are the same as us, and $\Gamma_\delta$ is exactly the numerator in \eqref{eq:minimax-sparse-lower}.

In the first item case, let now take $\delta_0:=e^{-3}\in(e^{-n},e^{-2})$, together with $n\ge110$. Corollary~\ref{cor:mourtada_lower_bound} gives $\mudata\in\cP_l(\cX_0)$ with $$\Prob_S\big(\KL(\mudata|\tilde\nu_S)\ge\frac{l\Lambda_l+3\log K^d}{320n}\big)\ge e^{-3},$$ hence, since $Z\ge a\mathbf 1\{Z\ge a\}$ for $Z\ge0$, we have
\[
\E_S\KL(\mudata|\nu_S)\ge\E_S\KL(\mudata|\tilde\nu_S)\ge\frac{e^{-3}}{320}\,\frac{l\Lambda_l}{n}.
\]
On the other hand, for the second item, for $\delta\in(e^{-\min\{n/6,K^d\}},e^{-2})\subseteq(e^{-n},e^{-2})$, Corollary~\ref{cor:mourtada_lower_bound} gives $\mudata\in\cP_l(\cX_0)$ with 
$$\Prob_S\big(\KL(\mudata|\tilde\nu_S)\ge\frac{\Gamma_\delta}{320n}\big)\ge\delta,$$
and $\KL(\mudata|\nu_S)\ge\KL(\mudata|\tilde\nu_S)$.

Both lower bounds hold with the universal constant $c:=\frac{1}{320e^3}$.
\end{proof}

\section{Proofs of Section~\ref{sec:statistical_rates:general}}\label{appendix:sec_entropy}

We first state precisely the mixed loss of Section~\ref{sec:entropy} for a general kernel. Let $\mu^U_t:=P_{0,t}\Unif(\cX_0)$ be the forward marginal starting from uniform data, let $s^\star_U$ be its score, defined as in \eqref{definition_true_score} and $\mathcal L^U_t$ the score-entropy loss. For $\gamma>0$, the mixed loss interpolates between the empirical loss \eqref{eq:loss_empirical} and $\mathcal L^U_t$:
\begin{equation}\label{eq:mixed_loss}
\hat{\mathcal L}^M_t(s):=\frac{n}{n+\gamma}\,\hat{\mathcal L}_t(s)+\frac{\gamma}{n+\gamma}\,\mathcal L^U_t(s).
\end{equation}
Since the score-entropy loss is linear in the clean distribution and $\frac{n}{n+\gamma}\mu^{(n)}_0+\frac{\gamma}{n+\gamma}\Unif(\cX_0)=\hat\mu_0^{(\gamma)}$, the mixed loss is, up to a constant, the score-entropy loss of $\hat\mu_0^{(\gamma)}$. The following lemma makes this precise.

\begin{lemma}
Let $\gamma>0$, $\hat\mu_t:=P_{0,t}\muhat$ and $\hat s(x,y,t):=\frac{\hat\mu_t(x)}{\hat\mu_t(y)}$ as in \eqref{eq:hat_s_uniform}. For every $t\in(0,T]$ and every score $s$ positive where $Q_t(y,x)>0$,
\[
\hat{\mathcal L}^M_t(s)=\sum_{y\in\cX}\hat\mu_t(y)\sum_{x\ne y}Q_t(y,x)\,D\big(\hat s(x,y,t),s(x,y,t)\big)+c_t,
\]
where $c_t\in\mathbb R$ does not depend on $s$. In particular, $\hat s$ minimizes $s\mapsto\int_0^T\hat{\mathcal L}^M_t(s)\dd t$, and a score $s$ is another minimizer if and only if $s(x,y,t)=\hat s(x,y,t)$ for almost every $t\in(0,T]$ and every $x\neq y$ with $Q_t(y,x)>0$.
\end{lemma}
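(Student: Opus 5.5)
The proof rests on the linearity of the score-entropy loss in the clean distribution, followed by completing the Bregman divergence $D$. I would carry it out in three steps.

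\textbf{Step 1: rewrite both losses as linear functionals.} Fix $t\in(0,T]$. Expanding \eqref{eq:loss_empirical} exactly as in the proof of Lemma~\ref{lem:loss_contexts}, averaging the kernels over the sample gives
\[
\hat{\mathcal L}_t(s)=\sum_{y}\sum_{x\ne y}Q_t(y,x)\big[\mu^{(n)}_t(y)\,s(x,y,t)-\mu^{(n)}_t(x)\log s(x,y,t)\big].
\]
For the uniform loss, I would use $\mu^U_t(y)s^\star_U(x,y,t)=\mu^U_t(x)$ whenever $Q_t(y,x)>0$. This identity was justified in the proof of Lemma~\ref{lem:transfer}, including the case of a zero denominator. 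With it, $\mathcal L^U_t(s)$ has the same form with $\mu^U_t$ in place of $\mu^{(n)}_t$, plus the term $\sum\mu^U_t(y)Q_t(y,x)\phi(s^\star_U)$, which does not depend on $s$.

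\textbf{Step 2: combine and complete the divergence.} Taking the convex combination and using $\hat\mu_t=\frac{n}{n+\gamma}\mu^{(n)}_t+\frac{\gamma}{n+\gamma}\mu^U_t$ (linearity of $P_{0,t}$), I get
\[
\hat{\mathcal L}^M_t(s)=\sum_y\sum_{x\ne y}Q_t(y,x)\big[\hat\mu_t(y)s-\hat\mu_t(x)\log s\big]+\text{const}.
\]
The estimator $\hat\mu_0$ is strictly positive, so by Lemma~\ref{lem:support} $\hat\mu_t>0$ on the relevant states: all of $\mathcal X$ in the masked case, and $\mathcal X_0$ in the uniform case. Hence $\hat\mu_t(x)=\hat\mu_t(y)\hat s(x,y,t)$. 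Each summand then equals $\hat\mu_t(y)\big[D(\hat s,s)-\phi(\hat s)\big]$, since
\[
D(a,b)=a\log a-a-a\log b+b.
\]
The $\phi(\hat s)$ terms do not depend on $s$. They are finite because $\hat s$ is finite at fixed $t>0$, and they are absorbed into $c_t$, which gives the identity.

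\textbf{Step 3: minimization.} Integrate in $t$. Since $D\ge0$ with equality iff the two arguments agree, and $\hat\mu_t(y)Q_t(y,x)>0$ for $t>0$ on the relevant pairs, the integral is minimized by $\hat s$. A competitor $s$ attains the same value iff the nonnegative integrand vanishes for a.e.\ $t$, that is, $s=\hat s$ for a.e.\ $t$ and every pair with $Q_t(y,x)>0$.

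The main obstacle is integrability of the constant $\int_0^T c_t\,dt$. If it were infinite, the minimization statement would read as ``$\infty-\infty$''. I would avoid this by stating the minimization in terms of the difference
\[
\int_0^T\big(\hat{\mathcal L}^M_t(s)-\hat{\mathcal L}^M_t(\hat s)\big)\,dt=\int_0^T\sum_y\hat\mu_t(y)\sum_{x\ne y}Q_t(y,x)\,D(\hat s,s)\,dt\ge0,
\]
which is well defined in $[0,\infty]$. Alternatively, I would bound $c_t$ near $t=0$ as in Lemma~\ref{lem:loss_contexts}: there $\log\kappa_t$ has only a logarithmic singularity after the substitution $u=\bar\sigma_t$, and that singularity is integrable.
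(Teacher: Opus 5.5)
Your proposal is correct and follows the paper's proof essentially step for step. You write both $\hat{\mathcal L}_t$ and $\mathcal L^U_t$ as linear functionals of the forward marginal, combine them into the corresponding expression for $\hat\mu_t$, and use $\hat\mu_t(x)=\hat\mu_t(y)\hat s$ together with $s-\hat s\log s=D(\hat s,s)-\phi(\hat s)$. Your extra care in the minimization step, phrasing it through the nonnegative difference $\hat{\mathcal L}^M_t(s)-\hat{\mathcal L}^M_t(\hat s)$ or checking integrability of $c_t$, makes explicit a point the paper leaves implicit.
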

\begin{proof}
Fix $t\in(0,T]$ and a score $s$ positive where $Q_t(y,x)>0$. Since $Q_t(y,x)>0$ and $\rho_t(y)=0$ it must hold that $\rho_t(x)=0$ also. Moreover, since every state of $\cX$ can be reached from $\cX_0$, it follows that $\muhat>0$ on $\cX_0$ gives $\hat\mu_t>0$ on $\cX$.

By \eqref{eq:loss_empirical}, averaging over the sample with $\mu^{(n)}_t=\frac1n\sum_kP_{0,t}(\cdot,X^{(k)}_0)$,
\begin{equation}\label{eq:mixed_emp}
\hat{\mathcal L}_t(s)=\sum_{y}\sum_{x\neq y}Q_t(y,x)\big(\mu^{(n)}_t(y)\,s-\mu^{(n)}_t(x)\log s\big).
\end{equation}
Similarly, we have $\mu^U_t(y)s^\star_U(x,y,t)=\mu^U_t(x)$ whenever $Q_t(y,x)>0$, so by \eqref{definition_score_entropy_loss},
\begin{gather}\label{eq:mixed_unif}
\mathcal L^U_t(s)=\sum_{y}\sum_{x\neq y}Q_t(y,x)\big(\mu^U_t(y)\,s-\mu^U_t(x)\log s\big)+c^U_t,\\ c^U_t:=\sum_y\mu^U_t(y)\sum_{x\neq y}Q_t(y,x)\,\phi\big(s^\star_U(x,y,t)\big).\nonumber
\end{gather}
Both expressions are linear in the measure, and $\frac{n}{n+\gamma}\mu^{(n)}_t+\frac{\gamma}{n+\gamma}\mu^U_t=P_{0,t}\muhat=\hat\mu_t$ by linearity of $P_{0,t}$. Hence, by \eqref{eq:mixed_loss},
\begin{align*}
    \hat{\mathcal L}^M_t(s)&=\sum_{y}\sum_{x\neq y}Q_t(y,x)\big(\hat\mu_t(y)\,s-\hat\mu_t(x)\log s\big)+\tfrac{\gamma}{n+\gamma}\,c^U_t\\
    &=\sum_y\hat\mu_t(y)\sum_{x\neq y}Q_t(y,x)\big(s-\hat s\log s\big)+\tfrac{\gamma}{n+\gamma}\,c^U_t,
\end{align*}
where we used $\hat\mu_t(x)=\hat\mu_t(y)\,\hat s$. Since $s-\hat s\log s=D(\hat s,s)-\phi(\hat s)$ by definition of $D$, this is the claimed identity with
\[
c_t:=\tfrac{\gamma}{n+\gamma}\,c^U_t-\sum_y\hat\mu_t(y)\sum_{x\neq y}Q_t(y,x)\,\phi(\hat s).
\]
\end{proof}

\begin{proof}[Proof of Lemma~\ref{lem:sampler_uniform}]
To begin this proof, we recall that the matrix $Q_0$ is in this case symmetric, so in particular $P_{0,t}$ is doubly stochastic, in the sense that both columns and rows sum to 1. Furthermore, since the change of one token $b_0$ to $b$ has probability $\frac{1-\alpha_t}{K}+\alpha_t\mathbf{1}_{b=b_0}\leq \frac1K+\alpha_t,$ it follows that 
$$P_{0,t}(x, x_0)\leq \frac{1}{K^d}(1+K\alpha_t)^d$$
for all $x,x_0\in \mathcal{X}_0.$
Using now this inequality, we have that $$\mu_T(x)=\sum_{x_0} P_{0,T}(x, x_0)\data(x_0)\leq \frac{1}{K^d}(1+K\alpha_T)^d,$$ so
\begin{equation}
    \kll(\mu_T\mid \Unif(\mathcal{X}_0))=\sum_x \mu_T(x)\log(K^d\mu_T(x))\leq d \log(1+K\alpha_T)\leq dK\alpha_T\leq \frac1n.
\end{equation}
Notice that above we used that $\alpha_T=e^{-\bar\sigma_T}\leq (dKn)^{-1}.$

By Lemma~\ref{lem:A}, $M_\gamma^{-1}\le\hat s\le M_\gamma$ on $\mathcal X_0^2\times[0,T]$ with $M=\frac{K^d(n+\gamma)}{\gamma}$, and $\hat s$ is continuous in $t$. Hence $|\hat Q_u(x,y)|\le M_\gamma\sigma_{T-u}\max|Q_0|$, which is integrable on $[0,T]$ because we assumed $\bar\sigma_T<\infty$. So, we have a unique absolutely continuous solution $(\nu_u)_{u\in[0,T]}$ on the closed interval, $\hat\nu=\nu_T$, and $\hat\nu_{T-T_0}\to\nu_T$ as $T_0\downarrow0$ by continuity. For every $T_0\in(0,T)$ the hypotheses of Lemma~\ref{lem:transfer} hold with $\mu_\infty=\mathrm{Unif}(\mathcal X_0)>0$ and $s=\hat s$, and Lemma~\ref{lem:B} gives 
$$\int_{T_0}^T\mathcal{L}_t(\hat s)\,\mathrm dt\le\mathrm{KL}(\mu_{\mathrm{data}}|\hat\mu_0^{(\gamma)}).$$
Together with the bound $\mathrm{KL}(\mu_T|\mathrm{Unif}(\mathcal X_0))\le\frac1n$ above, Lemma~\ref{lem:transfer} yields
\begin{equation}\label{eq:T0bound}
\mathrm{KL}(\mu_{T_0}\,|\,\nu_{T-T_0})\le\mathrm{KL}(\mu_{\mathrm{data}}\,|\,\hat\mu_0^{(\gamma)})+\frac1n .
\end{equation}
Finally, $\mu_{T_0}=e^{\bar\sigma_{T_0}Q_0}\mu_{\mathrm{data}}\to\mu_{\mathrm{data}}$ as $T_0\downarrow0$ since $\bar\sigma_{T_0}\to0$, and on a finite set $(\mu,\nu)\mapsto\mathrm{KL}(\mu|\nu)$ is jointly lower semicontinuous, since it is a supremum of continuous functions by the Donsker--Varadhan formula $\mathrm{KL}(\mu|\nu)=\sup_f\{\sum_xf(x)\mu(x)-\log\sum_xe^{f(x)}\nu(x)\}$. Hence
\[
\mathrm{KL}(\mu_{\mathrm{data}}\,|\,\hat\nu)\le\liminf_{T_0\downarrow0}\mathrm{KL}(\mu_{T_0}\,|\,\nu_{T-T_0})\le\mathrm{KL}(\mu_{\mathrm{data}}\,|\,\hat\mu_0^{(\gamma)})+\frac1n .
\]

\end{proof}

\begin{proof}[Proof of Theorem~\ref{thm:uniform}]
We start fixing a sample $S$; $\gamma(S)\in[1,\min\{n,K^d\}]$, $\hat\mu_0^{(\gamma(S))}>0$ on $\cX_0$ and $\hat s$ its score.
Thanks to Lemma~\ref{lem:sampler_uniform}, we know that 
$$\KL(\mudata|\hat\nu)\le\KL(\mudata|\hat\mu_0^{(\gamma(S))})+\frac1n.$$Taking expectations and using Lemma~\ref{lem:upper_bound_expectation_mask} gives the first item. Notice that $C$ of Theorem~\ref{thm:upper_bounds_mask} is replaced by $C+1$ here, because $\frac1n\le\frac{\keff}{n}$, thanks to $\keff\ge\keff(\mudata,1)=1$ for $n\ge1$. Intersecting with the event $A$ of Lemma~\ref{lem:upper_bound_probability_mask}, which does not depend on $\gamma$, gives the second item in the same way. Corollaries~\ref{cor:sparse_mask},\ref{prop:minimax_mask_prob},\ref{cor:adaptive_gamma} then transfer perfectly, as recorded in their proofs.
\end{proof}

%% file: sections/appendix_experiments.tex
\section{Experiment details}\label{app:experiments}

All experiments use the masked kernel with $\alpha_t=e^{-t}$ and $\kappa_t=\alpha_t/(1-\alpha_t)$. Seeds are fixed throughout.

\paragraph{Targets for label smoothing.} We take $K=4$ and $d=8$, so $K^d=65{,}536$. A $0/1$ matrix $A\in\{0,1\}^{K\times K}$ specifies the allowed token transitions. The first token is drawn from the stationary law of the chain, and each later token from the row-normalized weights, which are $\mathrm{Dirichlet}(2)$ on the allowed entries. The support is therefore exactly the set of length-$d$ paths through $A$, and $l=\mathbf 1^\top A^{d-1}\mathbf 1$. We enumerate all $2^{16}$ matrices and keep the $25{,}696$ that are strongly connected and give every state at least one successor. For each target $l\in\{16,64,256,1024,4096\}$ and each seed, we draw $A$ uniformly among those with $l$ within $\pm5\%$ of the target. We use $n\in\{64,256,1024\}$ and 5 seeds, each seed fixes $A$, the weights and the sample, and the same sample is used for every $\gamma$.

\paragraph{Label-smoothed estimator.} By Lemma~\ref{lem:sampler_mask}, the masked sampler outputs exactly $\hat\mu_0^{(\gamma)}$. We therefore compute $\kll(\data\mid\hat\mu_0^{(\gamma)})$ exactly by summing over $\mathcal V^d$, with no simulation.

\paragraph{Clipped empirical score.} For the masked kernel, the estimator of \citet[Eq.~(11)]{zhang2026} reads $\hat s_t(x,y)=\mathrm{clip}(\kappa_t\psi_c^{(n)}(a),\varepsilon,\kappa_t)$, and we set $\hat s_t=\varepsilon$ on unseen contexts ($n_c=0$). Following their Theorem~3.8, we use early stopping at $t_0=1/n$, terminal time $T=\log n$, and $\varepsilon=c_0\log n/n$, with $c_0\in\{0.1,1,10\}$ chosen per $(l,n)$ using the true KL.

Since the floor $\varepsilon/\kappa_t$ depends on $t$, this score is improper, and the random-order sampler is not exact for it. We instead compute its output law by integrating the reverse Kolmogorov equation over all $(K+1)^d=390{,}625$ partially masked states. We use RK4 on 400 log-spaced times from $T$ to $t_0$. Positions still masked at $t_0$ are unmasked one at a time in uniformly random order, with each token drawn with probability $\propto\hat s_{t_0}(\cdot,y)$. As a check, running the same solver with the proper score $\kappa_t\hat\psi_c$ reproduces $\hat\mu_0^{(\gamma)}$ with a maximum relative error of $7.2\times10^{-6}$.

\paragraph{Training.} We use the sparse Markov chain of \citet[App.~E]{zhang2026}: $K=100$, $d=12$, $20$ significant successors per state, $\epsilon_{\mathrm{bg}}=0.01$ and $p_{\min}=10^{-3}$. Held-out sequences are drawn fresh from the same chain. All training sequences are distinct, so $D_n=n$.

The model is their transformer (4 layers, width 128, 4 heads, $\approx0.83$M parameters). We train it with AdamW (learning rate $5\times10^{-4}$, $\beta=(0.9,0.999)$, weight decay $10^{-3}$, batch size 128). To train on $\hat\mu_0^{(\gamma)}$, each sequence is replaced by a uniform element of $\mathcal V^d$ with probability $p=\gamma/(n+\gamma)$, for $\gamma\in\{0,64,256,1024,4096\}$ and $n\in\{1024,2896,8192\}$. Each arm runs for $\lceil 6000/(1-p)\rceil$ iterations, so that all arms see the same expected number of real sequences. We evaluate the held-out NELBO at 40 log-spaced iterations starting at 10. Each of the 5 seeds fixes both the sample and the initialization.